\newtheorem{theorem}{Theorem}
\newtheorem{lemma}{Lemma}
\begin{document}

\title{Bi-Level Unsupervised Feature Selection}

\author{Jingjing Liu,  Xiansen Ju,  Xianchao Xiu, and Wanquan Liu, \IEEEmembership{Senior Member,~IEEE}

\thanks{This work was supported in part by the National Natural Science Foundation of China under Grant 62204044 and Grant 12371306,  and in part by the State Key Laboratory of Integrated Chips and Systems under Grant SKLICS-K202302. (\textit{Corresponding author: Xianchao Xiu}.)}
\thanks{Jingjing Liu and Xiansen Ju are with the Shanghai Key Laboratory of Automobile Intelligent Network Interaction Chip and System,  School of Microelectronics,  Shanghai University, Shanghai 200444,  China (e-mail: jjliu@shu.edu.cn, jxs@shu.edu.cn)}
\thanks{Xianchao Xiu is with the School of Mechatronic Engineering and Automation,  Shanghai University,  Shanghai 200444,  China (e-mail: xcxiu@shu.edu.cn).}
\thanks{Wanquan Liu is with the School of Intelligent Systems Engineering, Sun Yat-sen University, Guangzhou 510275, China (e-mail: liuwq63@mail.sysu.edu.cn).}
}

\maketitle

\begin{abstract}
Unsupervised feature selection (UFS) is an important task in data engineering. However, most UFS methods construct models from a single perspective and often fail to simultaneously evaluate feature importance and preserve their inherent data structure, thus limiting their performance. To address this challenge, we propose a novel bi-level unsupervised feature selection (BLUFS) method, including a clustering level and a feature level. Specifically, at the clustering level, spectral clustering is used to generate pseudo-labels for representing the data structure, while a continuous linear regression model is developed to learn the projection matrix. At the feature level, the $\ell_{2,0}$-norm constraint is imposed on the projection matrix for more effectively selecting features. To the best of our knowledge, this is the first work to combine a bi-level framework with the $\ell_{2,0}$-norm. To solve the proposed bi-level model, we design an efficient proximal alternating minimization (PAM) algorithm, whose subproblems either have explicit solutions or can be computed by fast solvers. Furthermore, we establish the convergence result and computational complexity. Finally, extensive experiments on two synthetic datasets and eight real datasets demonstrate the superiority of BLUFS in clustering and classification tasks.
\end{abstract}

\begin{IEEEkeywords}
  Unsupervised feature selection, bi-level, clustering, $\ell_{2,0}$-norm, proximal alternating minimization
\end{IEEEkeywords}

\section{Introduction}\label{Introduction}

Recently, the rapid development of information technology has led to a significant increase in the number of instances and feature dimensions in various application datasets, especially in the fields of image processing and text classification, see \cite{jordan2015machine,bolon2020feature,hancer2020survey,li2024exploring,zheng2025structured}.
Unsupervised feature selection (UFS), as an effective dimensionality reduction technique, has attracted much attention due to its independence from label information and good interpretability \cite{li2015unsupervised,9051653,yang2025tensor}.
Nowadays, UFS has been successfully served as an effective preprocessing step for downstream tasks covering clustering and classification \cite{solorio2020review}.

In high-dimensional data, UFS aims to identify a subset of features that retains the most discriminative information while preserving the intrinsic data structure \cite{mitra2002unsupervised,tang2023unsupervised}. Generally speaking, existing UFS methods can be categorized into three types: filtering-based, wrapper-based, and embedded-based. The representative UFS methods include Laplacian score (LapScore) \cite{he2005laplacian}, multi-cluster feature selection (MCFS) \cite{cai2010unsupervised}, unsupervised discriminative feature selection (UDFS) \cite{yang2011}, structured optimal graph feature selection (SOGFS) \cite{nie2016unsupervised}, and robust neighborhood embedding (RNE) \cite{liu2020robust}. Notably, embedded-based methods integrate structure learning and feature selection within a unified framework, thereby combining the strengths of both filtering- and wrapper-based methods.

\begin{figure*}[t]
  \makeatletter
  \renewcommand{\@thesubfigure}{\hskip\subfiglabelskip}
  \makeatother
  \centering
  \includegraphics[width=\textwidth]{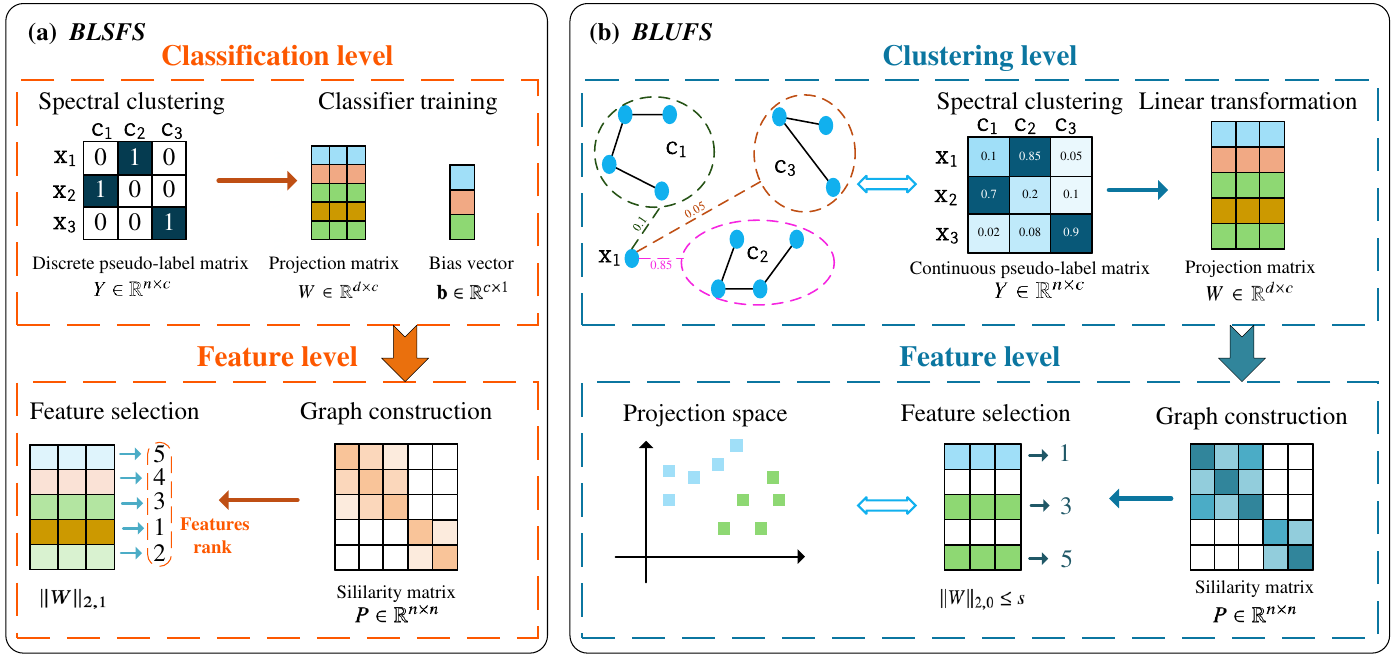}
  \caption{Comparisons of BLSFS \cite{hu2024bi} and our proposed BLUFS. BLSFS is composed of classification and  feature levels. In the classification level, discrete pseudo-labels are obtained through spectral clustering, and then a projection matrix is learned via classifier training. In the feature level, a similarity matrix is obtained using an adaptive graph learning model. Finally, by applying the \(\ell_{2,1}\)-norm to \(W\), features with higher rankings are obtained.  BLUFS is composed of  clustering and feature levels. Unlike BLSFS, in the clustering level, continuous pseudo-labels representing the clustering structure are obtained through spectral clustering, and a projection matrix is obtained via a linear transformation model. In the feature level, after obtaining the similarity matrix, the number of features is strictly limited and sufficient sparsity is ensured by imposing the \(\ell_{2,0}\)-norm constraint on \(W\).}
  \label{flow chat}
\end{figure*}

Principal component analysis (PCA) and its variants are widely recognized in the UFS domain for their simplicity and computational efficiency \cite{5640135,greenacre2022principal}. Recent studies have focused on enhancing PCA-based methods by incorporating sparsity constraints, which significantly improves their feature selection capability \cite{zou2018selective,drikvandi2023sparse}.
For example, Yang et al. \cite{yang2011} proposed a sparse PCA model by introducing the $\ell_{2,1}$-norm regularization \cite{nie2010efficient}, which promotes group sparsity in the projection matrix. 
Li et al. \cite{9580680} introduced the $\ell_{2,p}$-norm with $p\in(0,1)$ \cite{wang2017ell} and reformulated PCA as a reconstruction error minimization problem, which is denoted as SPCAFS.
To improve the interpretability, Nie et al. \cite{nie2022learning} proposed FSPCA, a feature sparse PCA model that employs the $\ell_{2,0}$-norm \cite{cai2013exact} to enforce row sparsity in the projection matrix, enabling consistent feature selection across multiple components. 
In a related work, Nie et al. \cite{10409536} designed a coordinate descent framework with provable convergence guarantees.
Zhou et al. \cite{ZHOU2025107512} further introduced an evolutionary sparsity-based method to enhance the flexibility and performance.
In fact, the $\ell_{2,0}$-norm has been applied in many data engineering fields.
For example, Chen et al. \cite{chen2021l2} constructed $\ell_{2,0}$-norm-based multivariate generalized linear models; Zhang et al. \cite{zhang2023structured} introduced the $\ell_{2,0}$-norm and non-negative matrix factorization for fault detection; Zhu et al. \cite{zhu2025joint} applied the $\ell_{2,0}$-norm to subspace clustering; Huang et al. \cite{huang2025inhomogeneous} proposed the $\ell_{2,0}$-norm penalized graph trend filtering model for estimating piecewise smooth signals.
In fact, compared with the $\ell_{2,1}$-norm and the $\ell_{2,p}$-norm with $p \in (0,1)$, the $\ell_{2,0}$-norm directly enforces group sparsity without requiring intricate parameter tuning to balance sparsity and accuracy \cite{WANG2023109472,WANG2023224}. This property allows it to preserve key data features and improve feature-level interpretability \cite{xiu2022efficient}.

The aforementioned methods conduct feature selection solely at the feature level, without considering downstream tasks such as clustering or classification. However, in many real-world scenarios, feature selection is a hierarchical decision-making process \cite{liu2021investigating}. Therefore, adopting a hierarchical structure to model dependencies between multiple tasks is particularly important for feature selection. Recently, researchers have begun to explore feature selection by integrating information from different levels. Zhou et al. \cite{zhou2023bi} attempted to perform feature selection by integrating multiple feature selection methods at different levels, which is called is BLSEF. At the clustering level, multiple clustering results are generated using various algorithms and then aggregated via self-paced learning to form a consensus that subsequently guides feature selection at the feature level. 
At the feature level, multiple base feature selection methods are integrated to learn a consensus feature score. Hu et al. \cite{hu2024bi} tried to perform feature selection at both the classification level and the feature level, so that the selected features can be suitable for different downstream tasks. Specifically, spectral clustering is employed at the classification level to generate pseudo-labels that guide feature selection. Meanwhile, adaptive graph learning is applied at the feature level, followed by a ranking procedure to select the most informative features. This method is abbreviated as BLSFS.
It is worth noting that although the above bi-level UFS methods show good performance, there exist two shortcomings. On the one hand, the pseudo-labels in spectral clustering are too strict, and on the other hand, the \(\ell_{2,0}\)-norm has not yet been used to directly achieve feature sparsity.

Inspired by the aforementioned observations, we propose a bi-level unsupervised feature selection method, called BLUFS, to distinguish it from existing methods. 
In our proposed BLUFS, spectral clustering is employed to generate a more flexible scaled cluster indicator matrix. 
Unlike BLSFS, the constraints are relaxed to construct a continuous matrix representation, thus avoiding the NP-hard challenges \cite{li2024unsupervised}. 
This enables a more flexible representation of the similarities between data points and their relationship with clustering information, enhancing the accuracy and robustness of feature selection. 
Additionally, a projection matrix is learned using the linear transformation model and the obtained pseudo-labels, and the Frobenius norm is introduced to regularize overfitting.
The learned projection matrix is then used to construct an adaptive similarity graph in the projected feature space.
The $\ell_{2, 0}$-norm constraint is imposed on the projection matrix to eliminate redundant features while preserving the most discriminative ones.
A unified optimization framework is established to optimize both aspects, thereby promoting mutual reinforcement between the clustering and feature levels. 
See Fig. \ref{flow chat} for the diagram of our proposed BLUFS and comparisons with BLSFS \cite{hu2024bi}.

The main innovations of this paper can be summarized as the following three aspects.
\begin{itemize}
\item  We construct a new model for UFS. Unlike existing bi-level methods, we employ continuous pseudo-labels to represent the data structure. Additionally, we attempt to integrate the \(\ell_{2,0}\)-norm with the bi-level method to achieve sufficiently sparse representation.
\item We design a proximal alternating minimization (PAM)-based optimization algorithm and theoretically prove that it can converge to the global optimum under mild conditions.
\item  We conduct extensive numerical experiments to evaluate the superiority of our proposed BLUFS in clustering and classification. In addition, we analyze the importance of different levels. The results show that although the feature level plays a dominant role in the feature selection process, the clustering level is also indispensable, which suggests the necessity of the bi-level structure.

\end{itemize}

The main content of this article is arranged as follows.
Section \ref{Preliminaries} introduces the notations and related work.
Section \ref{Bi-Level Unsupervised Feature Selection} presents our new model. 
Section \ref{Optimization} provides the optimization algorithm, convergence theory, and computational complexity.
Section \ref{Experiments} reports the numerical experiments and results.
Section \ref{Conclusion} concludes this paper.

\section{Preliminaries}\label{Preliminaries}
This section introduces some notations used throughout this paper and related work.

\subsection{Notations}
In this paper, matrices are denoted by capital letters, vectors by boldface letters, and scalars by lowercase letters.
Let $\mathbb{R}^{d \times n}$ be the sets of all $d \times n$-dimensional matrices.
For an arbitrary matrix $X$, let $X_i$ represent its $i$-th row and $X_{ij}$ stand for the $ij$-th element.
$X^\top$ denotes the transpose of matrix $X$ and $X^{-1}$ represents the inverse of matrix $X$.
The Frobenius norm of $X$ is $\|X\|_F = (\sum_{i=1}^d \sum_{j=1}^n X_{ij}^2)^{1/2}$.
For $p\in (0, 1]$, the $\ell_{2, p}$-norm of $X$ is defined as $\|X\|_{2, p}=(\sum_{i=1}^{d}\|X_i\|^p)^{1/p}$.
When $p=1$, it is exactly the $\ell_{2, 1}$-norm. 
In addition,  $\|X\|_{2, 0}$  counts the numbers of non-zero rows of $X$,  respectively.
An additionally notation will be introduced wherever it appears.

\subsection{Spectral clustering}
Spectral clustering \cite{ng2001spectral} is an effective clustering method based on spectral analysis techniques in graph theory, which can effectively handle complex data distributions that are not linearly separable. Due to the lack of label information, some scholars have recently used spectral clustering to obtain pseudo-labels as a substitute for true labels in feature selection.
The primary objective of clustering is to partition data points into distinct classes such that points with high similarity are grouped into the same class, while others are assigned to different classes \cite{von2007tutorial}.
Therefore,  a \( k \)-nearest neighbors ($k$-NN) graph is constructed,  and the Gaussian kernel is chosen as the similarity evaluation function. Here, \(X\in \mathbb{R}^{d \times n}\) is the instance matrix, with \(\mathbf{x}_i\) representing different instances.
The similarity matrix \( S \) is defined as 
\begin{equation}
  S_{ij} =
  \begin{cases}
  \exp\left(-\frac{\|\mathbf{x}_i - \mathbf{x}_j\|^2}{2\sigma^2}\right),  & \mathbf{x}_i \in \mathcal{N}_k(\mathbf{x}_j) \text{ or } \mathbf{x}_j \in \mathcal{N}_k(\mathbf{x}_i) \\
  0,  & \text{otherwise}.
  \end{cases}
  \end{equation}
Specifically, the similarity matrix \( S \) is constructed to measure the similarity between different instances, where \( \mathcal{N}_k(\mathbf{x}) \) denotes the set of \(k\)-NN of \( \mathbf{x} \).
The pseudo-label generation is formulated as follows

\begin{equation}
\begin{aligned}
 \min_{Y}\quad &\text{Tr}(Y^\top L Y) \\
 \text{s.t. }\quad &Y \in \{0,1\}^{n \times c}, \sum_{j=1}^{c} Y_{ij} = 1 \quad \forall i = 1, 2, \ldots, n,
\end{aligned}
\label{BLSFS2}
\end{equation}
where $\text{Tr}(\cdot)$ represents the trace of a matrix, $Y \in \mathbb{R}^{n\times c}$ serves as the pseudo-label matrix, ${D}\in \mathbb{R}^{n\times n}$ is a diagonal matrix where $D_{ii}=\sum_{j=1}^{n}S_{ij}$, and the normalized Laplacian matrix $L$ is defined as $L = I - D^{-\frac{1}{2}} S D^{-\frac{1}{2}}$. Here,
 \(\sum_{j=1}^{c} Y_{ij} = 1\) ensures that any instance \(x_i\) has only one unique label. Based on the definition of the matrix \(L\), the objective of \eqref{BLSFS2} can be rewritten as the form of
\begin{equation}\label{BLSFS3}
\begin{aligned}
\text{Tr}(Y^\top LY) &= \text{Tr}(Y^\top Y) - \text{Tr}(Y^\top D^{-\frac{1}{2}}SD^{-\frac{1}{2}}Y) \\
&= n - \text{Tr}(Y^\top D^{-\frac{1}{2}} S D^{-\frac{1}{2}} Y).
\end{aligned}
\end{equation}
Let \( \hat{S} = D^{-\frac{1}{2}}SD^{-\frac{1}{2}} \), thus \eqref{BLSFS3} can be transformed into
\begin{equation}
\begin{aligned}
\max_{{Y}}\quad&\text{Tr}({Y}^\top \hat{{S}} {Y})\\
\text{s.t. }\quad &Y \in \{0,1\}^{n \times c}, \sum_{j=1}^{c} Y_{ij} = 1 \quad \forall i = 1, 2, \ldots, n.
\label{BLSFS4}
\end{aligned}
\end{equation}
By solving the above problem, the optimal pseudo-label matrix \(Y\) can be obtained, which can be used to guide the feature selection.

\subsection{Bi-Level Spectral Feature Selection}
For the BLSFS, it merges spectral clustering with adaptive graph learning to perform feature selection at both classification and feature levels. At the classification level, pseudo-labels are generated through spectral clustering by solving \eqref{BLSFS4}. Then a linear classifier is trained using pseudo-labels to acquire the regression matrix \( W \in \mathbb{R}^{d \times c}\),  which guides the feature selection.
The regression matrix \( W  \) is learned through a linear regression to minimize the loss function
\begin{equation}
\begin{aligned}
\min_{{W},  {b}}\quad&\| {X}^\top{W}  + \mathbf{1}\mathbf{b}^\top - {Y} \|_F^2 \\
\text{s.t. }\quad &Y \in \{0,1\}^{n \times c}, \sum_{j=1}^{c} Y_{ij} = 1 \quad \forall i = 1, 2, \ldots, n,
\label{BLSFS5}
\end{aligned}
\end{equation}
where $\mathbf{b} \in \mathbb{R}^{c \times 1}$ stands for the bias vector and $\mathbf{1} =[1, 1, ..., 1]\in \mathbb{R}^{n \times 1} $. 
Combining \eqref{BLSFS4} and \eqref{BLSFS5}, it yields a multiobjective optimization problem as 
\begin{equation}
\begin{aligned}
  \begin{cases}
    \min \limits_{W,b}\quad \| {X}^\top{W}  + \mathbf{1}\mathbf{b}^\top - {Y} \|_F^2\\
    \max\limits_{Y}\quad\textrm{Tr}(Y^\top\hat S Y)
  \end{cases}
    \text{s.t. }\quad Y \in \{0,1\}^{n \times c}.
    \end{aligned}
\end{equation}

At the feature level, the regression matrix \( {W} \) projects the original data in a low-dimensional space to maintain the geometric structure by constructing an adaptive similarity graph. The optimization objective is
\begin{equation}
\begin{aligned}
 \min_{P,W}\quad &\sum_{i,j=1}^{n} ( \|W^\top x_i - W^\top x_j\|_2^2 P_{ij} + \alpha P_{ij}^2 ) + \gamma \|W\|_{2,1} \\
 \text{s.t. } \quad &P \geq 0, P^\top \mathbf{1} = \mathbf{1}.
\end{aligned}
\end{equation}
The similarity matrix \( P\in \mathbb{R}^{n \times n} \) is constructed so that \( P_{ij} \) denotes the similarity between instances \( \mathbf{x}_i \) and \( \mathbf{x}_j \) in the transformed space. By imposing the \(\ell_{2,1}\)-norm regularization on \(W\), a feature ranking is obtained to select the most important features.

\section{Bi-Level Unsupervised Feature Selection}\label{Bi-Level Unsupervised Feature Selection}

It is worth noting that BLSFS employs a discrete cluster indicator matrix, which oversimplifies the representation of the data structure and leads to unreliable pseudo-labels.
In addition, feature selection using \(\ell_{2,1}\)-norm regularization does not strictly limit the number of selected features, which may lead to insufficient sparsity.
To address the above problems, this section first describes the specific forms of clustering level and feature level, and then proposes our new bi-level model. 


\subsection{Clustering Level}

The performance of UFS is significantly affected by noise and irrelevant features, mainly due to lack of information on the labels \cite{10857472}.  To this end, we construct a matrix \( E = [\mathbf{e}_1, \mathbf{e}_2, \ldots, \mathbf{e}_n]^\top \in \{0, 1\}^{n \times c} \), where each \(\mathbf{e}_i \in \{0, 1\}^{c \times 1}\) represents the clustering indicator vector corresponding to the sample \(\mathbf{x}_i\). Specifically, the \(j\)-th element of \(\mathbf{e}_i\) is set to 1 if sample \(\mathbf{x}_i\) is assigned to the \(j\)-th cluster; otherwise, it remains 0.

On this basis, a matrix \( Y \in \mathbb{R}^{n \times c} \) is introdued to represent the scaled clustering indicator matrix  as
\begin{equation}
Y = [\mathbf{y}_1,  \mathbf{y}_2, ...,  \mathbf{y}_n]^\top = E(E^\top E)^{-\frac{1}{2}}, 
\end{equation}
where \( \mathbf{y}_i \) is the scaled cluster indicator of \( \mathbf{x}_i \). It turns out that
\begin{equation}
Y^\top Y =   (E^\top E)^{-\frac{1}{2}}E^\top E(E^\top E)^{-\frac{1}{2}}  = I, 
\end{equation}
where $  I\in \mathbb{R}^{n \times n}$ is the identity matrix.

As mentioned above, spectral clustering is used to learn pseudo-labels, which is given by
\begin{equation}\label{dis}
\min_{Y}\quad\text{Tr}(Y^\top L Y) \quad \text{s.t.} \quad Y = E (E^{\top} E)^{-\frac{1}{2}}.
\end{equation}
However, it is inherently discrete in nature, given that the elements of the feasible solution are restricted to two distinct values. This characteristic renders the problem NP-hard.
To address this issue, the constraints on \( Y \) are relaxed to a continuous form, which simplifies the optimization process and enables a more flexible representation of data point correlations, thereby enhancing the accuracy of clustering.
Then, \eqref{dis} can be relaxed to
\begin{equation}
  \label{eq:sp}
\min_{Y}\quad\text{Tr}(Y^\top L Y) \quad \text{s.t.} \quad Y^{\top} Y = I.
\end{equation}
Similarly, according to \eqref{BLSFS3}, it can be formulated as
\begin{equation}
  \label{eq:sp1}
\max_{Y}\quad\text{Tr}(Y^\top \hat{S} Y)  \quad \text{s.t.} \quad Y^{\top} Y = I.
\end{equation}
After obtaining the scaled cluster indicator matrix \(Y\), the projection matrix \(W\) is constructed on the data matrix \(X\) through a linear transformation model, that is
\begin{equation}
\label{eq:clustering}
\min\limits_{W, Y}\quad \|X^\top W-Y\|_{F}^2+ \lambda \|W\|_{F}^2, 
\end{equation}
where \( W \in \mathbb{R}^{d \times c}\) is the linear  projection matrix,  and \( \lambda \) is the regularization parameter.
In the objective, the first term is used to measure the association between the features and pseudo-labels. The second term is the regularization term, which is applied to avoid overfitting.

Finally, by integrating the spectral clustering problem \eqref{eq:sp1} and the linear transformation problem \eqref{eq:clustering}, the clustering level can be expressed as 
\begin{equation}
  \begin{cases}
    \min\limits_{W}\quad \|X^\top W-Y\|_{F}^2+ \lambda \|W\|_{F}^2 \\
    \max\limits_{Y}\quad \textrm{Tr}(Y^\top\hat S Y)
  \end{cases}
  \begin{aligned}
    \textrm{s.t.}\quad Y^\top Y=I.
    \end{aligned}
    \label{Clustering level}
\end{equation}
Unlike BLSFS, we use a continuous pseudo-label matrix to replace the true labels, which provides a flexible representation of the clustering structure. Additionally, we establish a linear transformation model to build the relationship between different classes and features and introduce a regularization term to prevent overfitting.
\subsection{Feature Level}


\begin{figure}[t]
  \centering
  \subfigcapskip=-1pt
  \subfigure[$\ell_{2,1}$-norm]{
      \centering
      \includegraphics[width=4cm]{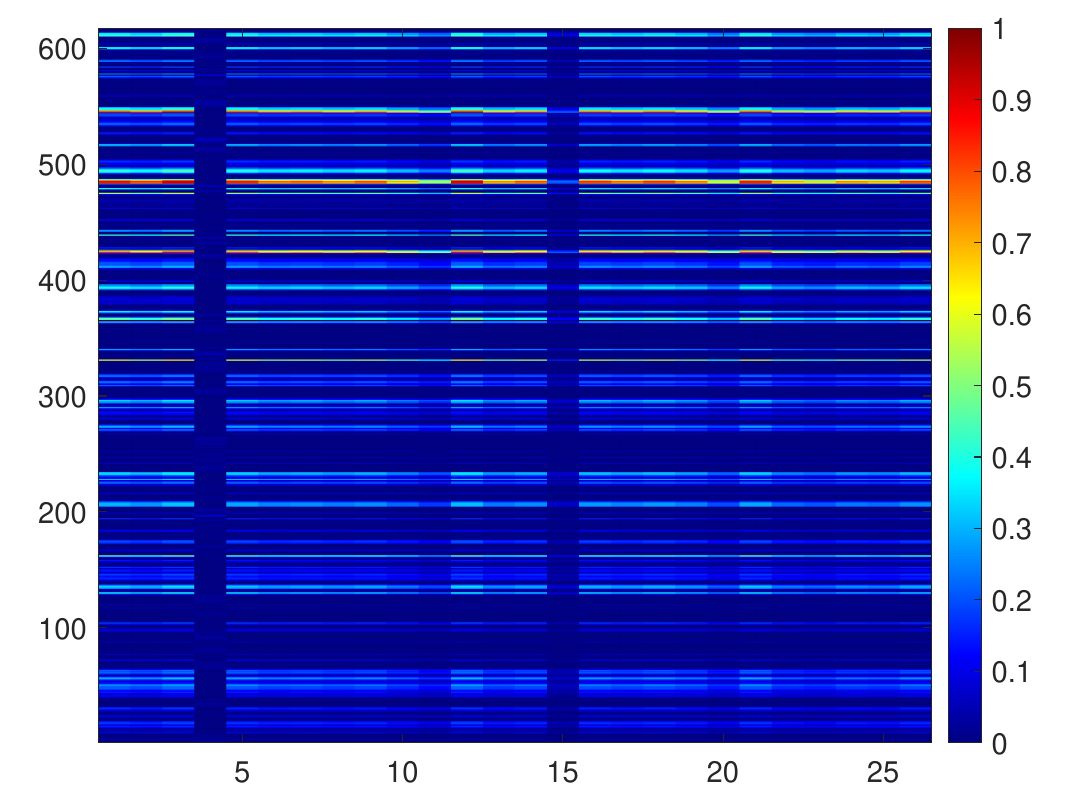}
  } 
  \subfigcapskip=-1pt
  \subfigure[$\ell_{2,0}$-norm]{
      \centering
      \includegraphics[width=4cm]{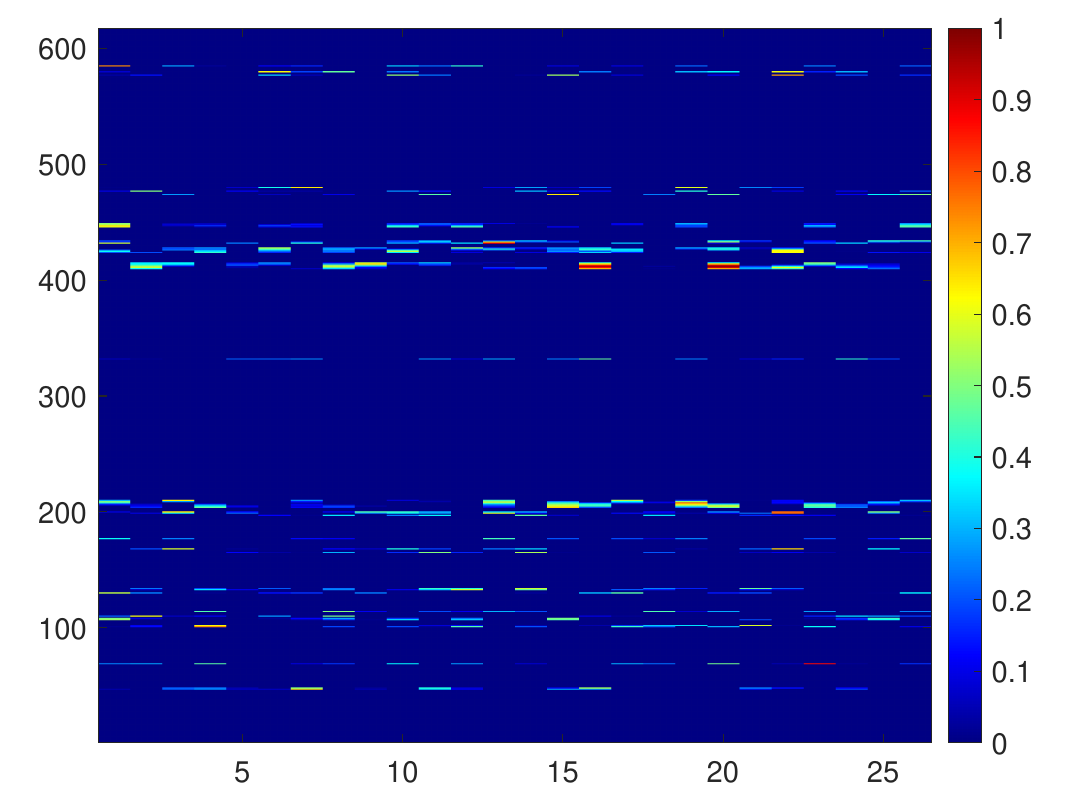}
  } 
  \vspace{-0.1cm}
  \caption{Heatmap visualization of learned sparse matrices by $\ell_{2,1}$-norm and $\ell_{2,0}$-norm.}
  \centering
  \label{Norm}
  \end{figure}

At the feature level, an adaptive learning graph model \cite{7835729} is employed to extract the most intrinsic structural information. Its main purpose is to learn an adaptive similarity matrix \(P\), which is used to represent the similarity between different instances in the projected space \cite{9868157}. Then, the \(\ell_{2,0}\)-norm constraint is imposed on \(W\) to retain the most critical features.
 Therefore, the final feature selection model is formulated as
\begin{equation}
  \begin{aligned}
  \min_{W, P}\quad&\sum_{i, j=1}^n\|W^\top x_i-W^\top x_j\|_2^2 P_{ij}+\mu \|P\|_{F}^2  \\
  \textrm{s.t.}\quad&P\geq 0,  P^\top \textbf{1}=\textbf{1},   \|W\|_{2, 0} \leq s,
  \end{aligned}
  \end{equation}
where \(s\) is the number of features to be selected, which can be set according to actual need, the constraints \( P \geq 0 \) and \( P^{\top} \mathbf{1} = \mathbf{1} \) ensure that \( P \) remains a valid probability distribution. 

Based on previous studies, the $\ell_{2,1}$-norm has been widely adopted in sparse learning due to its ability to induce structured sparsity. However, it may compromise the effectiveness of the associated methods. Specifically, in objective functions that incorporate the $\ell_{2,1}$-norm, a trade-off parameter is typically required to balance the empirical loss and the sparsity.
To provide an intuitive visualization of the resulting sparsity pattern, Fig. \ref{Norm}(a) presents the heatmap of the sparse matrix obtained by the BLSFS method under the $\ell_{2,1}$-norm regularization. It is difficult to clearly distinguish the sparse rows, which requires manual ranking and selection of features. This process often leads to unstable feature subsets.
In contrast, the sparsity constraint induced by the $\ell_{2,0}$-norm in BLUFS yields a much more explicit sparse structure. As illustrated in Fig. \ref{Norm}(b), the elements in the deep blue rows are exactly zero, indicating complete row sparsity. The $\ell_{2,0}$-norm avoids the need for tuning regularization parameters and manual feature ranking. More importantly, it offers a guarantee of stability. Therefore, we employ the $\ell_{2,0}$-norm at the feature level to ensure sufficient sparsity, making it particularly well-suited for high-dimensional feature selection.

\subsection{Our  Model}
To combine the clustering level and the feature level, we propose the bi-level unsupervised feature selection (BLUFS)  as
\begin{equation}
  \begin{aligned}
  \min_{P, Y, W}\quad &\|X^\top  W-Y\|_F^2 +\lambda\|W\|_F^2 -\alpha\textrm{Tr}(Y^\top \hat S Y)\\
    &+\beta\sum_{i, j=1}^n\|W^\top \mathbf{x}_i -W^\top \mathbf{x}_j\|_2^2P_{ij}+\mu\|P\|_F^2\\
    \textrm{s.t.}\quad&\|W\|_{2, 0} \leq s ,  P^\top \textbf{1}=\textbf{1},  P \geq 0 ,  Y^\top Y=I.
  \end{aligned}
  \label{eq:3}
\end{equation}
Indeed, it is a multi-objective optimization problem, and the constraints include non-convex sparse constraints and orthogonal constraints, which are difficult to calculate directly. Hence, the following section will develop an efficient optimization algorithm with convergence analysis.

\section{Optimization Algorithm}\label{Optimization}

This section presents the optimization scheme, followed by convergence analysis and complexity analysis.

\subsection{Optimization Scheme}
Denote the objective of \eqref{eq:3} by $f(P,  W,  Y)$.
Under the PAM framework, each variable can be updated alternately via
\begin{align}
\label{1}
&P^{k+1} = \arg \min_{P}\quad f(P,  W^k,  Y^k) + \tau_1 \|P - P^k\|_F^2,  \\
\label{2}
&W^{k+1} = \arg \min_{W}\quad f(P^{k+1},  W,  Y^k) + \tau_2 \|W - W^k\|_F^2,  \\
\label{3}
&Y^{k+1} = \arg \min_{Y}\quad f(P^{k+1},  W^{k+1},  Y) + \tau_3 \|Y - Y^k\|_F^2, 
\end{align}
where $\tau_1,  \tau_2,  \tau_3 > 0$ and $k$ is the iteration number.
It is worth noting that quadratic penalties are introduced to ensure convergence without significantly increasing the complexity.
The general iterative framework is presented in Algorithm \ref{algorithm 1},  and the detailed rules for $P$,  $W$,  and $Y$ are analyzed below.

\begin{algorithm}[t]
  \caption{Optimization algorithm for BLUFS  }
  \label{algorithm 1}
  \textbf{Input:} Data $X$,  parameters $\tau_1, \tau_2, \tau_3, \lambda, \alpha, \beta,\mu$ \\
  \textbf{Initialize:}   $k=0$, $(P^{k}, W^{k}, Y^{k})$\\
  \textbf{While} not converged \textbf{do}
  \begin{enumerate}
      \item Update $P^{k+1}$  by \eqref{1}
      \item Update $W^{k+1}$  by \eqref{2}
      \item Update $Y^{k+1}$  by \eqref{3}
      \item Check convergence
  \end{enumerate}

  \textbf{End While}

  \textbf{Output:} $(P^{k+1}, W^{k+1}, Y^{k+1})$
\end{algorithm}

\subsubsection{Update $P$} Fixing the other variables,  the subproblem in terms of $P$ can be expressed as
\begin{equation}
  \begin{aligned}
    \min_{P}\quad&\beta\sum_{i, j=1}^n\|W^\top \mathbf{x}_i-W^\top \mathbf{x}_j\|_2^2 P_{ij}+\mu \|P\|_{F}^2+\tau_1\|P-P^k\|_F^2  \\
    \textrm{s.t.}\quad&P\geq 0,  P^\top \textbf{1}=\textbf{1}.
  \end{aligned}
  \label{eq:4}
\end{equation}
To simplify, $V_{i}$ denotes a column vector and $V_{ij}$ is expressed as
\begin{equation}
V_{ij}=\|W^\top \mathbf{x}_i-W^\top \mathbf{x}_j\|_2^2.
  \label{eq:6}
\end{equation}
Additionally, the Frobenius norm term is expanded and substituted into the objective function
\begin{equation}
    \min_{P} \quad\sum_{i,j}^n ( V_{ij} P_{ij} + (\mu + \tau_1) P_{ij}^2 - 2 \tau_1 P_{ij} P_{ij}^k ).
\end{equation}
Add the constraint \( P^\top \textbf{1}=\textbf{1}\), introduce the multiplier \( \eta_j \), and construct the Lagrangian function
\begin{equation}  \label{eq:8}
\begin{aligned}
 \mathcal{L}(P_{ij}, \eta) = &\sum_{i,j}^n ( V_{ij} P_{ij} + (\mu + \tau_1) P_{ij}^2 - 2 \tau_1 P_{ij} P_{ij}^k )\\
 &+ \sum_j^n \eta_j ( \sum_i^n P_{ij} - 1 ).
  \end{aligned}
\end{equation}
Taking the partial derivative with respect to each $P_{ij}$ and setting it to zero, and incorporating the constraint $P_{ij} \geq 0$, the closed-form solution for $P_{ij}$ can be obtained as
\begin{equation}
  P_{ij}=\max~\{-\frac{V_{ij}}{2\mu}+\frac{\tau_1}{\mu+\tau_1}P_{ij}^k+\eta, 0\}.
  \label{eq:9}
\end{equation}

To construct a sparse similarity matrix, a method similar to the $k$-NN graph is adopted: for each instance, only similarities to its $k$-NN are retained, while all other entries are set to zero. 
By sorting the column vector \( V_i \) in ascending order,  that is,  \( V_{i1} \leq V_{i2} \leq \cdots \leq V_{in} \),  it can be inferred that
\begin{equation}
  P_{ij} =
  \begin{cases}
    -\frac{V_{i(j-1)}}{2\mu}+\frac{\tau_1}{\mu+\tau_1}P_{i(j-1)}^k+\eta,  & j \leq k \\
  0,  & \text{otherwise}.
  \end{cases}
  \label{eq:10}
\end{equation}
Here, $\mu$ is treated as a hyperparameter and $\eta$ is an unknown constant. Based on \eqref{eq:10} and the constraint \(P_{i}^\top\textbf{1}=1\),  it follows that
\begin{equation}
  \sum_{j=1}^k(-\frac{V_{ij}}{2\mu}+\frac{\tau_1}{\mu+\tau_1}P_{ij}^k+\eta)=1.
\end{equation}
Solving this yields a closed-form expression for $\eta$, i.e.,
\begin{equation}
\frac{1}{k}+\frac{1}{k}\sum_{j=1}^k (\frac{V_{ij}}{2\mu}-\frac{\tau_1}{\mu+\tau_1}P_{ij}^k)=\eta.
\label{eq:11}
\end{equation}
When the value of \( \eta \) is determined by \eqref{eq:11},  the value of \( P \) can be obtained using \eqref{eq:9}.

\subsubsection{Update $W$} Fixing the other variables,  the subproblem in terms of $W$ can be expressed as
\begin{equation}
  \begin{aligned}
    \label{eq:12}
  \min_{W}\quad&\|X^\top W-Y\|_{F}^2+\lambda\|W\|_{F}^2+\tau_2\|W-W^k\|_F^2\\
  &+\beta\sum_{ij=1}^n\|W^\top x_i-W^\top x_j\|_2^2P_{ij}\\
  \textrm{s.t.}\quad&\|W\|_{2, 0}\leq s.
  \end{aligned}
\end{equation}
Setting the partial derivative of the objective concerning \( W \) as zero, it holds
\begin{equation}
  2X(X^{\top}W - Y) + 2\lambda W + 2XL_{P}X^{\top}W +2\tau_2(W-W^k)= 0,
  \label{eq:13}
\end{equation}
and thus the solution is
\begin{equation}
  W=(XX^\top+(\lambda+\tau_2)I+\beta X L_PX^\top)^{-1} (X Y+\tau_2 W^k), 
  \label{eq:14}
\end{equation}
where $D_P \in \mathbb{R}^{n\times n}$ is the diagonal matrix satisfying the relationship  $D_P=diag\{ \sum_{j=1}^{n} P_{ij}\}$,  and  $ L_P = D_P-(P + P^\top)/2$ is the Laplacian matrix of $P$.  

\subsubsection{Update $Y$} Fixing the other variables,  the subproblem in terms of $Y$ can be expressed as
\begin{equation}
\begin{aligned}
\min_{Y} \quad & \|X^\top W - Y\|_F^2 - \alpha\, \text{Tr}(Y^\top \hat{S} Y) + \tau_3 \|Y - Y^k\|_F^2 \\
\text{s.t.} \quad & Y^\top Y = I.
\end{aligned}
\label{31}
\end{equation}
To solve the subproblem with respect to $Y$ under the orthogonality constraint $Y^\top Y = I$, we adopt an exact penalty approach. The original problem is reformulated as an unconstrained optimization on a Frobenius norm ball $\mathcal{B}_\rho = \{ Y \in \mathbb{R}^{c \times n} \mid \|Y\|_F \leq \rho \}$. The new objective function is defined as
\begin{equation}
\begin{aligned}
\min_{Y \in \mathcal{B}_\rho} \quad h(Y)&= \|X^\top W - Y\|_F^2 - \alpha\, \text{Tr}(Y^\top \hat{S} Y) + \tau_3 \|Y - Y^k\|_F^2\\
&-\frac{1}{2} \langle \Lambda(Y), Y^\top Y - I \rangle + \frac{\theta}{4} \|Y^\top Y - I\|_F^2,
\label{32}
\end{aligned}
\end{equation}
with the penalty matrix
\begin{equation}
\Lambda(Y) = \frac{1}{2} \left( Y^\top \nabla l(Y) + \nabla l(Y)^\top Y \right),
\end{equation}
and $\theta > 0$ is the penalty parameter.
Here, $l(Y)$ is 
\begin{equation}
l(Y)= \|X^\top W - Y\|_F^2 - \alpha\, \text{Tr}(Y^\top \hat{S} Y) + \tau_3 \|Y - Y^k\|_F^2.
\end{equation}
To avoid computing the full Hessian of $l(Y)$, the gradient of $h(Y)$ is approximated by
\begin{equation}
\begin{aligned}
D(Y) = & -2(X^\top W - Y) - 2\alpha \hat{S} Y + 2\tau_3 (Y - Y^k) \\
& - Y \Lambda(Y) + \theta Y(Y^\top Y - I).
\end{aligned}  
\label{36}
\end{equation}
Then $Y$ is updated via a Barzilai-Borwein gradient step
\begin{equation}
\hat{Y}^{k+1} = Y^k - \xi_k D(Y^k),
\label{37}
\end{equation}
where $\xi_k > 0$ is the step size. To ensure feasibility, $\hat{Y}^{k+1}$ is projected back to $\mathcal{B}_\rho$ if necessary
\begin{equation}
Y^{k+1} = \mathcal{P}_{\mathcal{B}_\rho}(\hat{Y}^{k+1}),
\end{equation}
where $\mathcal{P}_{\mathcal{B}_\rho}$ denotes the projection onto the Frobenius norm ball. If $\hat{Y}^{k+1}$ already satisfies the constraint, the projection returns it unchanged. Otherwise, it is scaled appropriately to lie on the boundary of $\mathcal{B}_\rho$. The complete update process for $Y$ is summarized in Algorithm \ref{algorithm 2}.

\begin{algorithm}[t]
  \caption{Exact penalty function method of solving \eqref{31}}
  \label{algorithm 2}
  \textbf{Input:} Data $X$,  parameters $\theta,  \rho,  \xi$ \\
  \textbf{Initialize:} $Y^0 \leftarrow Y^k$\\
  \textbf{While} not converged \textbf{do}
  \begin{enumerate}
      \item Compute $D(Y^k)$ by  \eqref{36}
      \item Compute $\hat{Y}^{k+1}$ by  \eqref{37}
      \item \textbf{If} $\|\hat{Y}^{k+1}\|_F > \rho$ 
      \textbf{then}
      \begin{equation}
      Y^{k+1} = \frac{\rho}{\|\hat{Y}^{k+1}\|_F} \hat{Y}^{k+1}
      \label{eq:22}
      \end{equation}
      \textbf{else}
      \begin{equation}
      Y^{k+1} = \hat{Y}^{k+1}
      \label{eq:23}
      \end{equation}
      \textbf{end if}
      \item Check convergence
  \end{enumerate}
  \textbf{End While}\\
  \textbf{Output:} $Y^{k+1}$
\end{algorithm}

\subsection{Convergence Analysis}
For convenience, denote $Q = (P,  W,  Y)$. Then define
\begin{equation}
\begin{aligned}
\label{eq:Q}
f(Q) = &\min_{P, Y, W} \quad\|X^\top  W-Y\|_F^2 +\lambda\|W\|_F^2 -\alpha\textrm{Tr}(Y^\top \hat S Y)\\
&+\beta\sum_{i, j=1}^n\|W^\top x_i -W^\top x_j\|_2^2P_{ij}+\mu\|P\|_F^2.\\
\end{aligned}
\end{equation}
Before proceeding, we first prove the sufficient decrease property in the following lemma.

\begin{lemma}
  \label{lemma3}
Assume that $\{Q^k\}_{k \in \mathbb{N}}$ is generated by Algorithm $\ref{algorithm 1}$. Then the following inequality holds
\begin{equation}
f(Q^{k+1}) + \tau \|Q^{k+1} - Q^k\|_F^2 \leq f(Q^k), 
\end{equation}
where $\tau =  \min\{\tau_1,  \tau_2,  \tau_3\}$.
\end{lemma}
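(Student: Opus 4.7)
The plan is to exploit the very definition of the three proximal subproblems in Algorithm~\ref{algorithm 1}: each new iterate is by construction a minimizer of the partial objective plus the corresponding quadratic proximal term, so plugging in the previous iterate as a feasible competitor yields three elementary descent inequalities that telescope along the coordinates of $Q$.

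Concretely, I would first argue that each subproblem is well-posed, so that the $\arg\min$ in \eqref{1}--\eqref{3} is attained. This uses that the feasible set $\{P\geq 0,\ P^\top \mathbf{1}=\mathbf{1}\}$ is compact (columns lie in the simplex), the Stiefel manifold $\{Y^\top Y=I\}$ is compact, and the $\ell_{2,0}$-ball combined with the coercive quadratic penalty $\tau_2\|W-W^k\|_F^2$ makes the $W$-subproblem coercive. Since $P^k$ is feasible for the $P$-subproblem at iteration $k$ (it was feasible at iteration $k-1$ and the feasible set does not depend on other variables), the optimality of $P^{k+1}$ gives
\begin{equation*}
f(P^{k+1},W^k,Y^k)+\tau_1\|P^{k+1}-P^k\|_F^2\le f(P^k,W^k,Y^k)+\tau_1\|P^k-P^k\|_F^2 = f(Q^k).
\end{equation*}
The same reasoning with $W=W^k$ in subproblem \eqref{2} and $Y=Y^k$ in subproblem \eqref{3} yields
\begin{align*}
f(P^{k+1},W^{k+1},Y^k)+\tau_2\|W^{k+1}-W^k\|_F^2 &\le f(P^{k+1},W^k,Y^k),\\
f(P^{k+1},W^{k+1},Y^{k+1})+\tau_3\|Y^{k+1}-Y^k\|_F^2 &\le f(P^{k+1},W^{k+1},Y^k).
\end{align*}

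Chaining the three inequalities and using the convention $\|Q^{k+1}-Q^k\|_F^2=\|P^{k+1}-P^k\|_F^2+\|W^{k+1}-W^k\|_F^2+\|Y^{k+1}-Y^k\|_F^2$, I would conclude
\begin{equation*}
f(Q^{k+1})+\tau_1\|P^{k+1}-P^k\|_F^2+\tau_2\|W^{k+1}-W^k\|_F^2+\tau_3\|Y^{k+1}-Y^k\|_F^2\le f(Q^k),
\end{equation*}
and then lower-bound the left-hand side using $\tau=\min\{\tau_1,\tau_2,\tau_3\}$ to obtain the stated inequality.

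The only delicate point is that the $Y$-subproblem \eqref{3} is handled via the inner exact-penalty routine in Algorithm~\ref{algorithm 2}, so I will need to justify treating the returned $Y^{k+1}$ as an exact (or at least sufficient-descent) minimizer of the constrained subproblem. I would address this by invoking the equivalence between the penalty reformulation \eqref{32} and the orthogonality-constrained problem for a sufficiently large $\theta$, and by noting that, since $Y^k$ is itself a feasible initialization, the Barzilai--Borwein descent with projection onto $\mathcal{B}_\rho$ produces a point whose objective value does not exceed that of $Y^k$; this is enough to reproduce the third inequality above. Apart from this bookkeeping about the inner solver, the rest of the argument is a standard PAM descent computation.
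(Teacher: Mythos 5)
Your argument is exactly the paper's proof: you compare each new iterate against the previous one in its own proximal subproblem, chain the three resulting descent inequalities, and bound the sum below with $\tau=\min\{\tau_1,\tau_2,\tau_3\}$. The extra care you take about well-posedness and about whether the inner exact-penalty routine returns a genuine (or sufficient-descent) minimizer of the $Y$-subproblem is a point the paper silently assumes, so your version is, if anything, slightly more complete.
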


\begin{proof}
Since $P^{k+1}, W^{k+1},  $ and $Y^{k+1}$ are optimal solutions of $\eqref{1}$,  $\eqref{2}$,  and $\eqref{3}$,  respectively, then
\begin{equation}
  \label{45}
\begin{aligned}
f(P^{k+1},  W^k,  Y^k) + \tau_1 \|P^{k+1} - P^k\|_F^2\\
\leq f(P^k,  W^k,  Y^k), 
\end{aligned}
\end{equation}
\label{46}
\begin{equation}
  \begin{aligned}
f(P^{k+1},  W^{k+1},  P^k) + \tau_2 \|W^{k+1} - W^k\|_F^2 \\
\leq f(P^{k+1},  W^k,  Y^k), 
\end{aligned}
\end{equation}
\begin{equation}
  \label{47}
  \begin{aligned}
f(P^{k+1},  W^{k+1},  Y^{k+1}) + \tau_3 \|Y^{k+1} - Y^k\|_F^2 \\
\leq f(P^{k+1},  W^{k+1},  Y^k).
\end{aligned}
\end{equation}
By combining \eqref{45}-\eqref{47}, it holds
\begin{equation}
\begin{aligned}
&f(P^{k+1},  W^{k+1},  Y^{k+1}) + \tau_1 \|P^{k+1} - P^k\|_F^2 \\
&+ \tau_2 \|W^{k+1} - W^k\|_F^2 + \tau_3 \|Y^{k+1} - Y^k\|_F^2 \\
&\leq f(P^k,  W^k,  Y^k), 
\end{aligned}
\end{equation}
which yields
\begin{equation}
f(Q^{k+1}) + \tau \|Q^{k+1} - Q^k\|_F^2 \leq f(Q^k).
\end{equation}
Hence, the proof is completed.
\end{proof}

Leveraging the relevant arguments in \cite{bolte2014proximal,123} and the Kurdyka-Lojasiewicz property, we have proven the global convergence of Algorithm \ref{algorithm 1}.
\begin{theorem}
  Assume that \(\{Q^k\}_{k \in \mathbb{N}}\) is generated by Algorithm \ref{algorithm 1}.
  Then,  the sequence \(\{Q^k\}_{k \in \mathbb{N}}\) globally converges to a critical point of \(f(Q)\),  i.e., 
\begin{equation}
 0 \in \partial f(Q^*), 
\end{equation}
where \(\partial f(\cdot)\) represents the limiting subdifferential set.
\end{theorem}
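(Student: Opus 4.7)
The plan is to invoke the abstract convergence framework of Bolte–Sabach–Teboulle (the PAM-with-KL template used in \cite{bolte2014proximal}), which asserts that any bounded sequence produced by a proximal alternating scheme converges globally to a critical point provided that (i) a sufficient decrease property holds, (ii) a relative-error / subgradient bound holds, (iii) a continuity condition on the objective is satisfied along the sequence, and (iv) the objective enjoys the Kurdyka–Lojasiewicz (KL) property at every limit point. Lemma \ref{lemma3} already delivers ingredient (i), so the proof reduces to verifying (ii)–(iv) together with boundedness of $\{Q^k\}$.

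First I would encode the constraints into the objective by defining
\begin{equation}
F(Q) = f(Q) + \delta_{\mathcal{P}}(P) + \delta_{\mathcal{W}}(W) + \delta_{\mathcal{Y}}(Y),
\end{equation}
where $\mathcal{P}=\{P\ge 0,\,P^\top \mathbf{1}=\mathbf{1}\}$, $\mathcal{W}=\{W\colon \|W\|_{2,0}\le s\}$, and $\mathcal{Y}=\{Y\colon Y^\top Y=I\}$. All three sets are semi-algebraic (the Stiefel manifold and the $\ell_{2,0}$ sublevel set are defined by polynomial equations/inequalities), and $f$ itself is a polynomial in $(P,W,Y)$. Hence $F$ is a proper lower semicontinuous semi-algebraic function, and by the results of \L{}ojasiewicz/Bolte et al. it automatically satisfies the KL property at every point of its domain, settling ingredient (iv). Boundedness comes essentially for free: $P$ lies in a compact simplex, $Y$ on a compact Stiefel manifold, and $W$ is bounded because $f(Q^k)$ is non-increasing by Lemma \ref{lemma3} while the coercive term $\lambda\|W\|_F^2$ dominates when $\|W\|_F\to\infty$.

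Next I would establish the relative-error bound (ii). Using the optimality conditions of the three subproblems \eqref{1}–\eqref{3}, one can construct, for each iterate, elements
\begin{equation}
\xi_P^{k+1}\in \partial_P F(Q^{k+1}), \quad \xi_W^{k+1}\in \partial_W F(Q^{k+1}), \quad \xi_Y^{k+1}\in \partial_Y F(Q^{k+1}),
\end{equation}
of the form $\xi_P^{k+1} = 2\tau_1(P^k-P^{k+1}) + r_P$, and analogous expressions for $W$ and $Y$, where the residual terms $r_P,r_W,r_Y$ arise from the Lipschitz continuity of the partial gradients of $f$ evaluated at consecutive iterates. Since $f$ has locally Lipschitz gradient on the (bounded) region containing $\{Q^k\}$, we obtain
\begin{equation}
\|\xi^{k+1}\|_F \;\le\; b\,\|Q^{k+1}-Q^k\|_F
\end{equation}
for some constant $b>0$, which is precisely the subgradient bound required by the abstract theorem. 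Condition (iii) then follows by extracting a convergent subsequence $Q^{k_j}\to Q^\ast$ (possible since $\{Q^k\}$ is bounded) and using the closedness of the three constraint sets together with the continuity of $f$ on $\mathrm{dom}\,F$.

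With (i)–(iv) and boundedness in hand, the Bolte–Sabach–Teboulle machinery \cite{bolte2014proximal} yields finite length of the sequence, $\sum_{k}\|Q^{k+1}-Q^k\|_F<\infty$, whence $\{Q^k\}$ converges to a single limit $Q^\ast$, and passing to the limit in the optimality conditions gives $0\in\partial F(Q^\ast)$, i.e., $Q^\ast$ is a critical point of $f$ subject to the stated constraints. The main obstacle in carrying out this program is ingredient (ii): because the $W$-subproblem and the $Y$-subproblem involve the non-convex sets $\mathcal{W}$ and $\mathcal{Y}$ (with the latter additionally solved inexactly by the Barzilai–Borwein scheme of Algorithm \ref{algorithm 2}), one must carefully verify that the computed updates are indeed global minimizers of the corresponding proximal subproblems, so that the limiting subdifferential inclusions above are valid and the error bound constant $b$ can be controlled. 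Once this is handled via the closed-form update for $W$ (combined with a hard-thresholding argument for the $\ell_{2,0}$ constraint) and by driving the inner tolerance in Algorithm \ref{algorithm 2} to zero, the rest of the argument is a routine application of the KL convergence recipe.
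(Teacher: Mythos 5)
Your proposal follows exactly the route the paper gestures at: the paper in fact \emph{omits} the proof of this theorem entirely (``For the sake of continuity, the proof is omitted here''), establishing only the sufficient-decrease property in Lemma \ref{lemma3} and then appealing to \cite{bolte2014proximal} and the Kurdyka--\L{}ojasiewicz property. Your outline --- encoding the constraints as indicator functions of semi-algebraic sets, verifying KL via semi-algebraicity, obtaining boundedness from the compactness of the $P$- and $Y$-feasible sets together with the coercive term $\lambda\|W\|_F^2$, and deriving the relative-error bound from the subproblem optimality conditions and local Lipschitz continuity of the partial gradients --- is the standard Bolte--Sabach--Teboulle recipe and is more detailed than anything the paper provides.

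The one genuine issue is precisely the obstacle you flag at the end, and it is worth stating that it is a gap in the paper rather than merely a loose end in your write-up: both Lemma \ref{lemma3} and the relative-error bound require each block update to be a \emph{global} minimizer of its proximal subproblem, but the updates actually implemented do not guarantee this. The $W$-update \eqref{eq:14} is obtained by setting the gradient of the smooth part to zero and silently discards the constraint $\|W\|_{2,0}\le s$ (no hard-thresholding or row-selection step appears anywhere in the algorithm), so $W^{k+1}$ need not even be feasible, let alone optimal for \eqref{2}; and the $Y$-update is an inexact penalty/Barzilai--Borwein scheme projected onto a Frobenius ball rather than onto the Stiefel manifold, so $Y^{k+1}$ need not satisfy $Y^\top Y=I$ nor minimize \eqref{3}. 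Under the idealized hypothesis that \eqref{1}--\eqref{3} are solved exactly, your argument goes through; for the algorithm as actually described, the convergence claim (and indeed Lemma \ref{lemma3} itself) is not established, and closing that gap would require either modifying the updates or extending the analysis to inexact block minimization.
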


For the sake of continuity, the proof is omitted here. In fact, this theorem implies that the generated sequence converges to a point that satisfies the first-order optimality condition of \(f(Q)\).

\subsection{Complexity Analysis}
In Algorithm \ref{algorithm 1}, the most computationally expensive step is optimizing the pseudo-label matrix \(Y\).
Using the exact penalty function method,  eigenvalue decomposition is avoided and requires \(\mathcal{O}(n^2c)\).
Updating \(W\) involves computing
\(
\left( X X^\top + (\lambda + \tau_2) I + \beta X L_P X^\top \right)^{-1} (X Y + \tau_2 W), 
\)
that takes \(\mathcal{O}(d^3)\).
Updating \(P\) requires calculating \(V_{ij}\) in \(\mathcal{O}(cdn)\).
Thus, the total complexity for processing high-dimensional data is \(\max\{\mathcal{O}(n^2c),  \mathcal{O}(d^3),  \mathcal{O}(cdn)\}\), which is not worse than the traditional embedded feature selection methods \cite{nie2016unsupervised, li2017reconstruction}.

\begin{figure*}[t]
  \makeatletter
  \renewcommand{\@thesubfigure}{\hskip\subfiglabelskip}
  \makeatother
  \centering

  \subfigure[(a) LapScore]{
    \includegraphics[width=1.35 in]{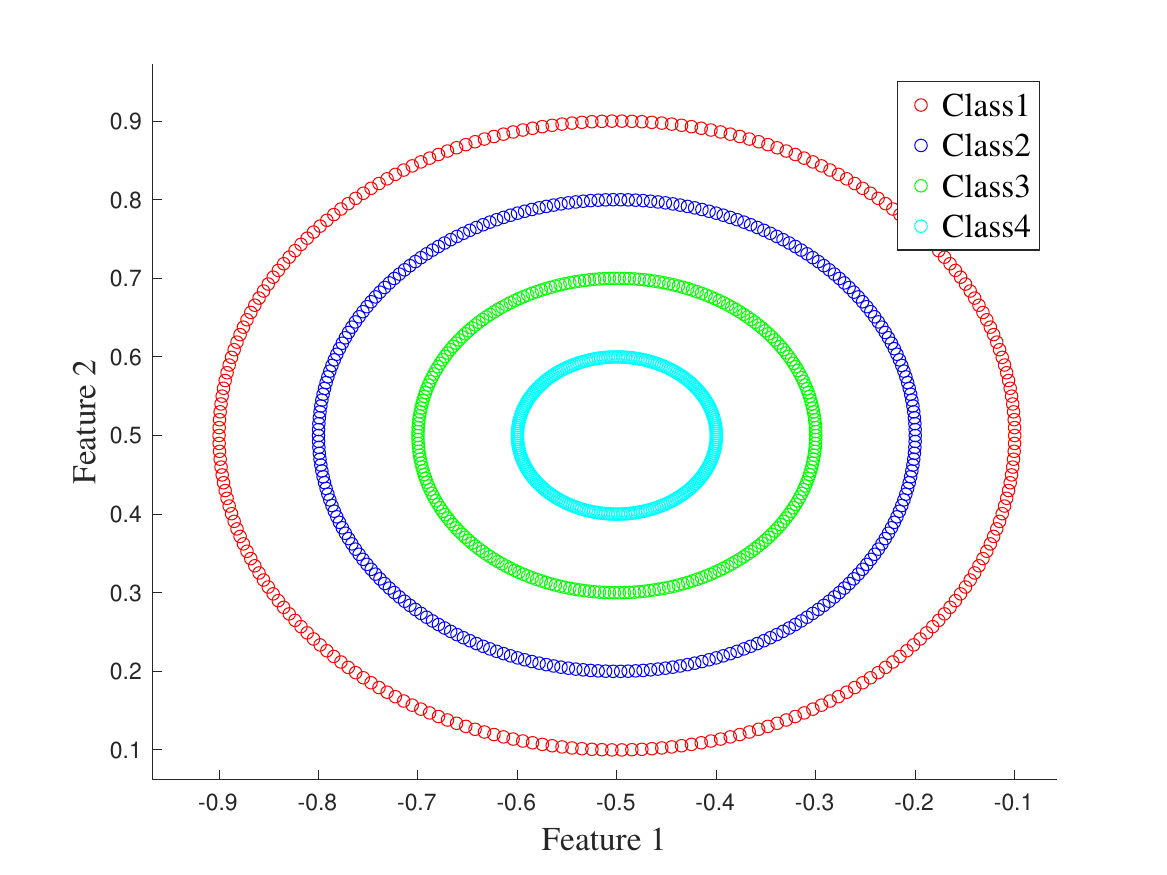}}
  \subfigure[(b) MCFS]{\label{tsne3}
    \includegraphics[width=1.35 in]{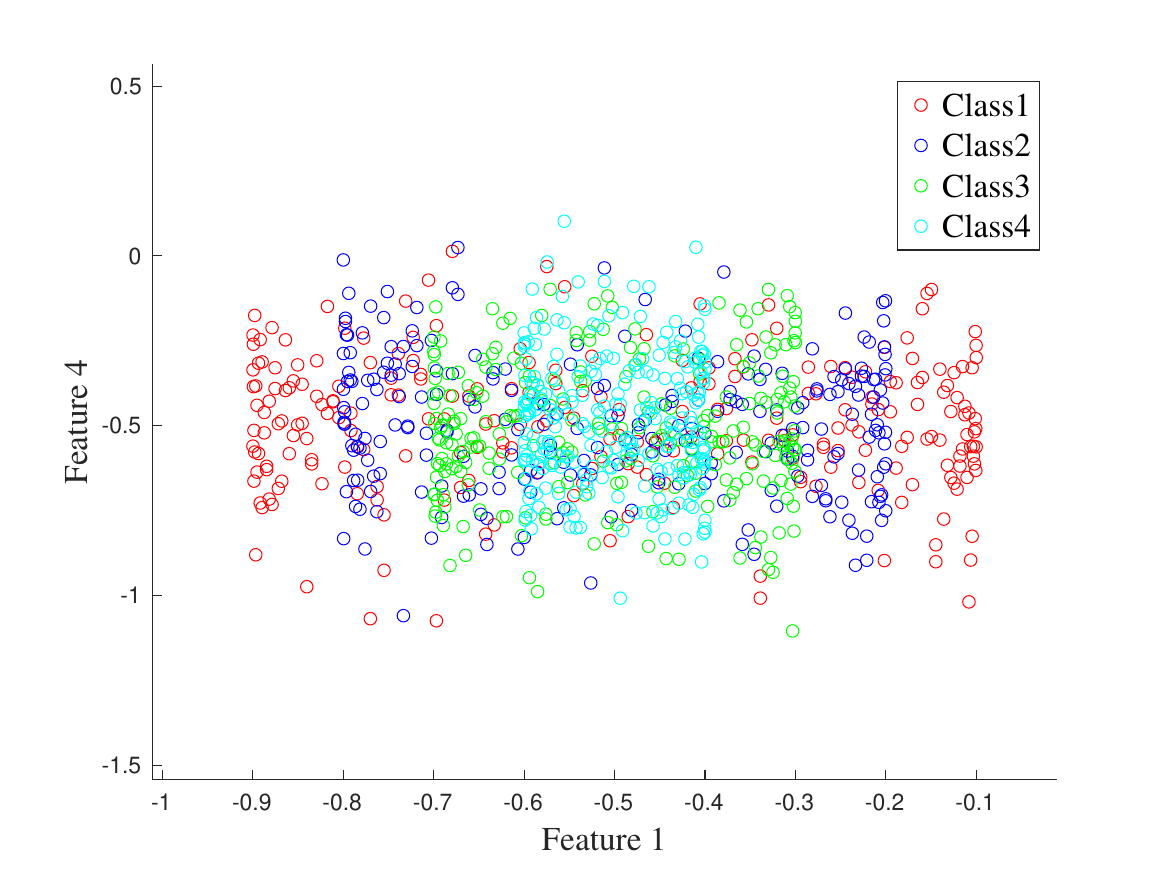}}
   \subfigure[(e) UDFS]{
    \includegraphics[width=1.35 in]{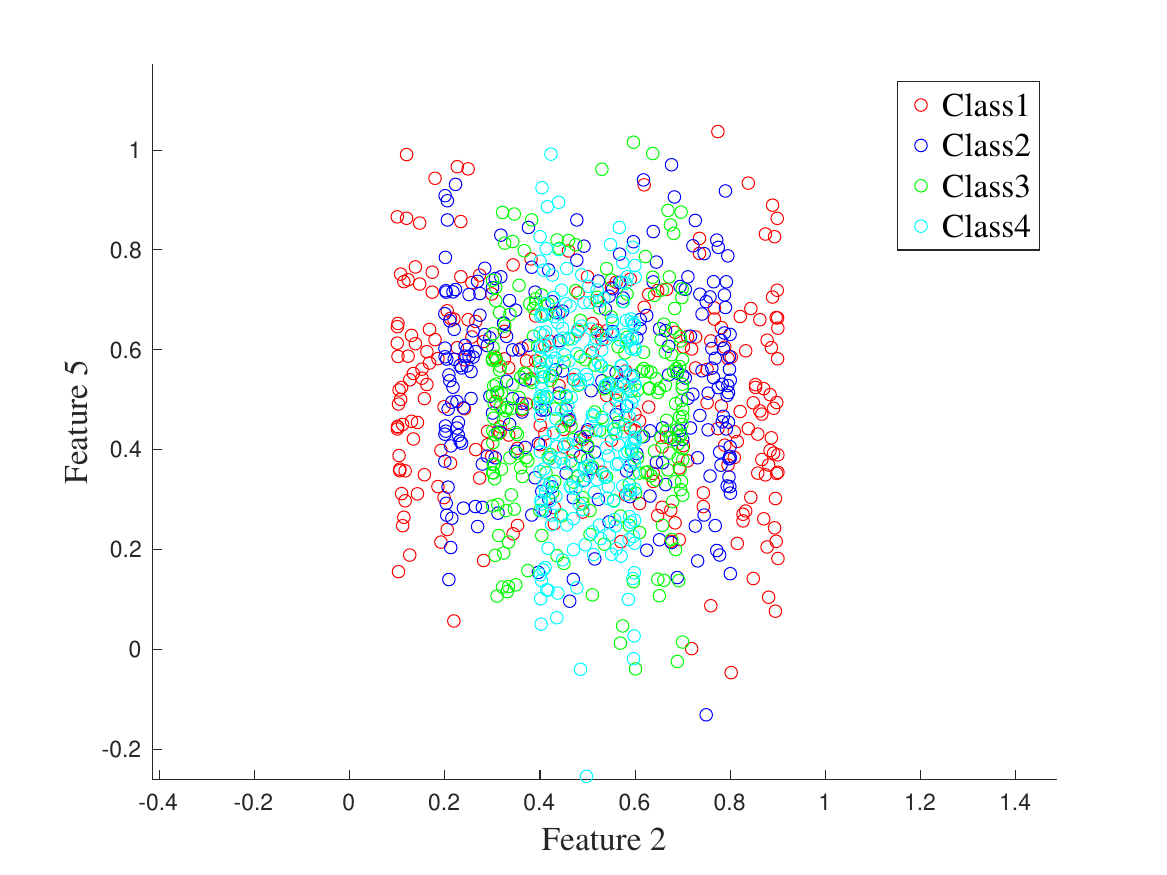}}
  \subfigure[(d) RNE]{
    \includegraphics[width=1.35 in]{figures/Synthetic/dartboard1-LapScore.pdf}}
     \subfigure[(c) SOGFS]{
    \includegraphics[width=1.35 in]{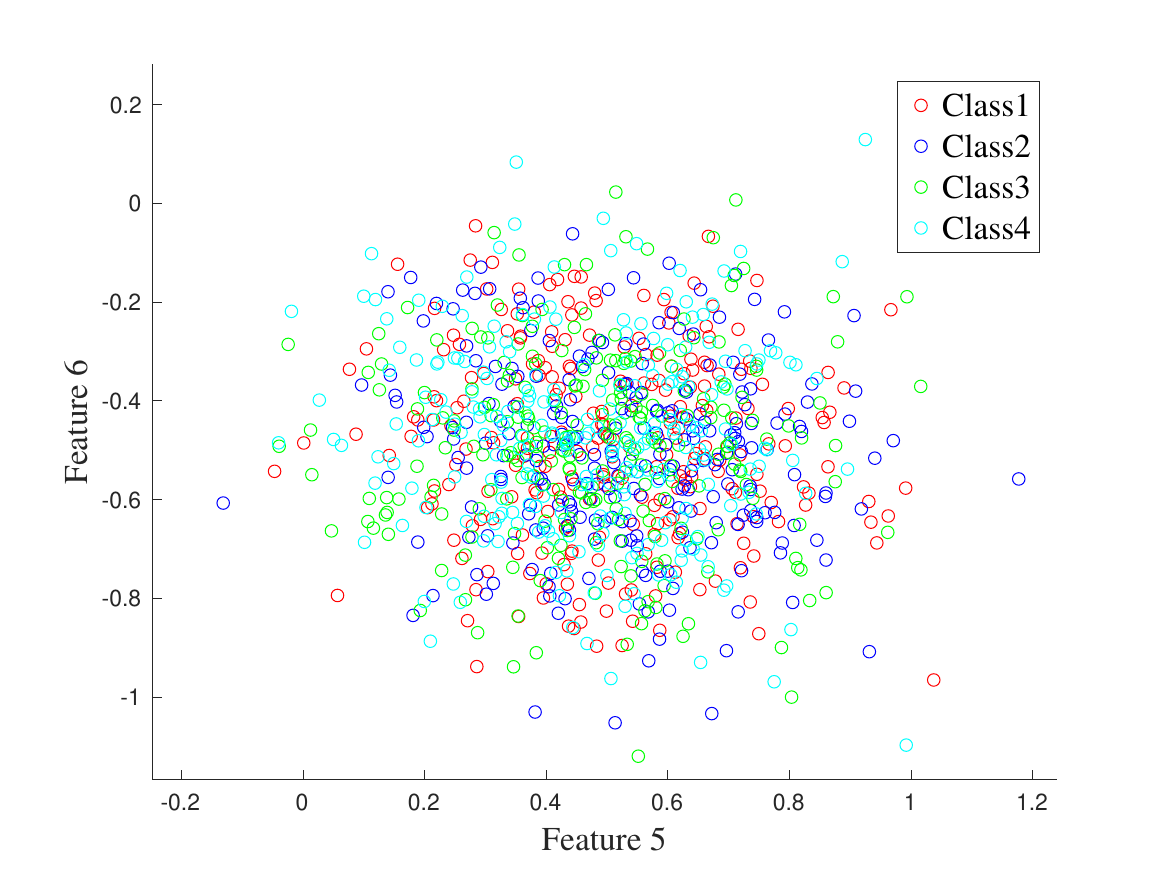}}
  \subfigure[(f) SPCAFS]{
    \includegraphics[width=1.35 in]{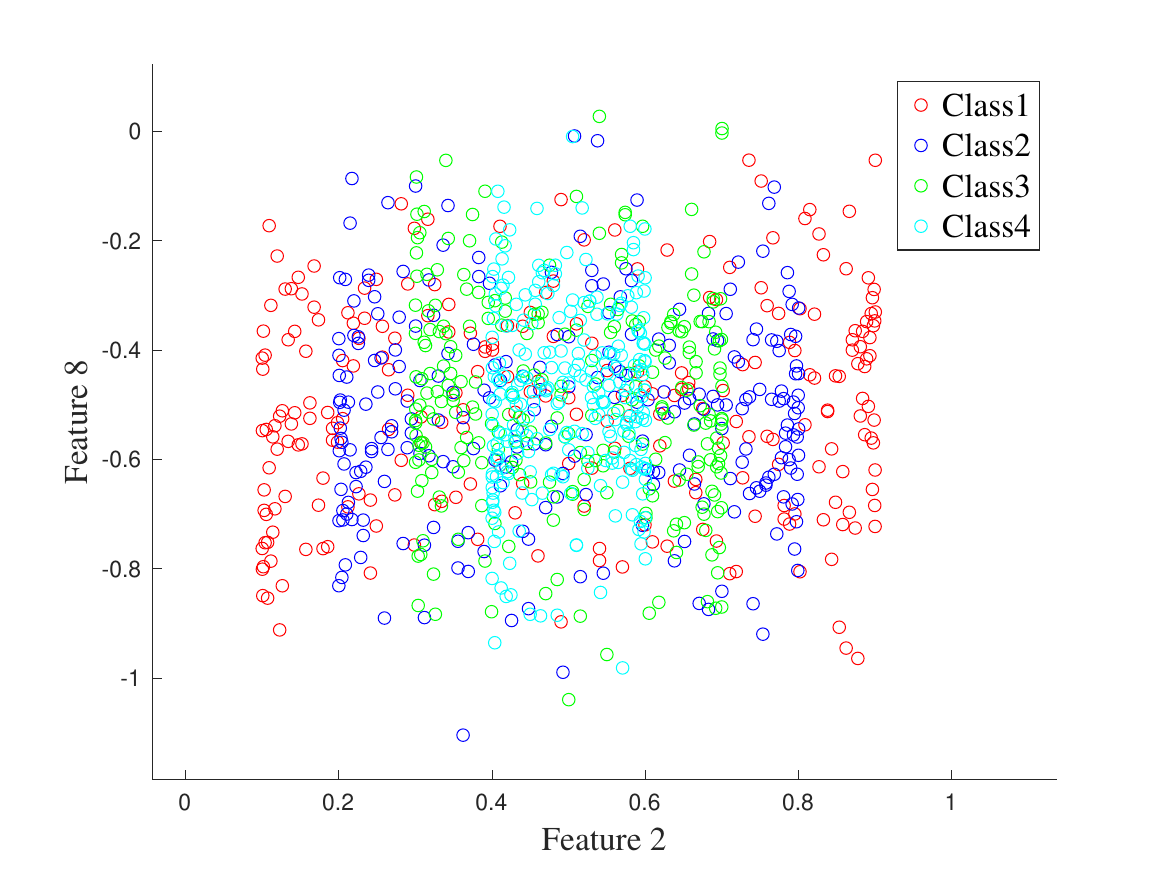}}
  \subfigure[(g) FSPCA]{
    \includegraphics[width=1.35 in]{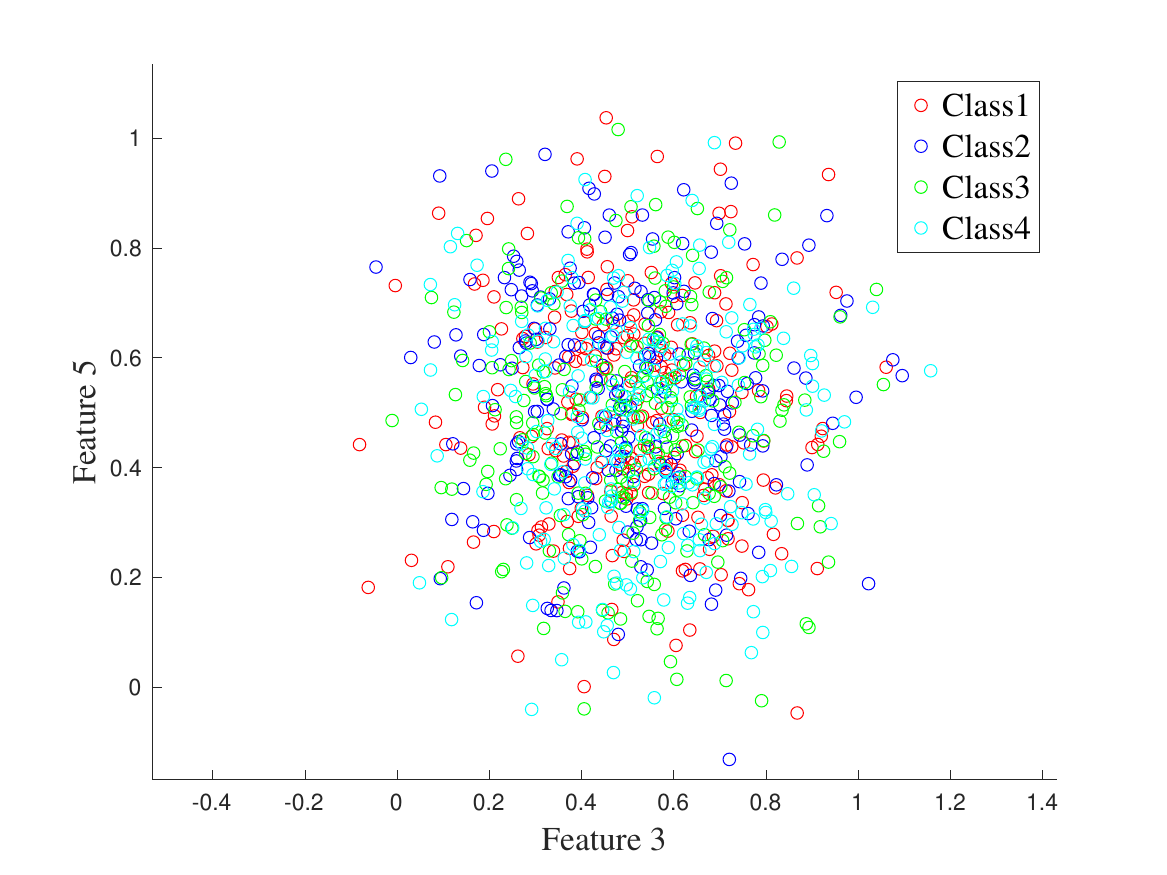}}
  \subfigure[(h) BLSFE]{
    \includegraphics[width=1.35 in]{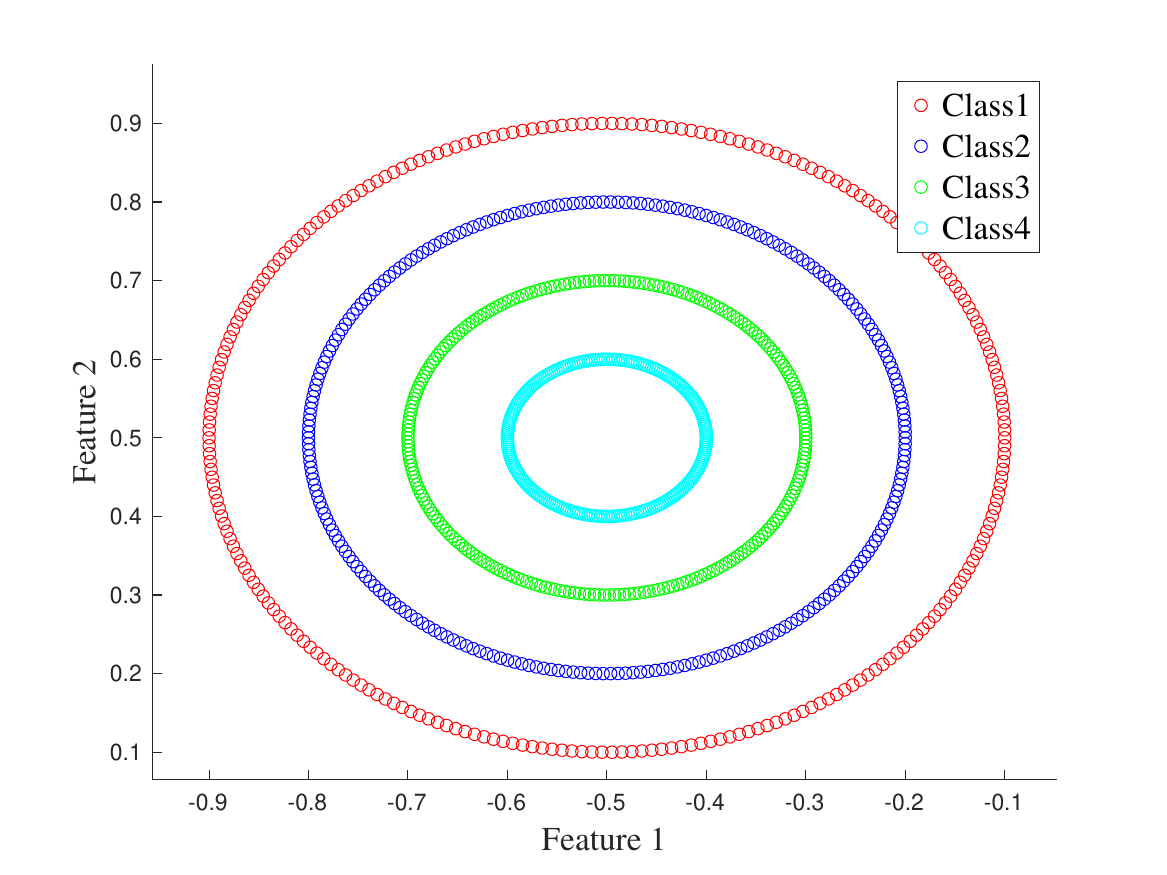}}
    \subfigure[(i) BLSFS]{
    \includegraphics[width=1.35 in]{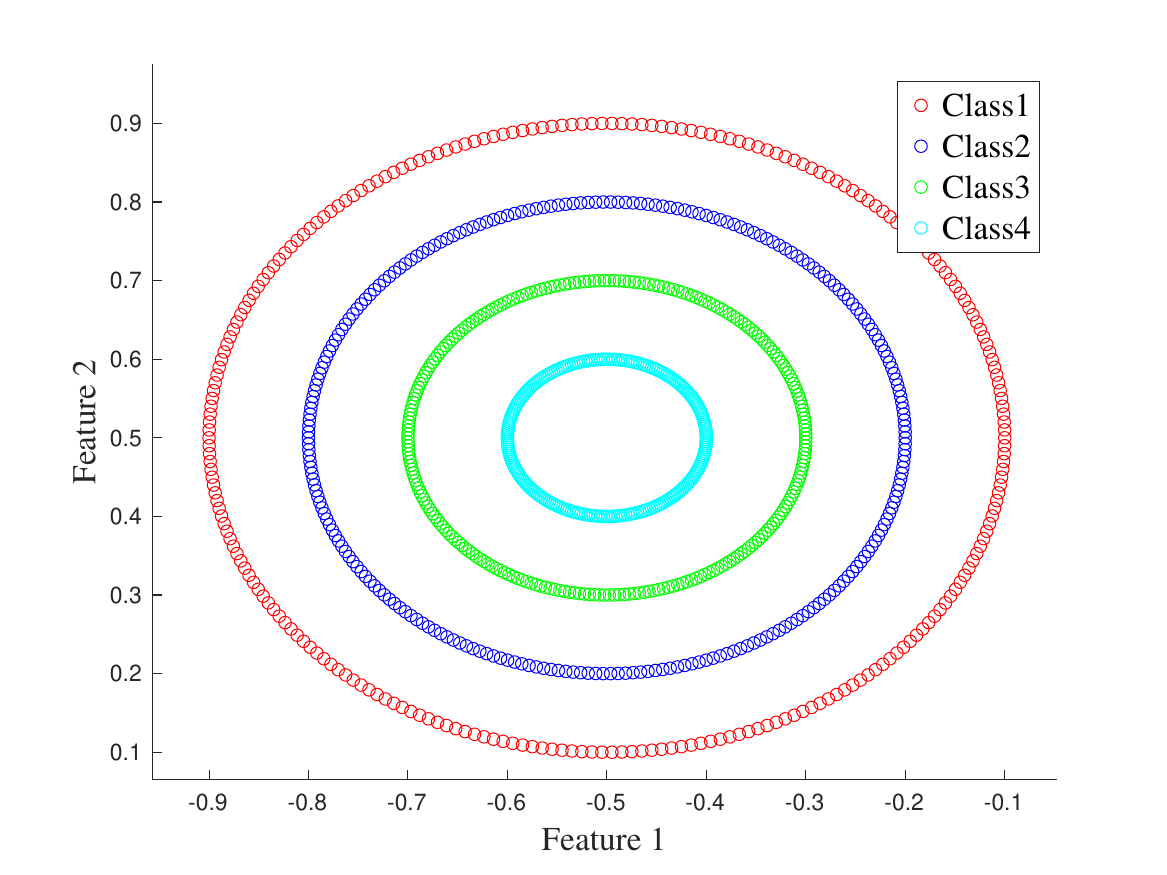}}
  \subfigure[(j) BLUFS]{
    \includegraphics[width=1.35 in]{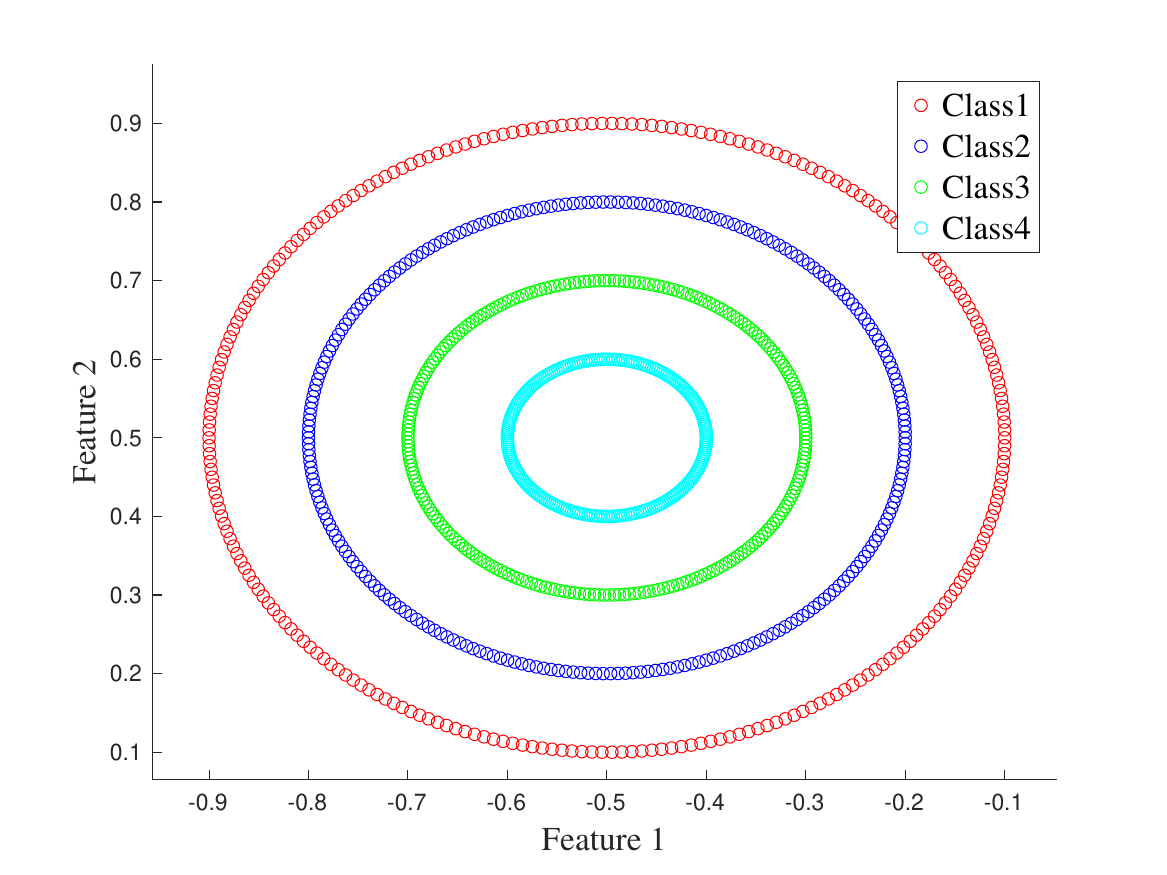}}
  \vskip-0.2cm
  \caption{Visual comparisons on the Dartboard1 dataset,  where (a)-(j) are the feature selection results.}
  \label{figure:1}
\end{figure*}

\begin{figure*}[t]
  \makeatletter
  \renewcommand{\@thesubfigure}{\hskip\subfiglabelskip}
  \makeatother
  \centering

  \subfigure[(a) LapScore]{
    \includegraphics[width=1.35 in]{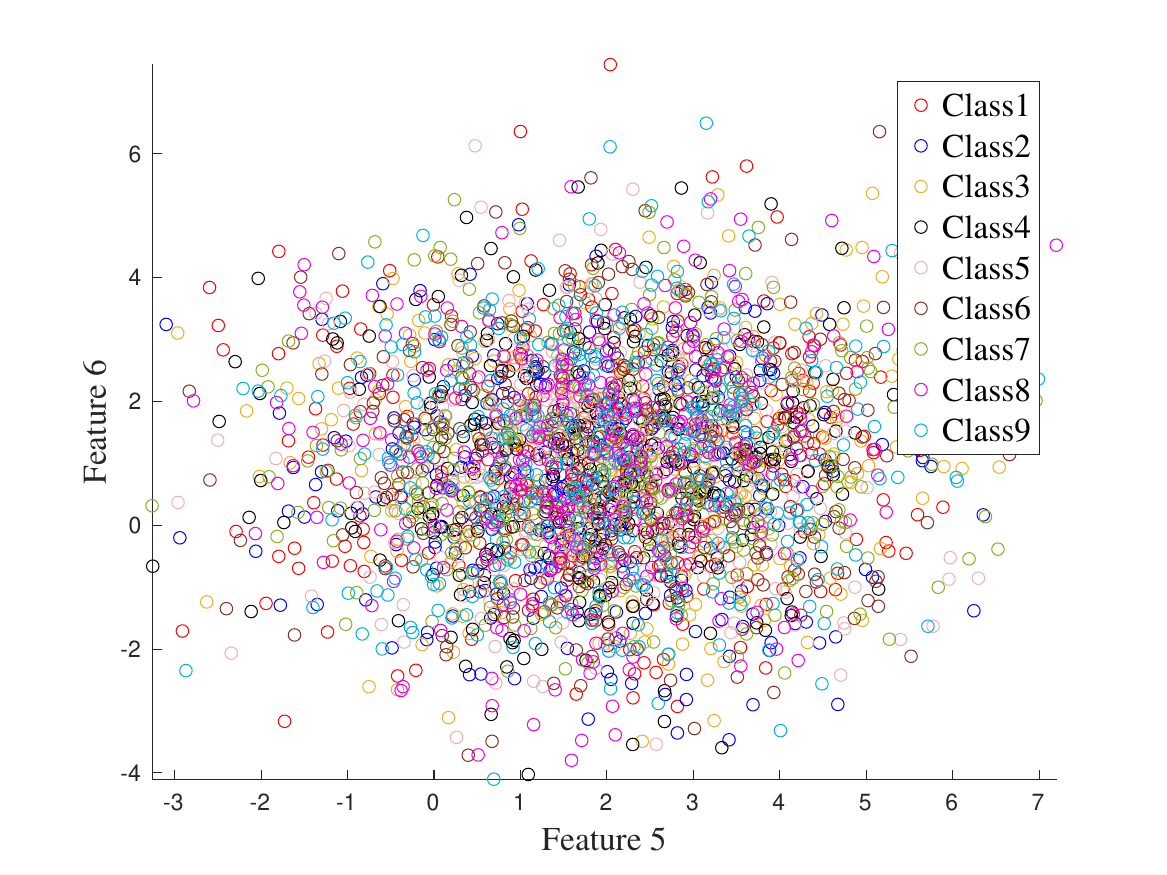}}
  \subfigure[(b) MCFS]{
    \includegraphics[width=1.35 in]{figures/Synthetic/dartboard1-MCFS.pdf}}
    \subfigure[(e) UDFS]{
    \includegraphics[width=1.35 in]{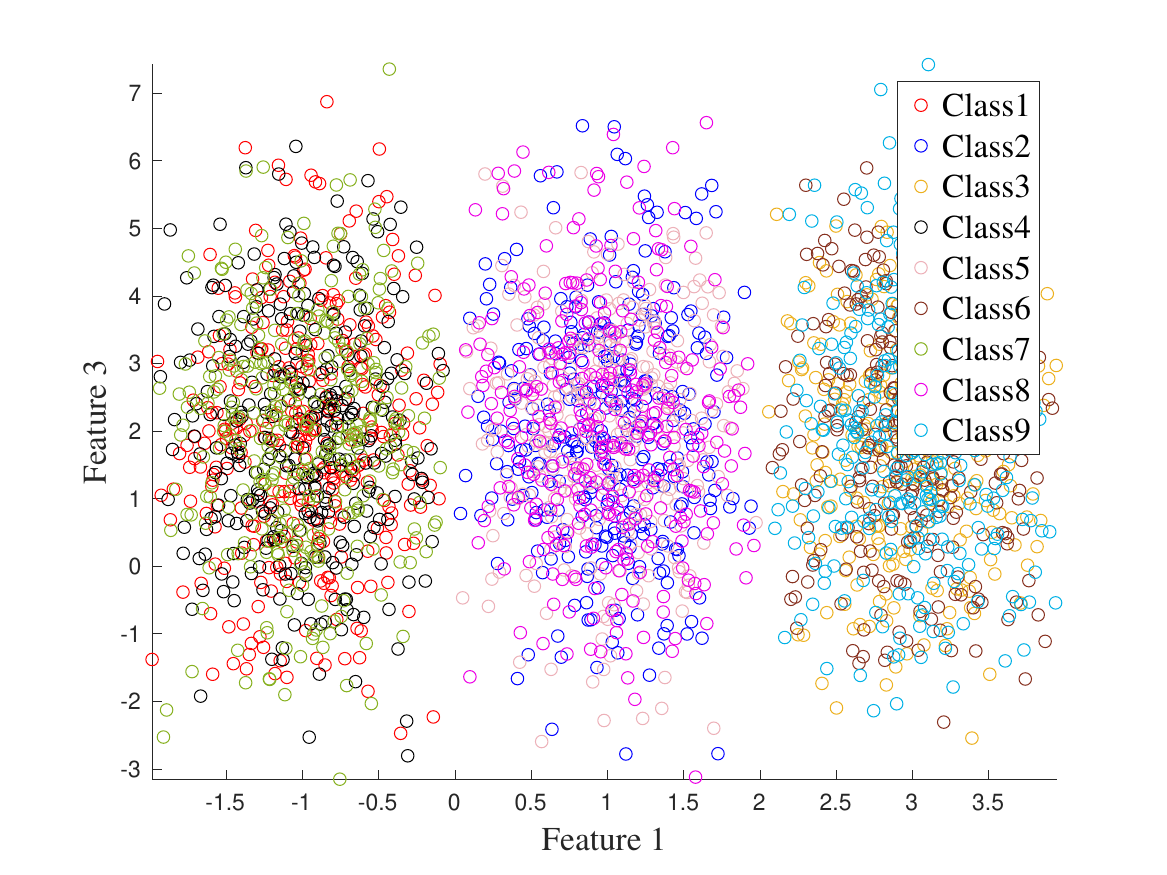}}
  \subfigure[(d) RNE]{
    \includegraphics[width=1.35 in]{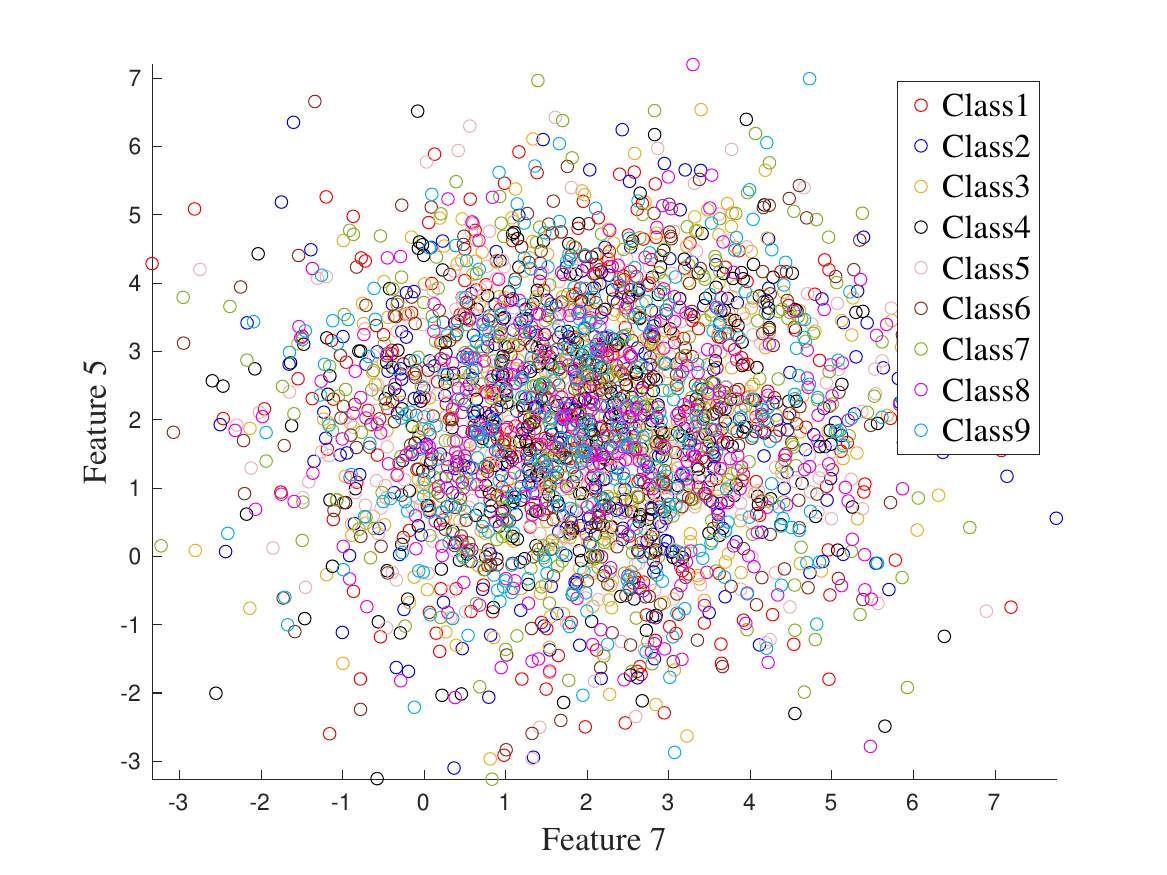}}
        \subfigure[(c) SOGFS]{
    \includegraphics[width=1.35 in]{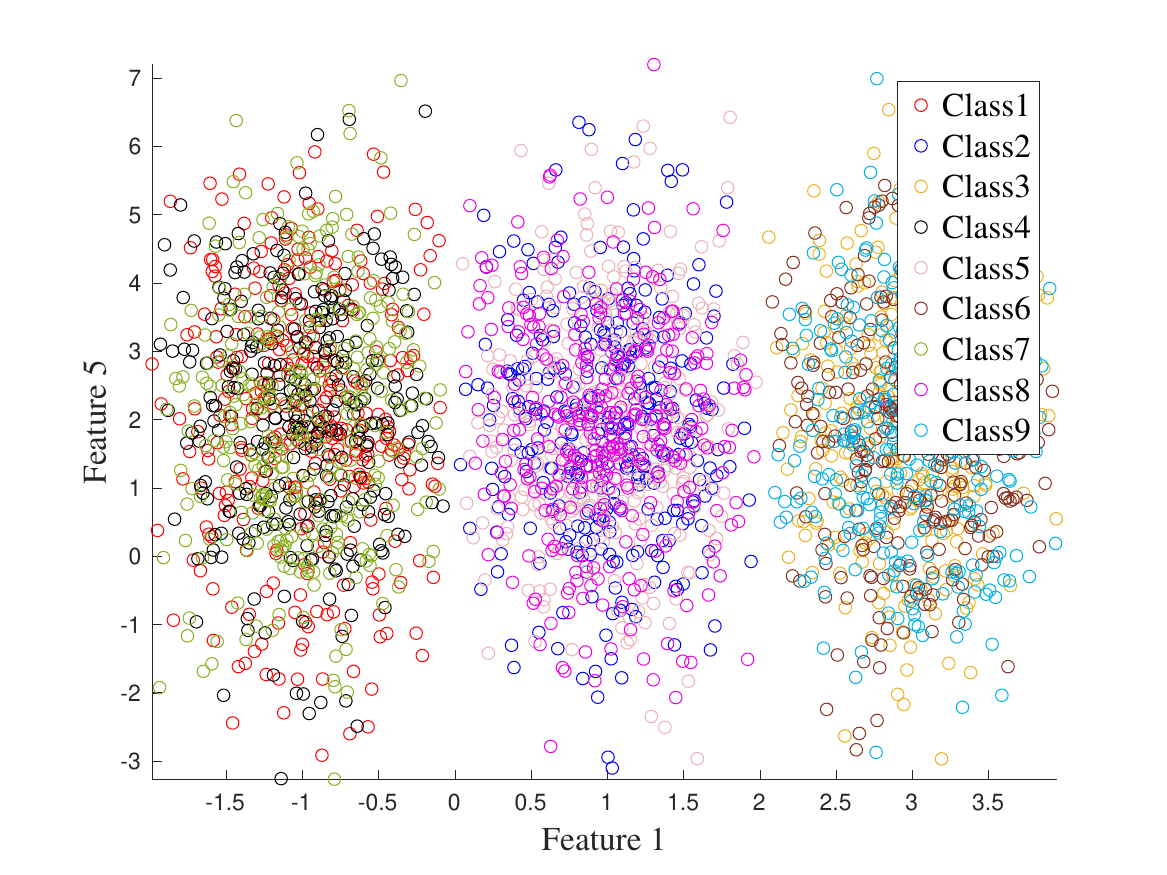}}
  \subfigure[(f) SPCAFS]{
    \includegraphics[width=1.35 in]{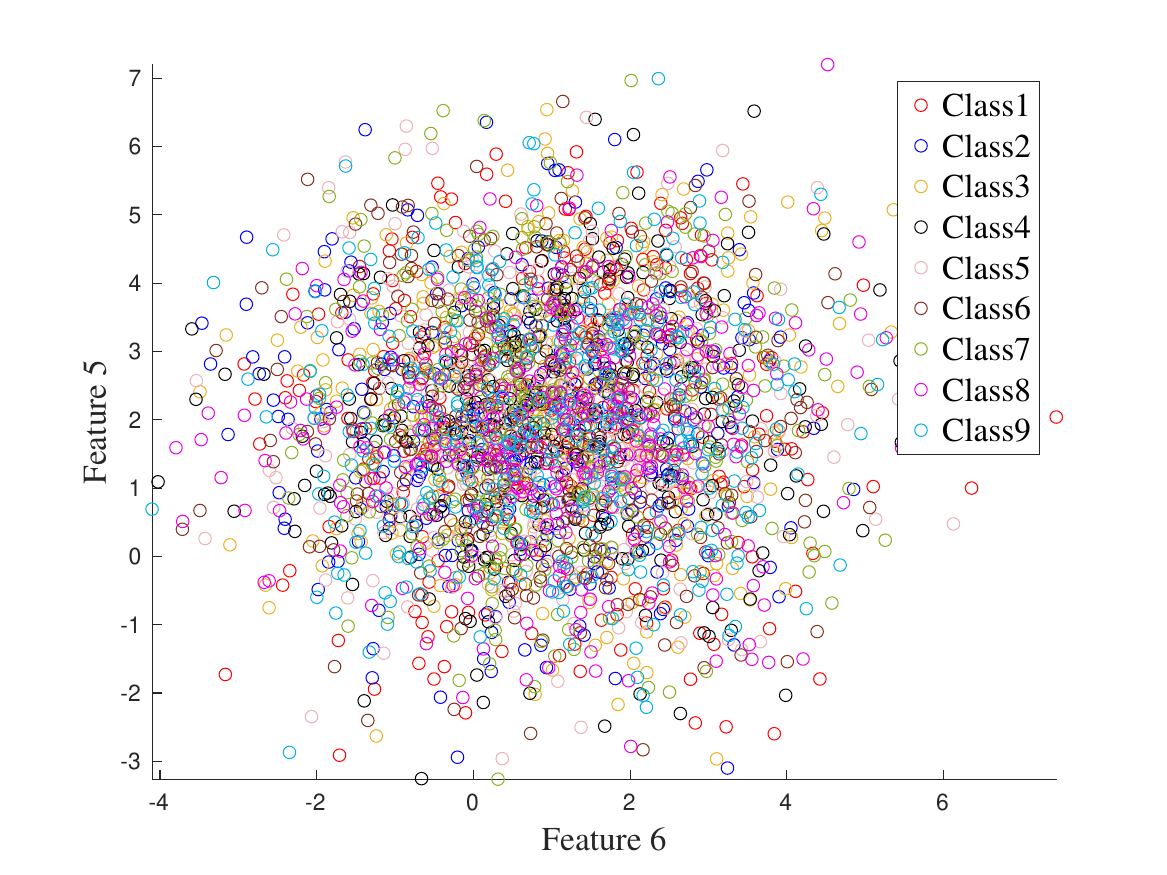}}
  \subfigure[(g) FSPCA]{
    \includegraphics[width=1.35 in]{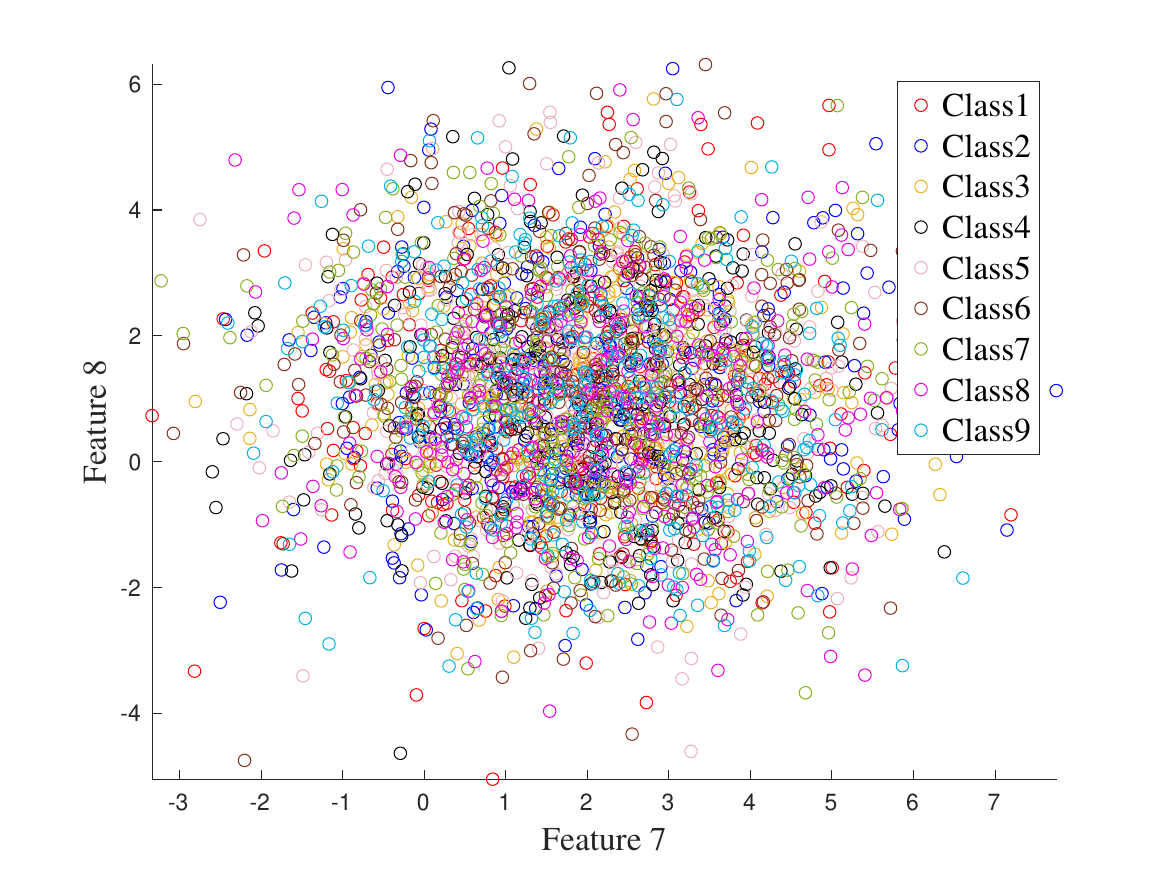}}
  \subfigure[(h) BLSFE]{
    \includegraphics[width=1.35 in]{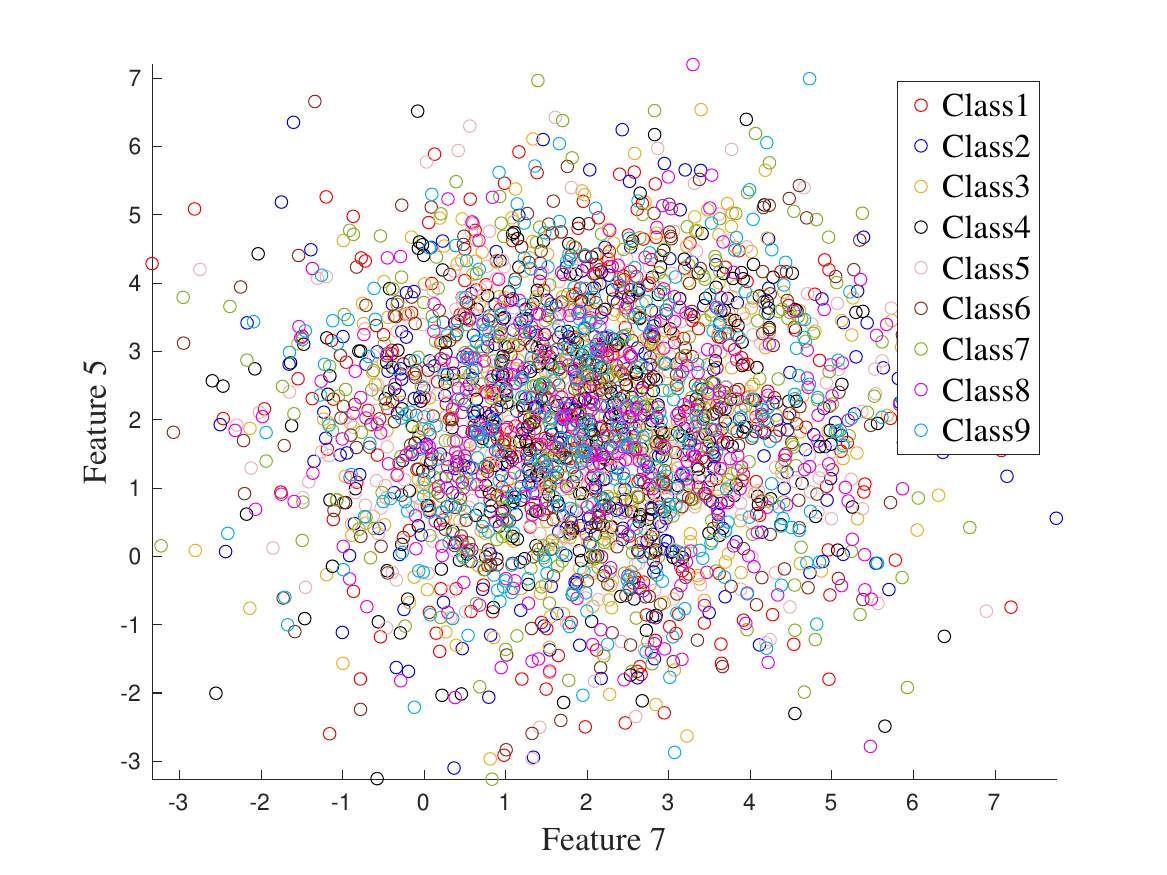}}
    \subfigure[(i) BLSFS]{
    \includegraphics[width=1.35 in]{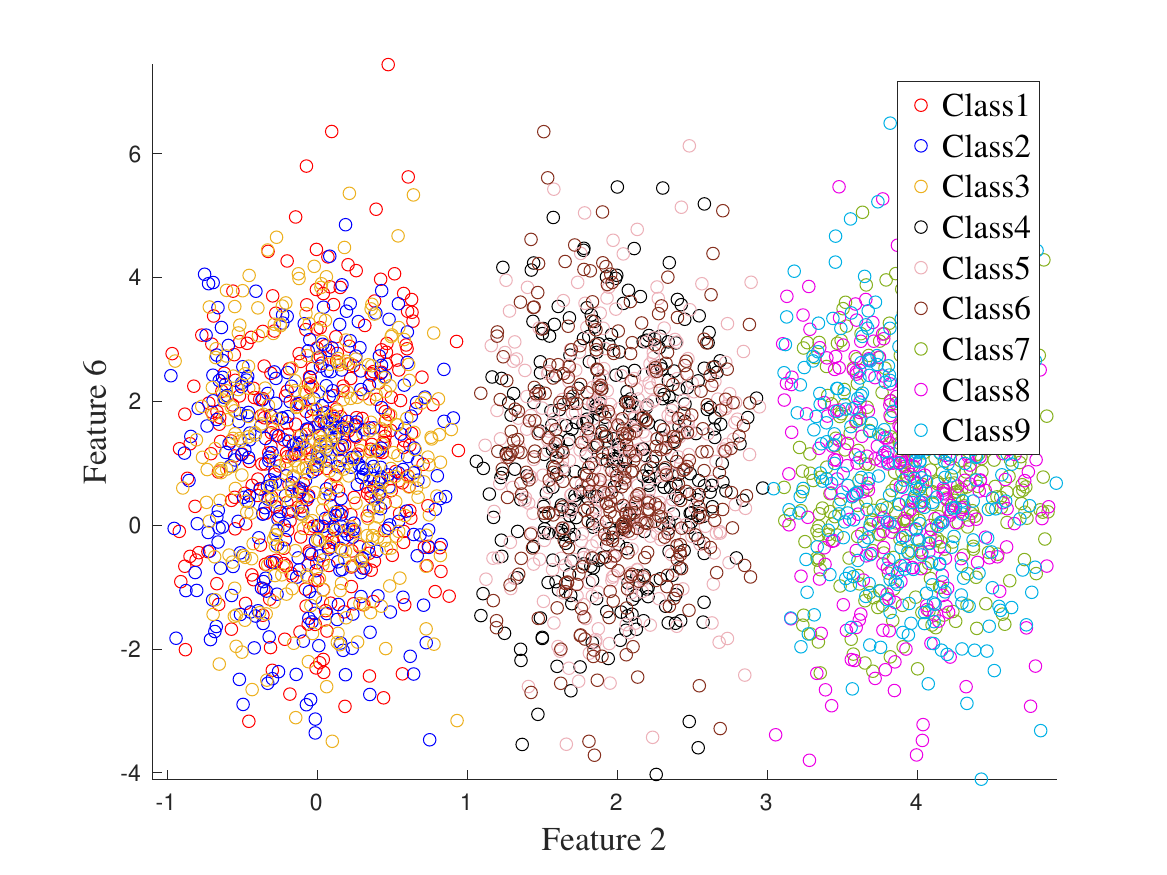}}
  \subfigure[(j) BLUFS]{
    \includegraphics[width=1.35 in]{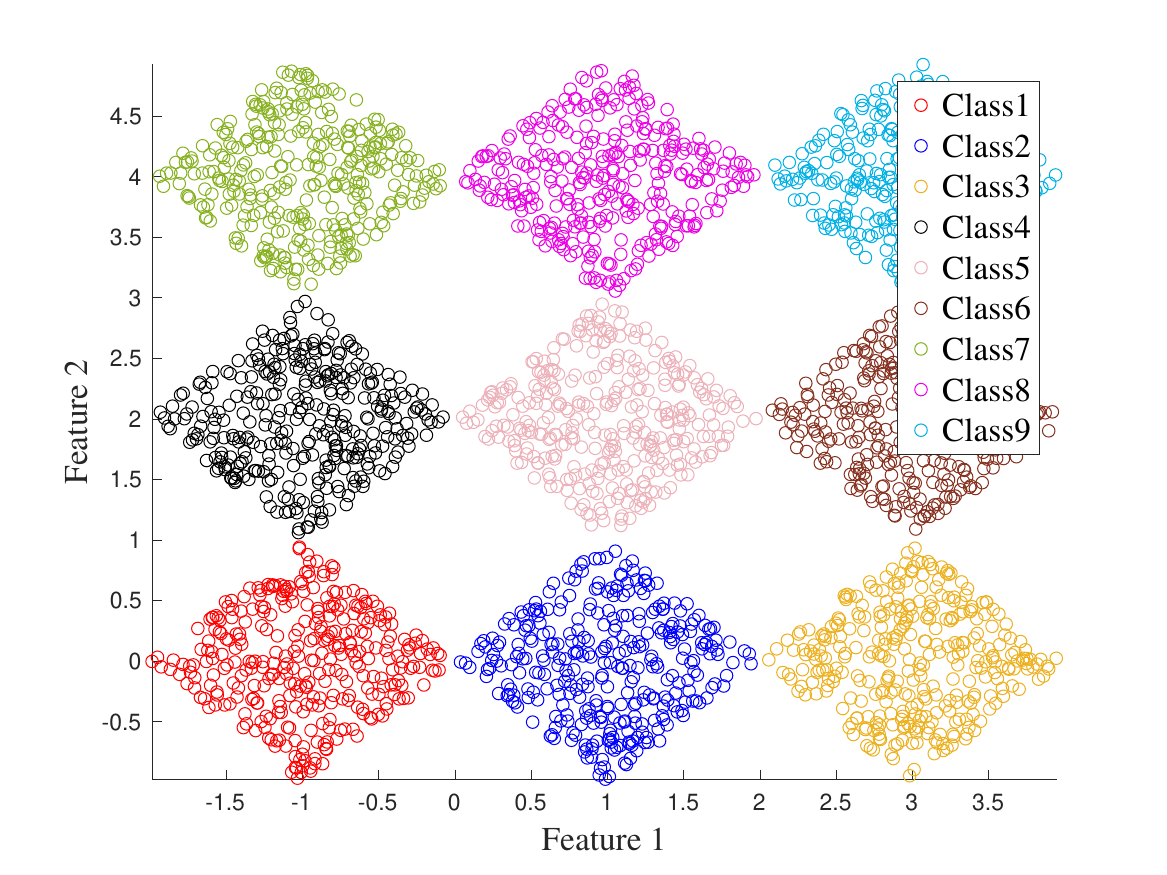}}
  \vskip-0.2cm
  \caption{Visual comparisons on the Banana dataset,  where (a)-(j) are the feature selection results.}
  \label{figure:2}
\end{figure*}

\section{Experiments}\label{Experiments}
This section verifies the superiority of BLUFS by comparing with nine UFS methods
on synthetic datasets\footnote{https://github.com/milaan9/Clustering-Datasets} and real-world datasets\footnote{https://jundongl.github.io/scikit-feature/datasets.html \label{web-data}}. Table \ref{tab:datasets} provides the detailed information of these real-world datasets.



\begin{table}[t]
\centering
\renewcommand\arraystretch{1.2}
\caption{Information of selected datasets.}
\label{tab:datasets}
\begin{tabular}{|c|c|c|c|c|}
\hline
 & ~~Datasets~~ & ~~Features~~ & ~~Samples~~ & ~~Classes~~ \\ \hline\hline
\multirow{2}{*}{Small}
&Isolet & 617 & 1560 & 26 \\ \cline{2-5}
&Jaffe & 676 & 213 & 10 \\ \cline{2-5}\hline
\multirow{3}{*}{Medium}
&pie & 1024 & 1166 & 53 \\ \cline{2-5}
&COIL20 & 1024 & 1440 & 20 \\\cline{2-5}
&MSTAR & 1024 & 2425 & 10 \\\cline{2-5}\hline
\multirow{3}{*}{Large}
& warpAR & 2400 & 130 & 10 \\ \cline{2-5}
&lung & 3312 & 203 & 4 \\ \cline{2-5}
& gisette & 5000 & 7000 & 2 \\ \hline
\end{tabular}
\label{table1}
\end{table}

\subsection{Setup}

\subsubsection{Compared Methods}
In the experiments, the baseline and nine benchmark UFS methods are compared, including 
\begin{enumerate}
  \item $\mathbf{Baseline}$,  which utilizes the original data for clustering.
  \item $\mathbf{LapScore}$ \cite{he2005laplacian},  which evaluates features based on their locality-preserving power.
  \item $\mathbf{MCFS}$ \cite{cai2010unsupervised}, which selects features based on the multi-cluster structure.
   \item $\mathbf{UDFS}$ \cite{yang2011},  which jointly exploits discriminative information and feature correlations for UFS using $\ell_{2, 1}$-norm regularization.
  \item $\mathbf{SOGFS}$ \cite{nie2016unsupervised}, which performs feature selection with structured graph optimization.
  \item $\mathbf{RNE}$ \cite{liu2020robust},  which elects features by using $\ell_{1}$-norm regularization to minimize the reconstruction error, preserving the local manifold structure.
   \item $\mathbf{SPCAFS}$ \cite{9580680},  which performs UFS using a sparse PCA using $\ell_{2,p}$-norm regularization to enhance feature discriminability.
  \item $\mathbf{FSPCA}$ \cite{nie2022learning}, which simultaneously achieves feature selection and PCA through two innovative algorithms.
  \item $\mathbf{BLSFE}$ \cite{zhou2023bi},  which enhances feature selection performance through integration at both the feature and clustering levels,  significantly improving clustering results.
  \item $\mathbf{BLSFS}$ \cite{hu2024bi}, which improves the performance of feature selection through collaborative optimization at both classification and feature levels.
\end{enumerate}

\begin{figure*}[t]
	\makeatletter
	\renewcommand{\@thesubfigure}{\hskip\subfiglabelskip}
	\makeatother
	\centering
    \subfigure[(c) Isolet]{
		\includegraphics[width=1.7 in]{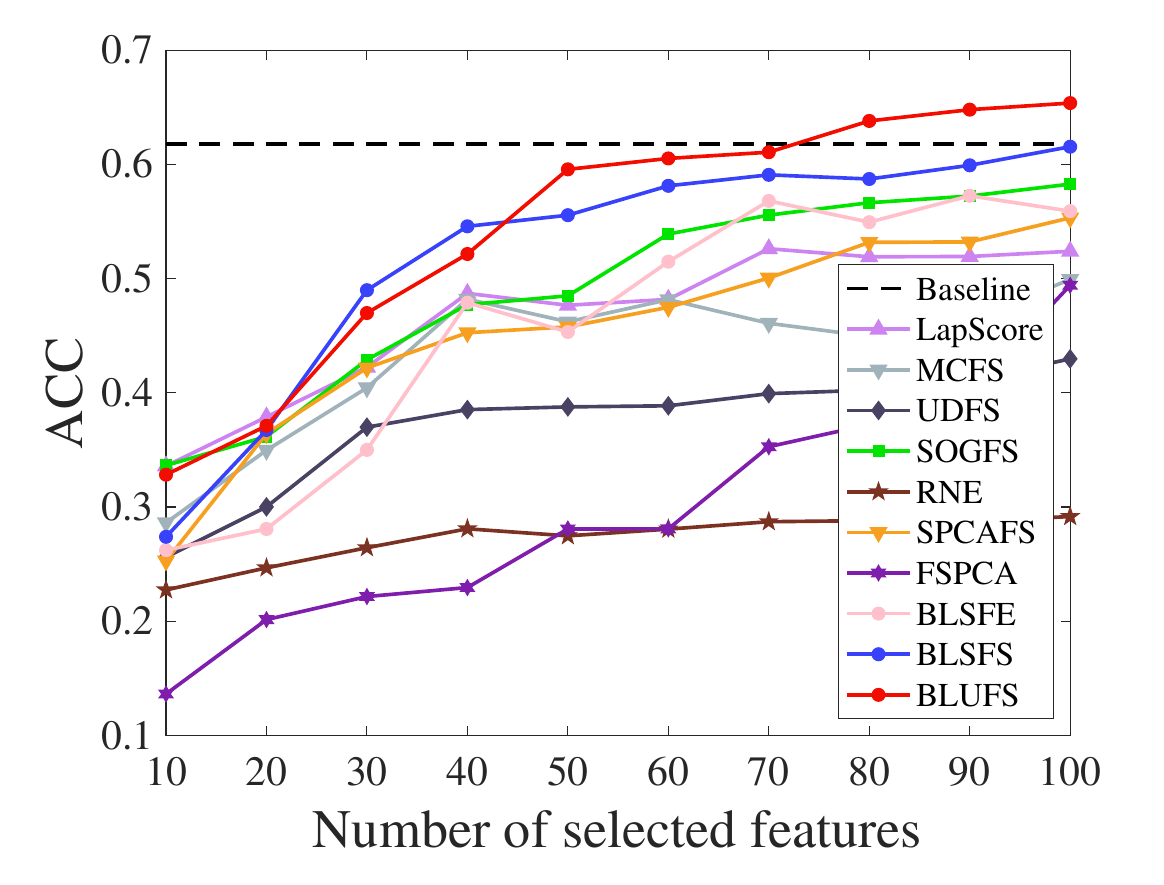}}
        \subfigure[(b) Jaffe]{
		\includegraphics[width=1.7 in]{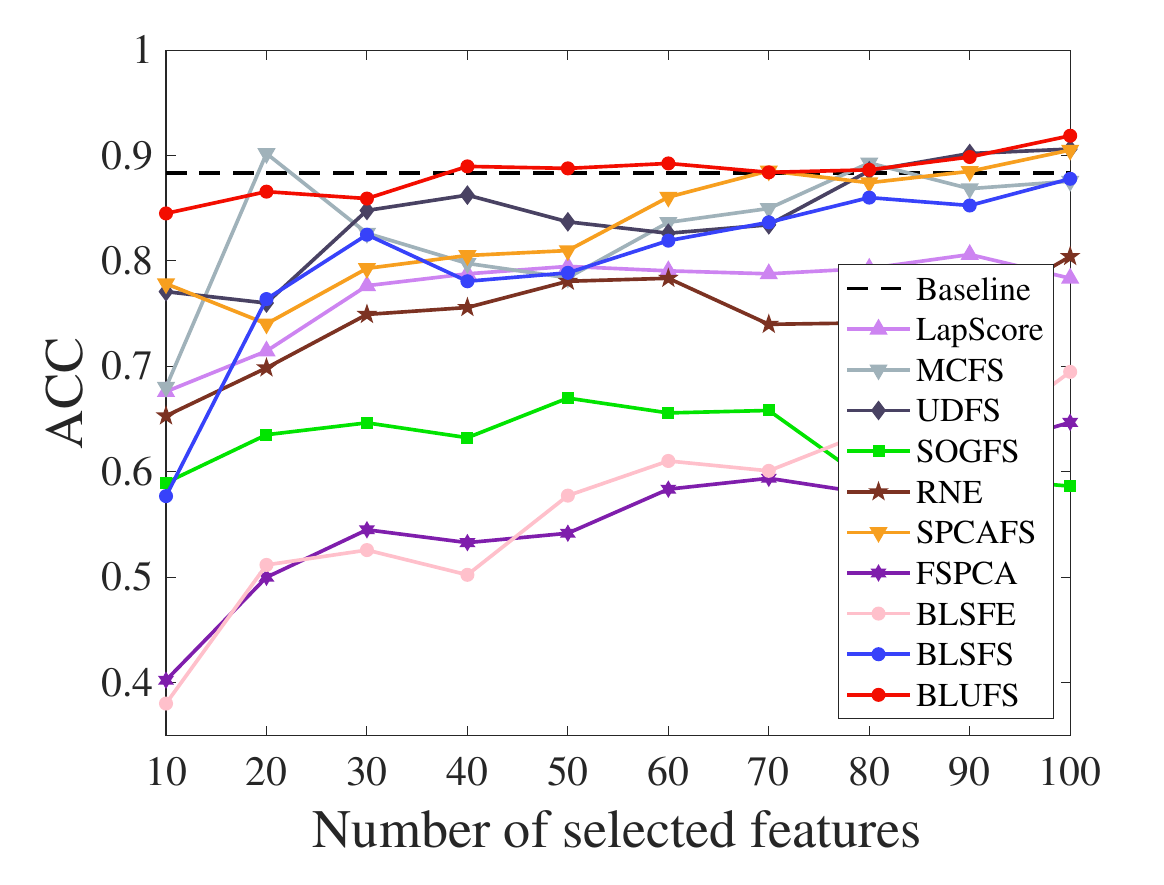}}
         \subfigure[(f) pie]{
		\includegraphics[width=1.7 in]{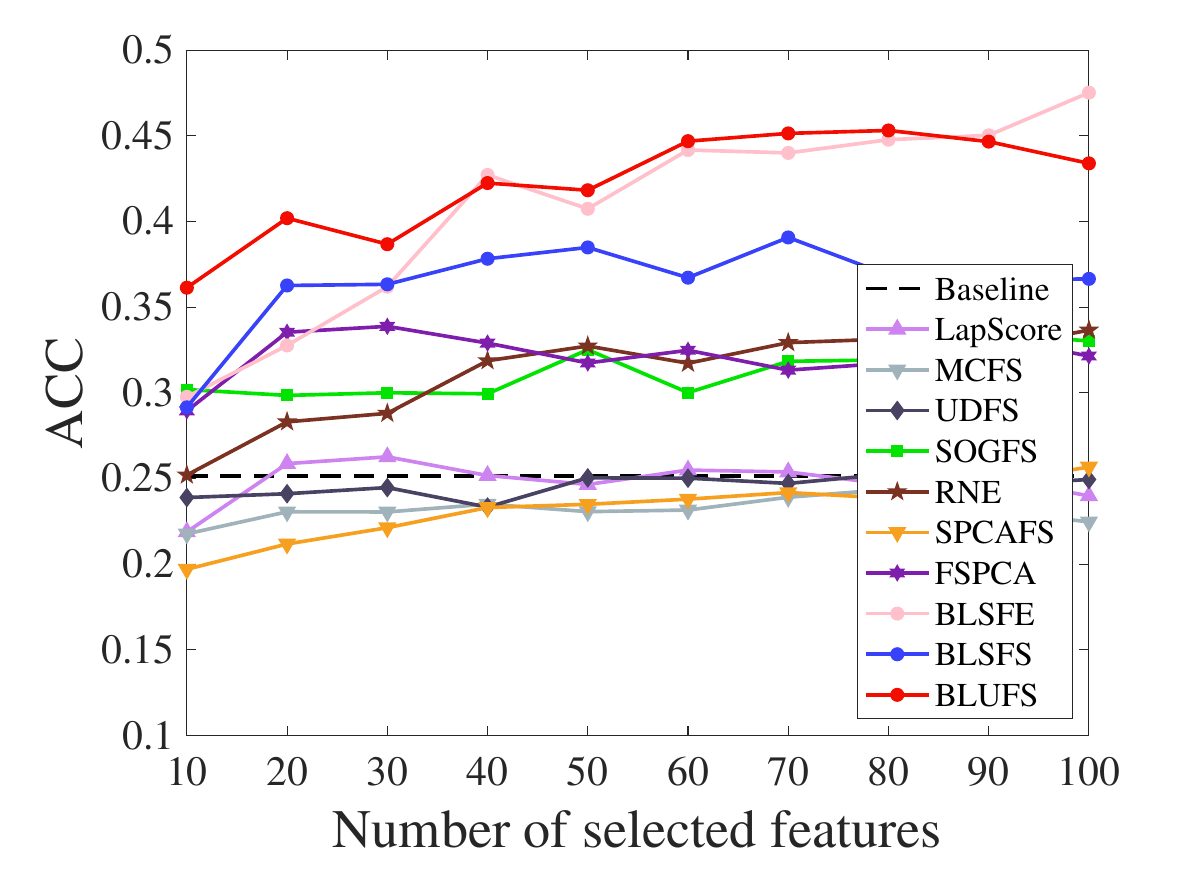}}
	\subfigure[(a) COIL20]{
		\includegraphics[width=1.7 in]{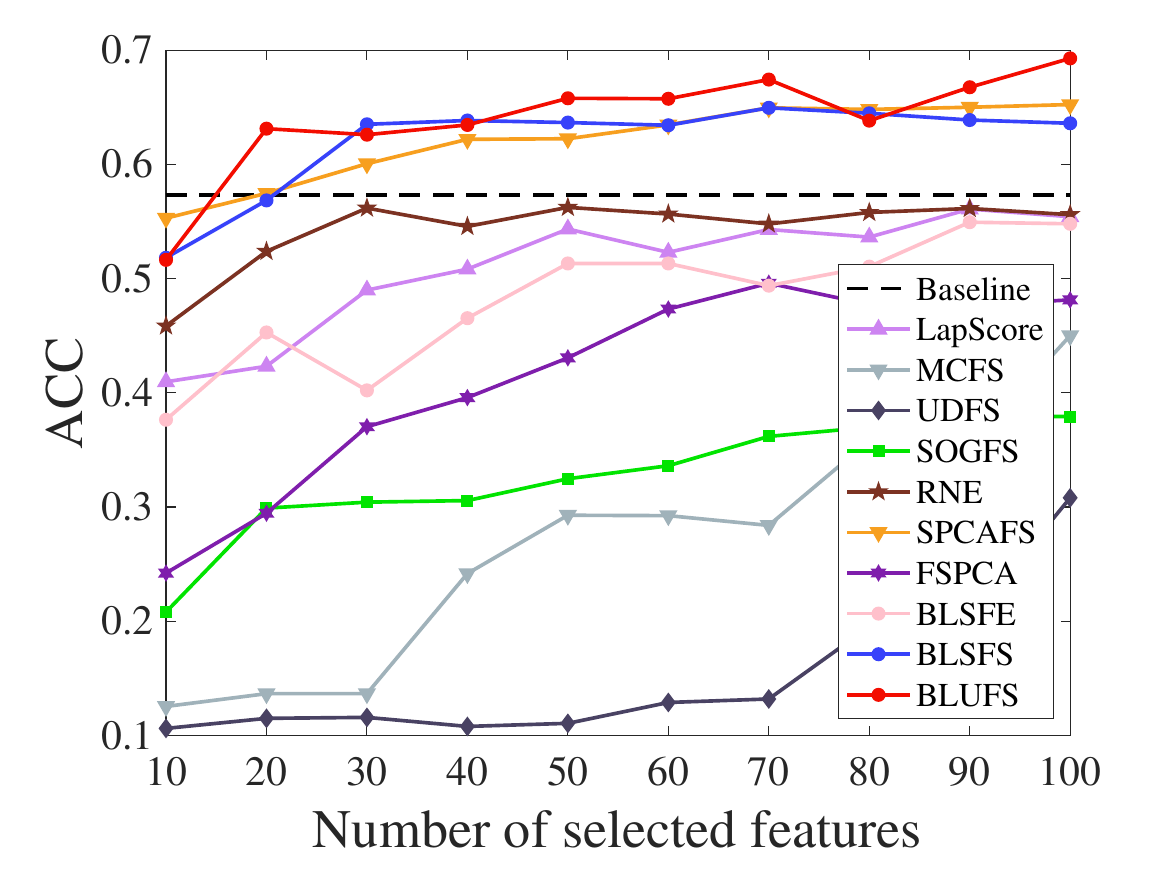}}
	\subfigure[(h) MSTAR]{
		\includegraphics[width=1.7 in]{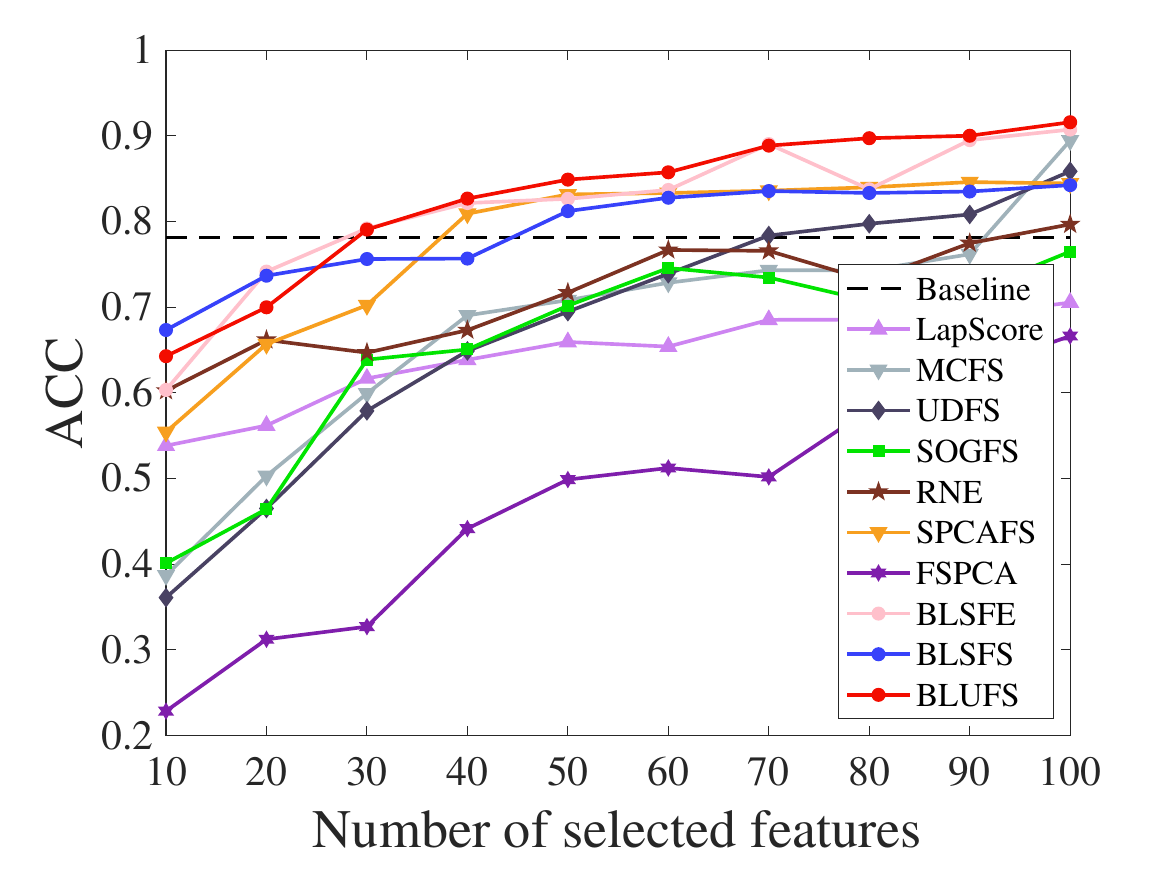}}
    \subfigure[(d) warpAR]{
		\includegraphics[width=1.7 in]{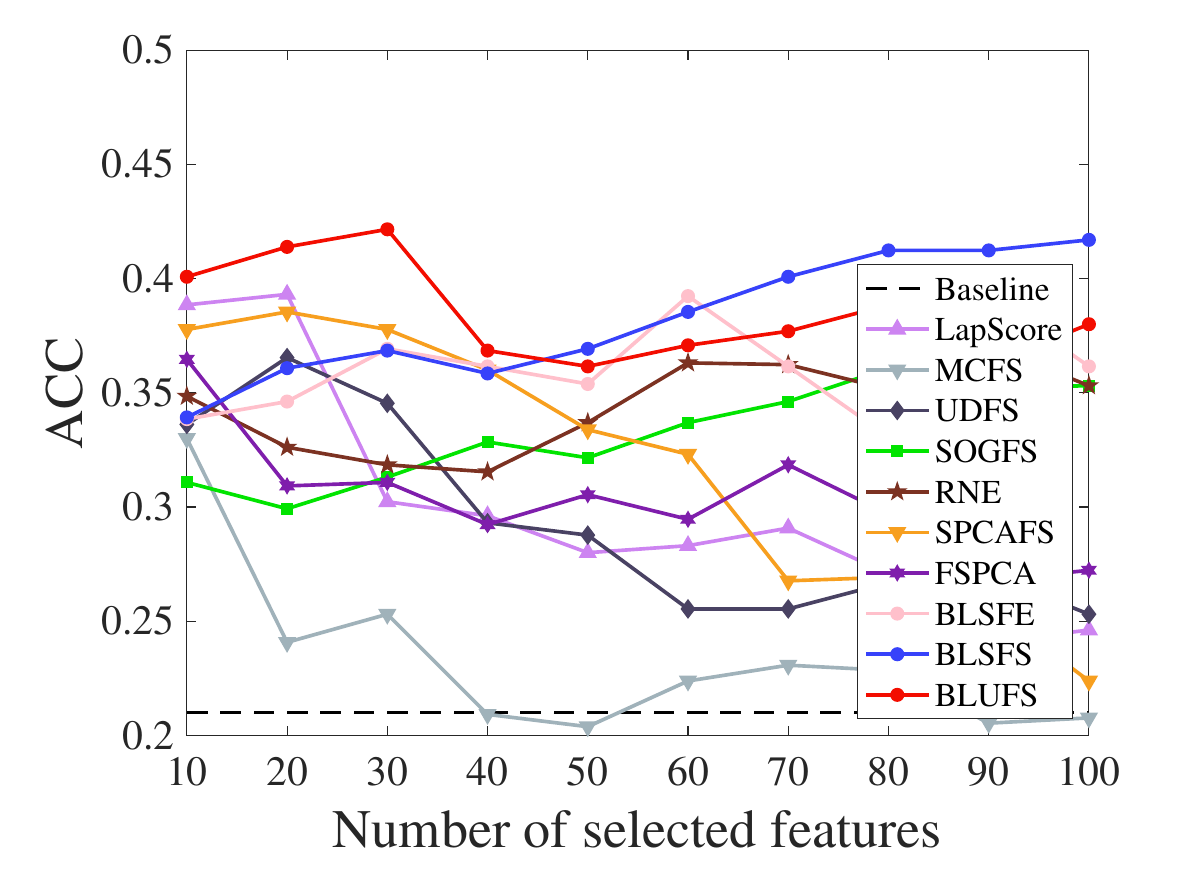}}
	\subfigure[(e) lung]{
		\includegraphics[width=1.7 in]{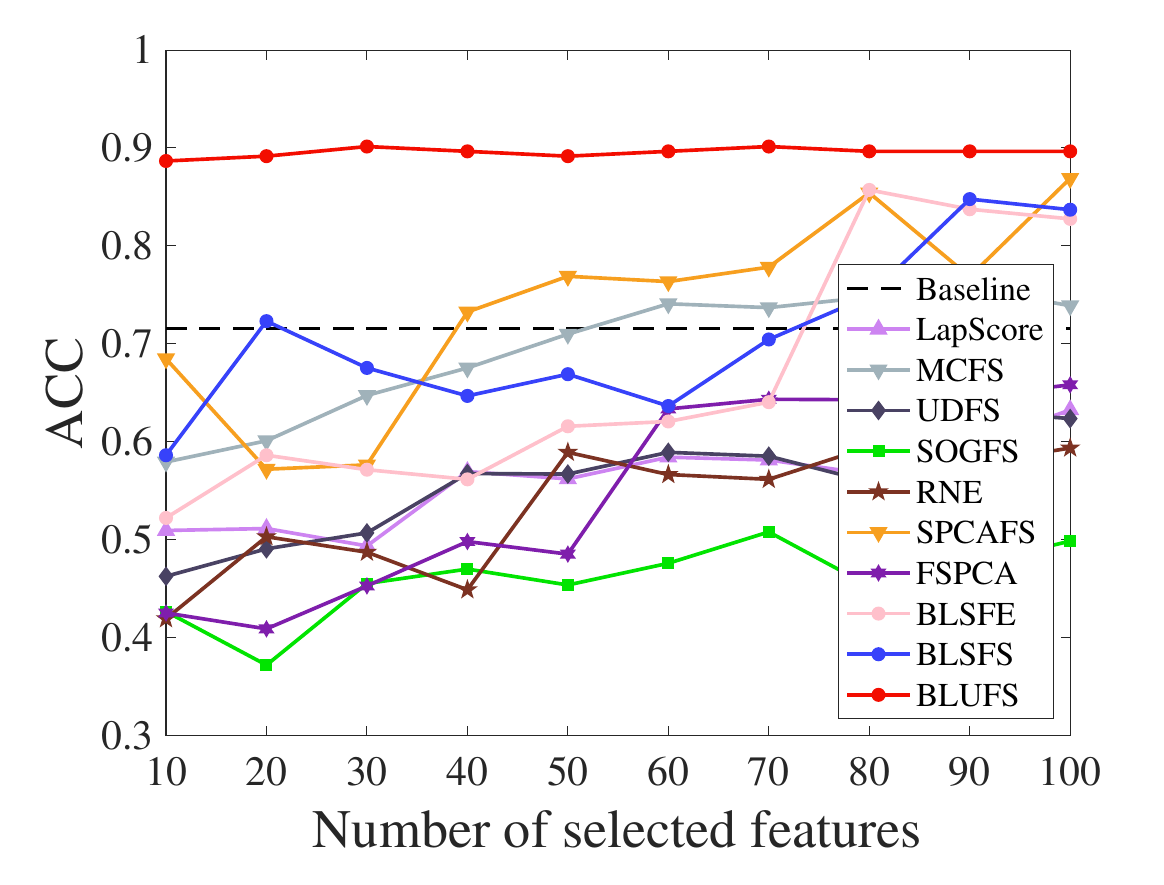}}
	\subfigure[(g) gisette]{
		\includegraphics[width=1.7 in]{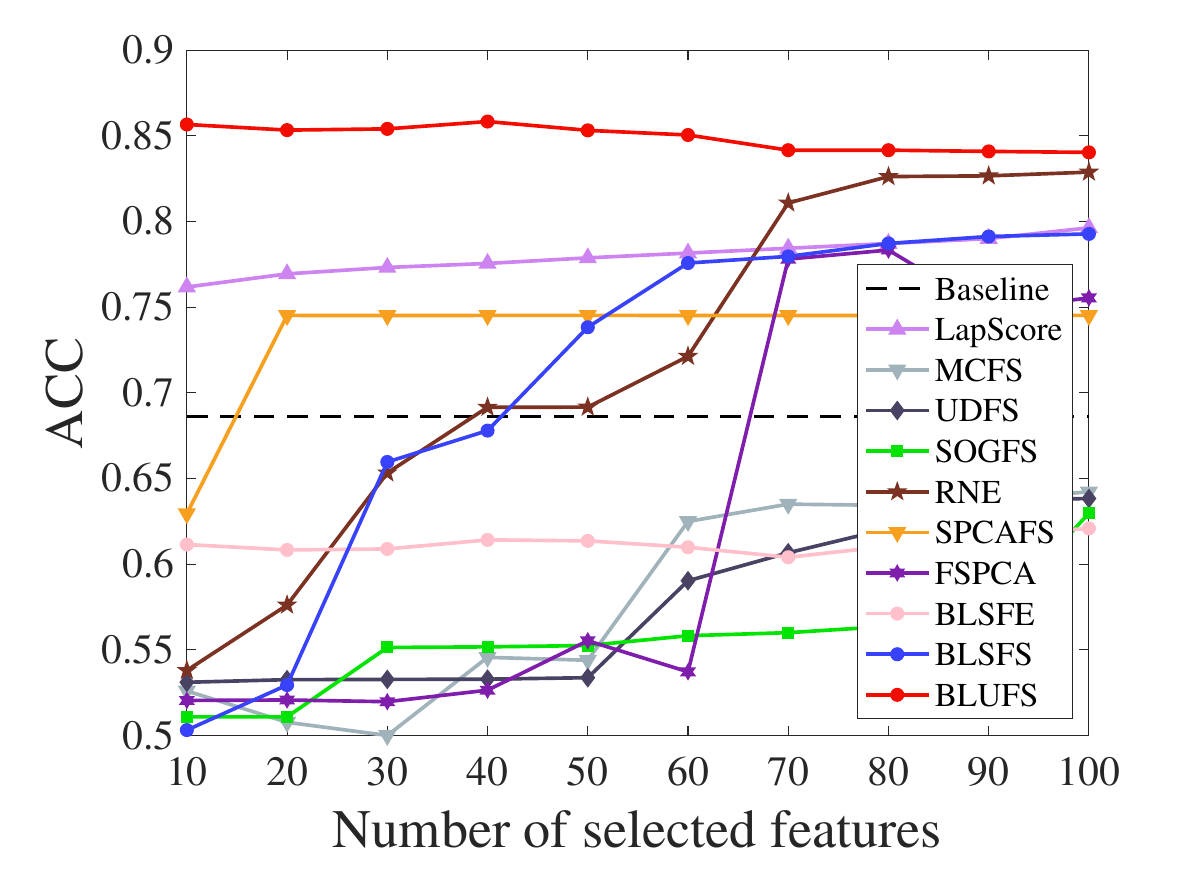}}
	\vskip-0.2cm
	\caption{Visual comparisons of the ACC metric under different datasets with different numbers of selected features.}
	\label{figure:3}
\end{figure*}

\begin{figure*}[t]
	\makeatletter
	\renewcommand{\@thesubfigure}{\hskip\subfiglabelskip}
	\makeatother
	\centering
    \subfigure[(c) Isolet]{
		\includegraphics[width=1.7 in]{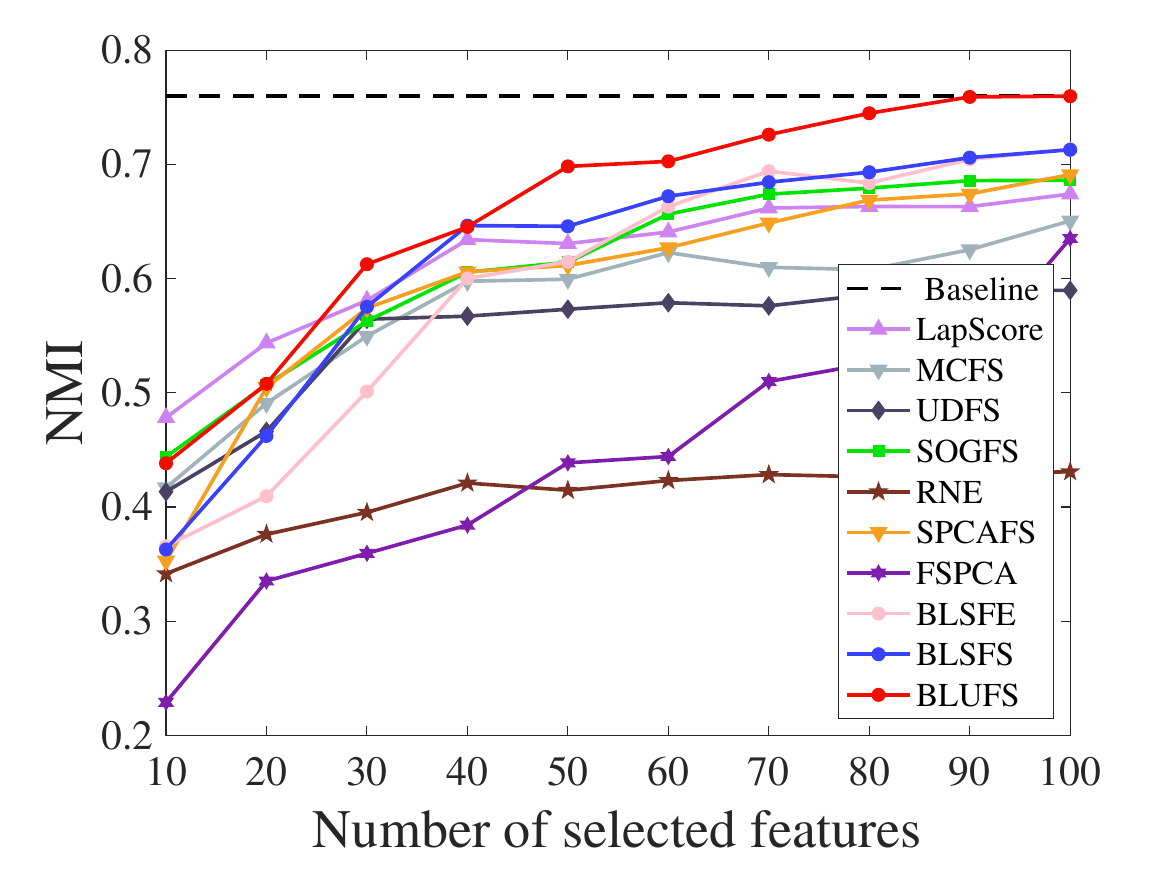}}
        \subfigure[(b) Jaffe]{
		\includegraphics[width=1.7 in]{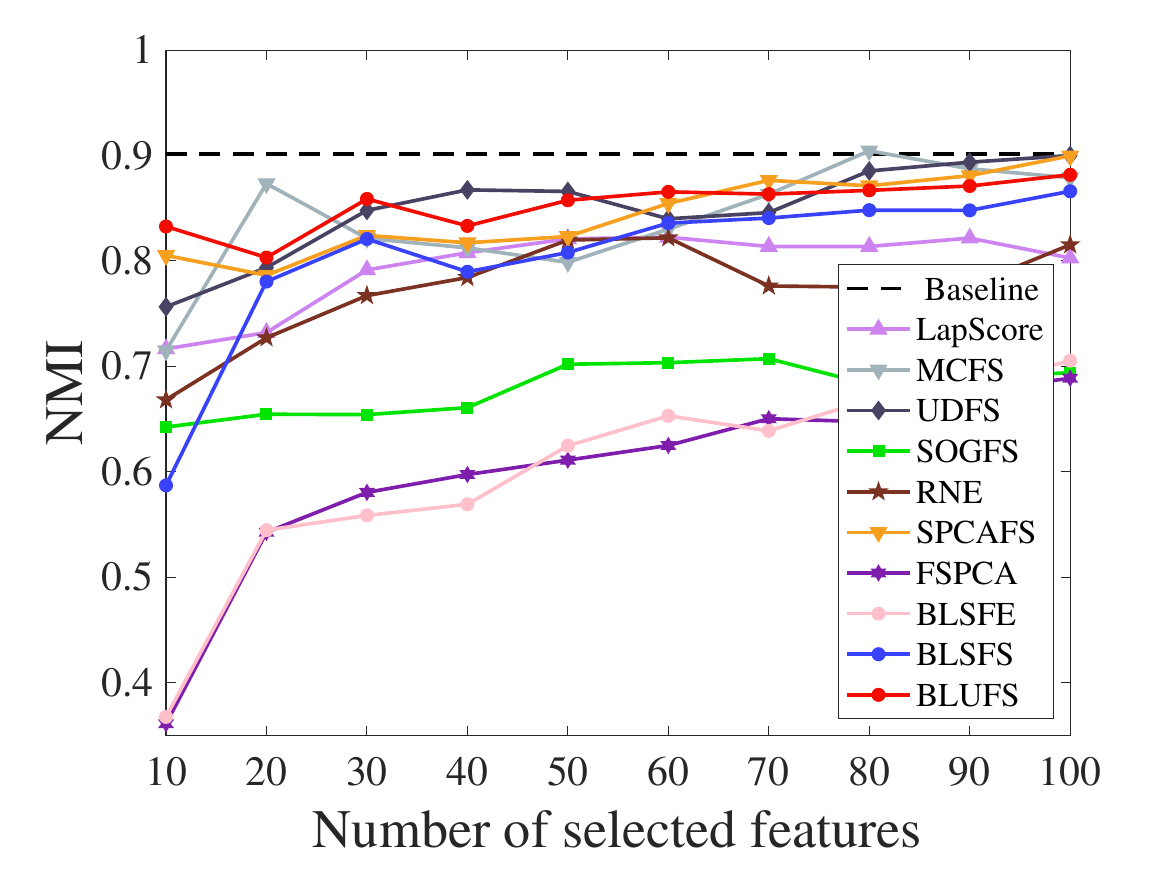}}
         \subfigure[(f) pie]{
		\includegraphics[width=1.7 in]{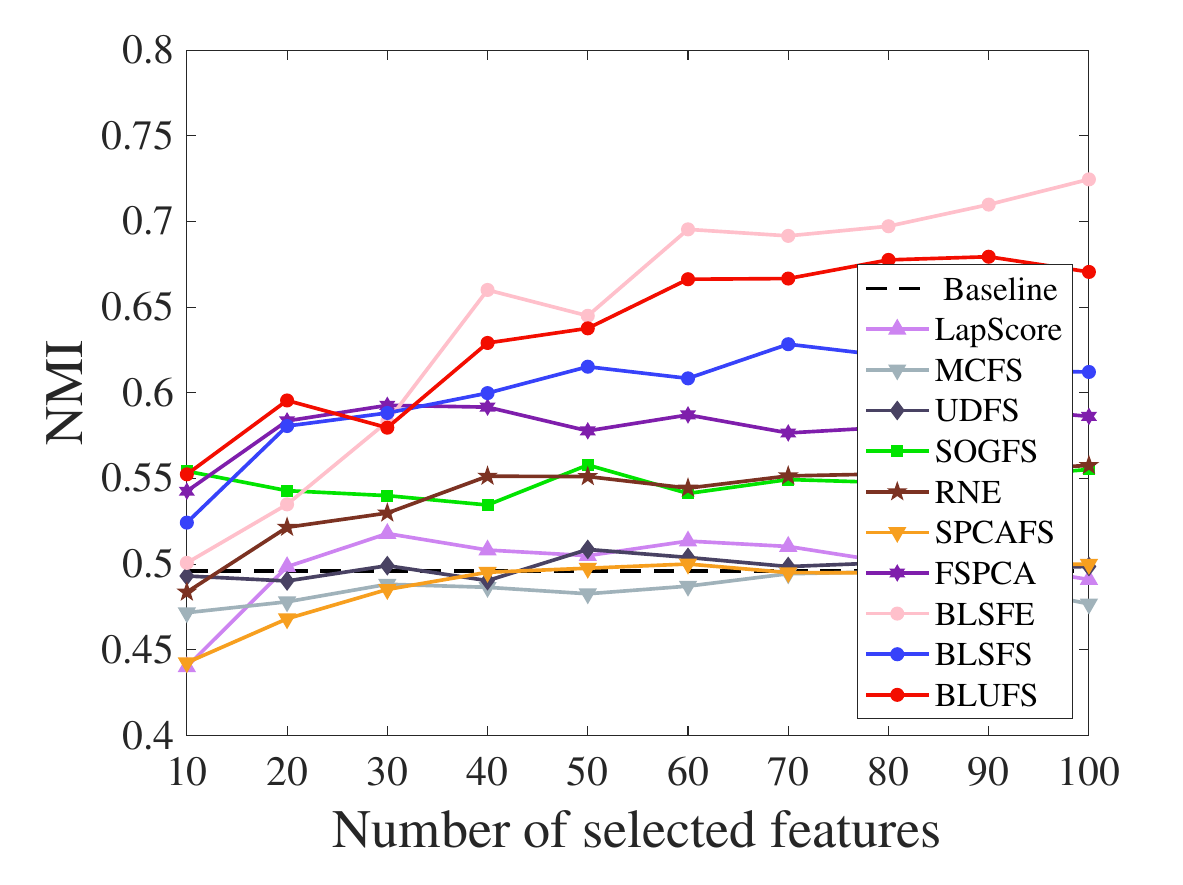}}
	\subfigure[(a) COIL20]{
		\includegraphics[width=1.7 in]{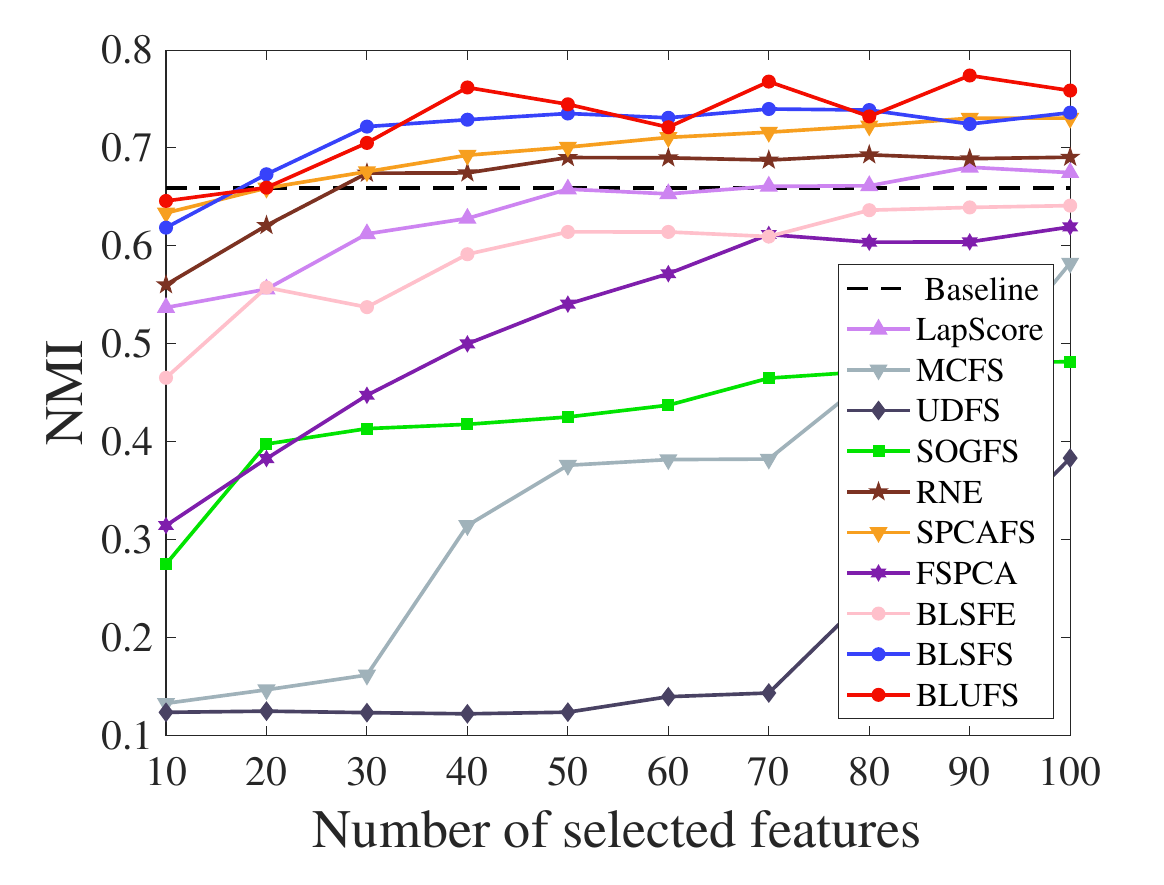}}
	\subfigure[(h) MSTAR]{
		\includegraphics[width=1.7 in]{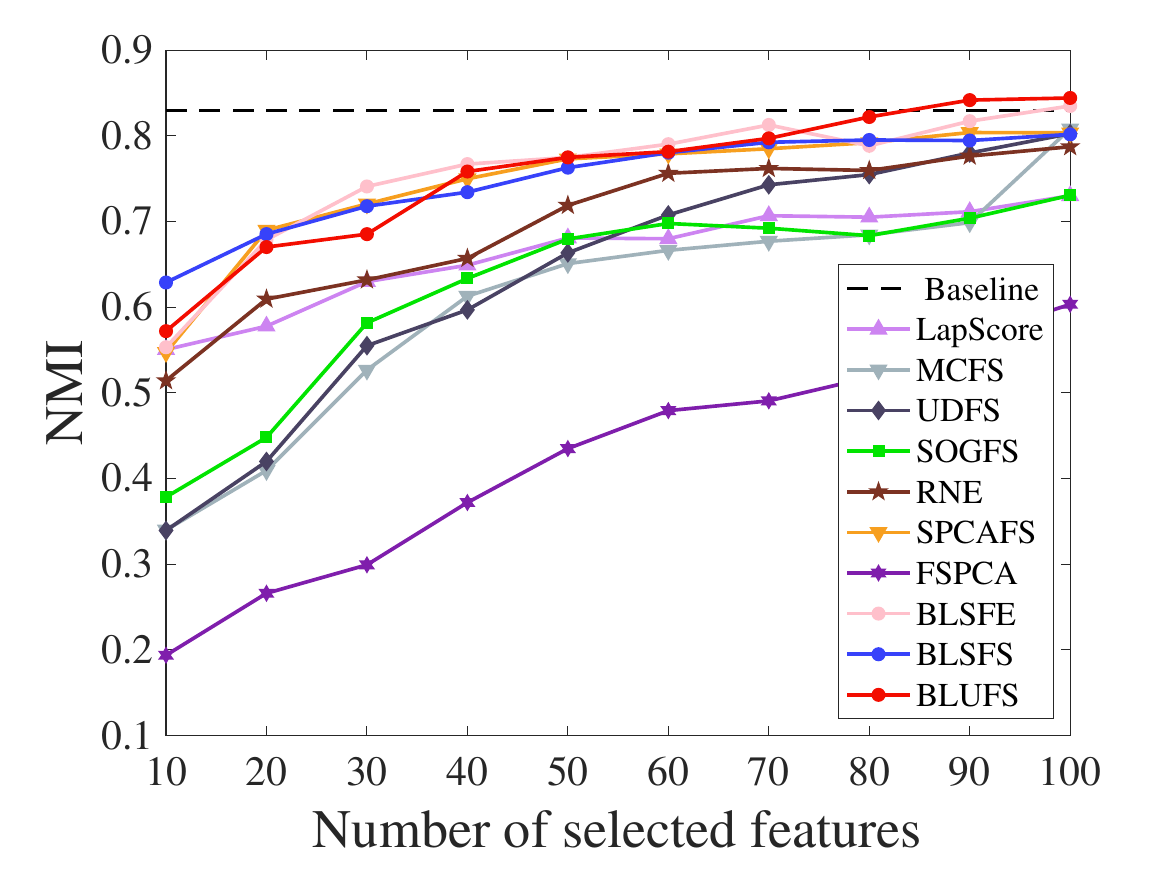}}
    \subfigure[(d) warpAR]{
		\includegraphics[width=1.7 in]{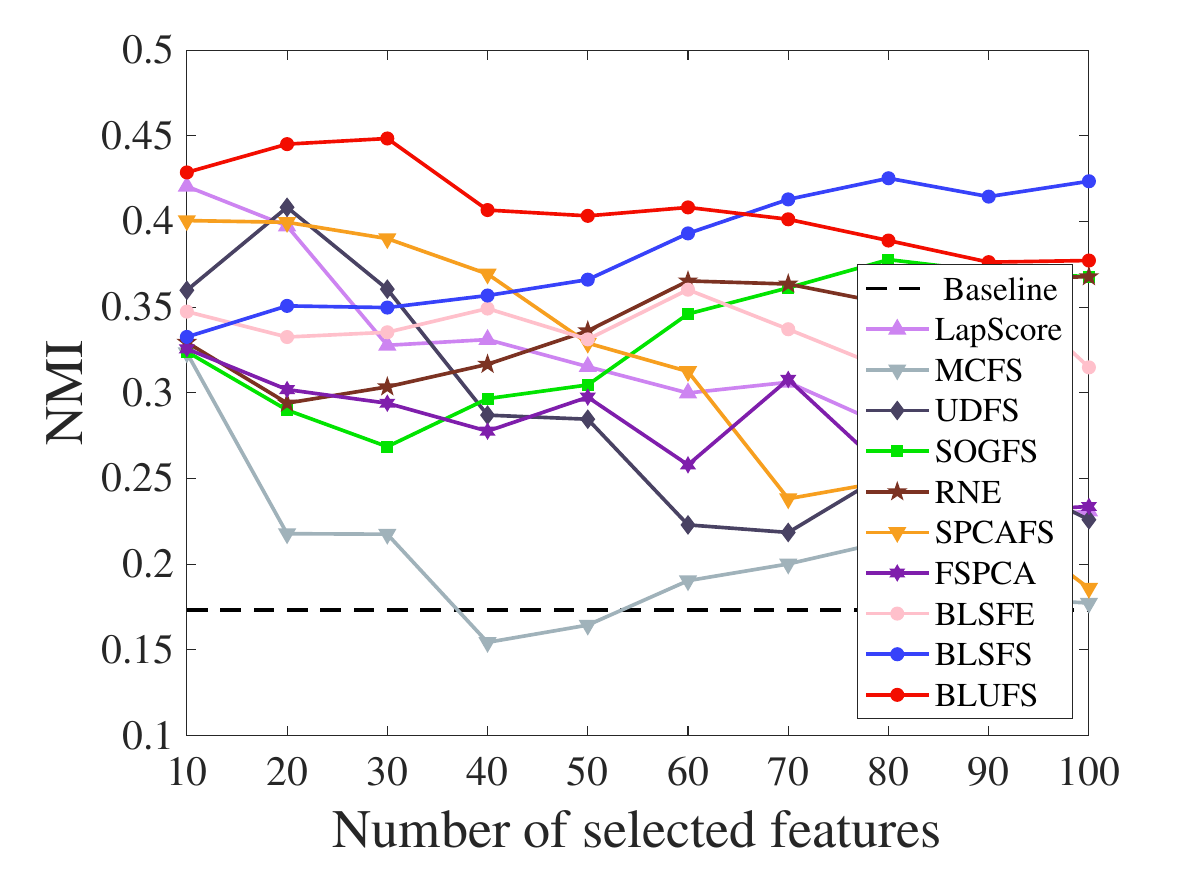}}
	\subfigure[(e) lung]{
		\includegraphics[width=1.7 in]{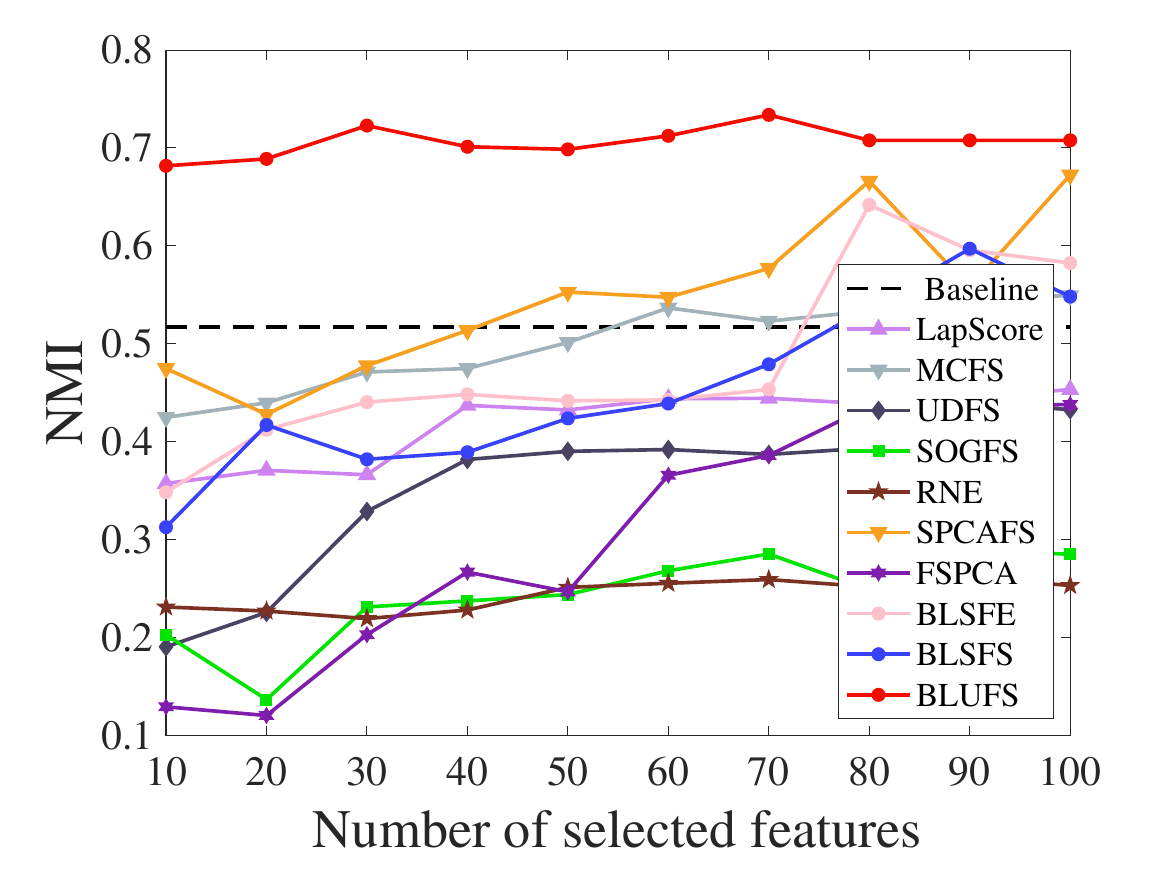}}
	\subfigure[(g) gisette]{
		\includegraphics[width=1.7 in]{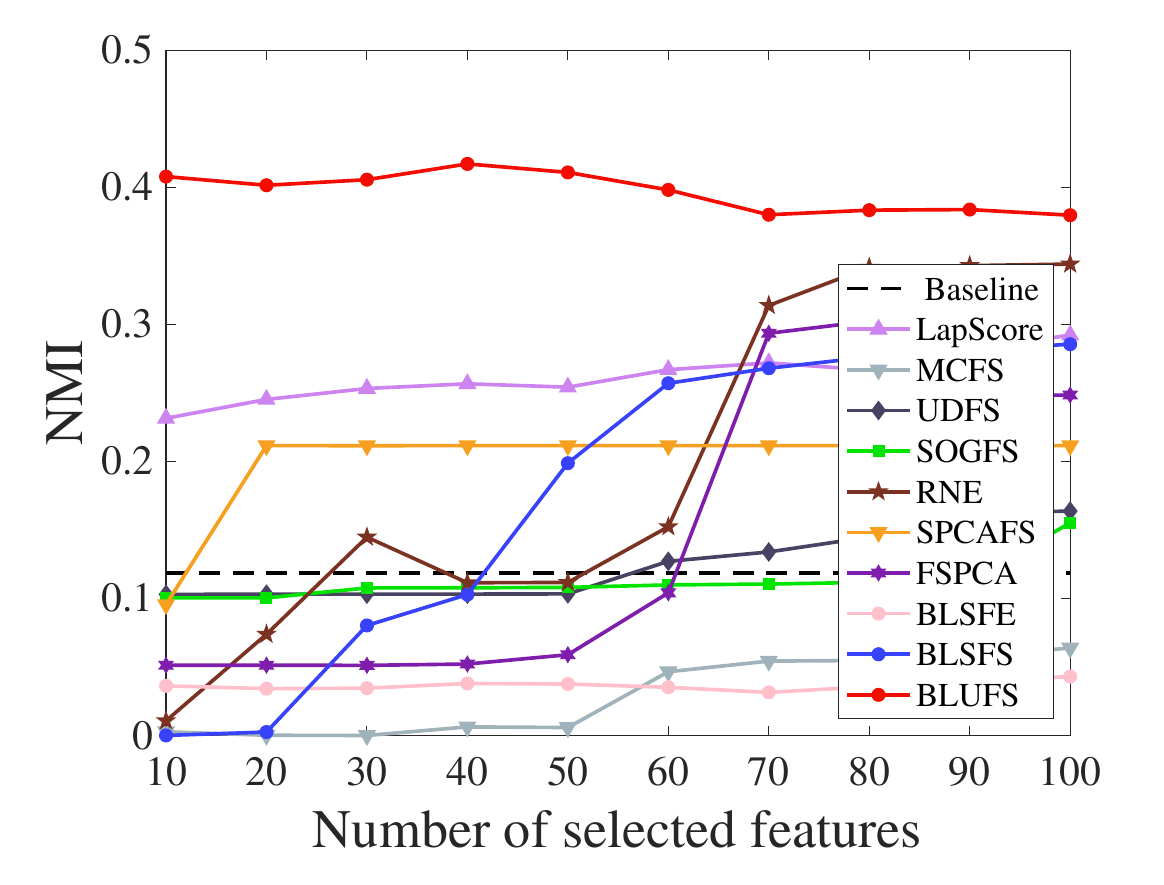}}
	\vskip-0.2cm
	\caption{Visual comparisons of the NMI metric under different datasets with different numbers of selected features.}
	\label{figure:4}
\end{figure*}

\begin{table*}[t]
  \centering
  \renewcommand\arraystretch{1.2}
  \caption{Average ACC (mean \% ± std \% ) and number of selected features of $k$-means clustering. The best results are marked in bold.}
  \resizebox{\textwidth}{!}{
  \label{tab:result1}
  \begin{tabular}{|c|c|c|c|c|c|c|c|c|c|c|c|}
  \hline
  Datasets & Baseline & LapScore & MCFS &UDFS  & SOGFS & RNE & SPCAFS & FSPCA & BLSFE & BLSFS & BLUFS \\
  \hline \hline
   \multirow{2}{*}{Isolet}
   & \multirow{2}{*}{61.79}  & 46.71±4.12  & 43.38±5.34  &  37.29±3.10 &49.05±1.67   &27.29±2.89  & 45.41±4.23  & 29.70±5.45 & 45.88±1.23 &  52.06±1.78 &  \textbf{54.41±2.90}  \\
  &  & (70) & (100) & (100) & (100) & (100) & (100) & (100) & (100) &  (100)&  \textbf{(100)} \\
  \hline
   \multirow{2}{*}{Jaffe}
   & \multirow{2}{*}{88.36}  & 77.11±5.12 & 83.14±1.45  &  84.32±4.10 &62.59±2.67   &74.56±3.89   & 83.37±5.23  & 55.45±1.56 & 56.67±2.38  & 79.82±2.78   &  \textbf{88.28±1.90}  \\
  &  & (90) & (20) &  (100)&(50)  &  (100)& (100) & (100) & (100) & (100) &  \textbf{(100)} \\
  \hline
   \multirow{2}{*}{pie}
   & \multirow{2}{*}{25.13}  & 24.84±5.53  & 23.14±1.93  & 24.53±5.63  &31.28±3.41   &   31.09±4.52& 23.20±1.94  & 32.19±3.51 &  40.76±1.31  & 36.06±4.62  &  \textbf{42.22±1.73}  \\
  &  & (70) & (80) & (100) & (100) &  (90)& (100) & (10) &  (100)& (70) &  \textbf{(100)} \\
  \hline
  \multirow{2}{*}{COIL20}
  & \multirow{2}{*}{57.32}  & 50.91±1.23  & 26.71±2.45   & 15.20±5.10  &32.68±3.67   &   54.31±4.89&  62.06±1.34 & 41.37±2.56 & 48.24±1.52 & 61.99±3.78  &  \textbf{63.95±2.90}  \\
  &  & (90) & (100) & (70) & (100) &  (50)&  (100) & (70) &(90) & (70) &  \textbf{(100)} \\
  \hline
  \multirow{2}{*}{MSTAR}
   & \multirow{2}{*}{78.11}  & 64.34±3.81  & 67.57±4.92  & 67.34±3.91  &65.25±5.03   &   71.35±1.98& 77.53±5.02 & 46.93±6.03 &  81.51±2.19  & 79.08±1.99  &  \textbf{82.67±2.01}  \\
  &  & (100) & (100) & (100) & (100) &  (100)& (90) & (100) &  (90)& (100) &  \textbf{(100)} \\
  \hline
  \multirow{2}{*}{warpAR}
   & \multirow{2}{*}{21.00}  & 29.92±3.12  & 23.33±4.34  &  29.29±3.01 & 33.22±5.56  &   34.46±1.89& 31.75±4.13 & 30.32±5.24 & 36.08±2.34 &  38.24±1.90  &  \textbf{38.45±2.11}  \\
  &  & (20) & (10) & (20) & (70) &  (100)& (20) & (10) & (80) &  (100) &  \textbf{(30)} \\
  \hline
  \multirow{2}{*}{lung}
   & \multirow{2}{*}{71.57}  & 56.06±4.22  & 69.34±5.33  & 55.83±4.32  & 45.86±1.91  &   53.40±3.21&  73.68±5.43  & 54.94±1.92 & 66.40±1.32 & 69.85±3.31   &  \textbf{89.56±1.42} \\
  &  & (100) & (90) & (90) & (70) &  (80)&  (100) & (100) & (70) & (100) &  \textbf{(70)} \\
  \hline
  \multirow{2}{*}{gisette}
   & \multirow{2}{*}{68.62}  &  77.98±1.95 & 71.64±5.83  & 55.55±4.72  & 57.56±1.96   &   57.97±3.61& 73.36±3.71  & 62.44±4.82 & 61.18±1.22 & 70.35±5.93  &  \textbf{84.90±1.97}  \\
  &  &  (100) & (100) & (100) & (100) & (100) & (100) & (80) & (90) & (100) &  \textbf{(40)} \\
  \hline\hline
  Average
   & 58.98  & 53.48 & 49.32  & 46.42  & 46.93  &  52.26 &58.79 & 44.16 & 54.59 & 60.93&  \textbf{68.05} \\
  \hline
  \end{tabular}
  }
\label{table:2}
\end{table*}

\begin{table*}[t]
  \centering
  \renewcommand\arraystretch{1.2}
  \caption{Average NMI (mean \% ± std \% ) and number of selected features of $k$-means clustering. The best results are marked in bold.}
 \resizebox{\textwidth}{!}{
  \label{tab:result2}
  \begin{tabular}{|c|c|c|c|c|c|c|c|c|c|c|c|}
  \hline
  Datasets & Baseline & LapScore & MCFS &UDFS  & SOGFS &  RNE& SPCAFS & FSPCA & BLSFE & BLSFS & BLUFS \\
  \hline \hline

  \multirow{2}{*}{Isolet}
   & \multirow{2}{*}{75.96}   &  61.70±4.12  & 57.70±3.98  & 55.04±4.34  &  61.15±1.45 &  40.86±2.89& 59.58±3.56  & 43.94±1.12  &59.49±1.31 & 61.60±2.09  &  \textbf{65.94±2.67 } \\
  &  &  (100) & (100) & (100) & (100) &(100)  & (100) & (100) & (100) & (100) &  \textbf{(100)} \\
  \hline
   \multirow{2}{*}{Jaffe}
   & \multirow{2}{*}{90.12}  & 79.43±2.45 & 83.85±4.76  &  84.96±2.23& 67.92±3.89  &   77.32±1.56& 84.39±4.45  & 59.81±3.34 & 60.11±1.34 & 80.25±1.78   &  \textbf{85.34±2.67 } \\
  &  & (60) & (80) &  (100)& (70)  & (60)& (100) & (100) &(100)& (100) &  \textbf{(100)}\\
  \hline
  \multirow{2}{*}{pie}
   & \multirow{2}{*}{49.59}   & 49.86±1.02  & 48.50±2.98  & 49.81±1.56  & 54.73±4.76  &  53.98±3.89 & 48.78±2.23  & 58.09±4.45  & \textbf{64.42±2.14}& 59.63±3.34  &  63.55±1.78 \\
  &  & (70) & (90) & (100) &  (100)&  (90)& (100) & (100) & \textbf{(100)}& (70) &  (70) \\
  \hline
  \multirow{2}{*}{COIL20}
   & \multirow{2}{*}{65.94}   & 63.22±1.34  & 34.00±2.56   &18.06±1.87   & 42.65±4.23  &   66.70±3.12&69.72±2.98 & 51.95±4.09  &59.07±1.20&   71.48±3.65  &  72.71±1.02  \\
  &  & (90) & (100) & (70) & (100) & (50) & (100) & (100) & (100) &   (70) &  (100)\\
  \hline 
 \multirow{2}{*}{MSTAR}
   & \multirow{2}{*}{82.93}  & 66.20±4.67  & 60.73±3.78  &63.61±4.56   & 62.28±1.98  &   69.72±2.34& 74.43±3.23 & 42.21±1.67  & \textbf{75.57±1.31} & 74.92±2.78  &  75.46±2.89 \\
  &  & (100) & (100) & (100) & (100) &  (100)& (90) & (100) & \textbf{(100)}& (100) &  (100) \\
  \hline
  \multirow{2}{*}{warpAR}
   & \multirow{2}{*}{17.32}   & 31.34±4.56  & 20.39±3.23  &  28.76±4.89 & 33.07±1.67  &   33.94±2.78& 31.03±3.45 & 27.80±1.23  &33.90±1.43 &  38.25±2.12 &  \textbf{40.84±2.98} \\
  &  & (10) & (10) & (10) & (80) &  (100)& (10) & (10) &(90)&  (80) &  \textbf{(30)} \\
  \hline
  \multirow{2}{*}{lung}
   & \multirow{2}{*}{51.73}   & 41.90±3.09  & 50.01±1.89  & 35.62±3.12   & 24.29±2.56  & 24.38±4.23 &  54.68±1.34  & 30.28±2.45  &48.07±1.21 & 50.17±4.09   &  \textbf{70.63±1.65}  \\
  &  & (100) & (100) & (90) & (90) &(90)  &  (100) & (100) &(70)& (100) &  \textbf{(70)} \\
  \hline
  \multirow{2}{*}{gisette}
   & \multirow{2}{*}{11.86}   &  26.19±2.67 & 2.92±1.12  & 12.47±2.89  &   11.24±3.98 &   19.46±1.45&19.98±4.34  & 14.60±3.56 &3.65±1.41 & 17.52±1.12  & \textbf{ 39.66±2.09}  \\
  &  &  (100)& (100) & (100) & (100) & (100) & (100) & (80) &(10)& (60) &  \textbf{(40)} \\
  \hline\hline
  Average
   & 55.68  & 52.48 & 44.76  & 43.53  & 44.66  &   48.29&43.54 & 41.08 & 50.53 &  56.72&  \textbf{64.26} \\
  \hline
  \end{tabular}
  }
  \label{table:3}
\end{table*}

\subsubsection{Parameter Settings}
For LapScore,  MCFS,  SOGFS,  RNE,  BLSFS,  and BLUFS,  the number of $k$-NN is set to 10.
For SOGFS,  SPCAFS, BLSFS, and BLUFS, their regularization parameters are selected from \(\{10^{-4},  10^{-3}, \cdots,  10^{3}\}\).
For other parameters, the default values or the best parameters provided by the authors are used.
In addition, the convergence criterion is 
\begin{equation}
\frac{|f^{k+1} - f^k|}{\max\{|f^k|,  1\}} < 10^{-4},
\end{equation}
where \(f\) is the objective function and \(k\) is the iteration number,  with the maximum value of \(k\) set to 50.

\subsubsection{Evaluation Metrics}
In our study, to explore the performance of BLUFS in clustering applications, we use the $k$-means method to evaluate its performance. It is worth noting that we adopt two metrics, namely,accuracy (ACC) and normalized mutual information (NMI).
For every data point \(\mathbf{x}_i\), we use \(p_i\) to represent its true class label and \(q_i\) to represent its assigned cluster label.
The ACC is defined as 
\begin{equation}
  \text{ACC} = \frac{1}{n} \sum_{i=1}^{n} \zeta(p_i,  f(q_i)).
  \end{equation}
Here, \( n \) denotes the total count of data instances. The indicator function \(\zeta(p, q)\) equals 1 when \( p \) matches \( q \), and equals 0 otherwise. The function \( f(\cdot) \) serves as a permutation function, aiming to align the cluster labels with their corresponding true labels. 

Let $A$ denote the ground truth labels and $B$ denote the cluster labels.
The NMI is defined as 
\begin{equation}
\text{NMI}(A,  B) = \frac{\text{MI}(A,  B)}{\max(H(A),  H(B))}, 
\end{equation}
where $H(\cdot)$ denotes the entropy and MI$(\cdot, \cdot)$ represents the mutual information,  the range for ACC and NMI is [0, 1] \cite{strehl2002cluster}.

Note that we choose different numbers of selected features, i.e., \(\{10, 20, \ldots, 100\} \), and repeat it ten times to obtain the average as the final ACC and NMI.
In addition, we also evaluate the classification performance with the $k$-NN classifier.

\subsection{Numerical Experiments}

\subsubsection{Synthetic Results}

In this experiment, all the methods compared are applied to two synthetic datasets. These synthetic datasets are composed of two key features and seven randomly generated Gaussian noise features. Each UFS method is used to select two features and the selected features along with their corresponding samples are visualized in the form of scatter plots.

The results of the feature selection process are shown in Figs. \ref{figure:1} and \ref{figure:2}. For the Dartboard1 dataset, it is evident that LapScore, RNE, BLSFE, BLSFS, and BLUFS surpass the other methods discussed in terms of identifying the most suitable features. In the case of the Diamond9 dataset, the feature selection results highlight that BLUFS is the only method that can accurately identify the two most discriminative features. Collectively, these findings demonstrate the efficacy of our proposed BLUFS for synthetic datasets.

\subsubsection{Real-World Result}

Figs. \ref{figure:3} and \ref{figure:4} show the clustering results of ACC and NMI under different datasets with different numbers of selected features, respectively. 
It can be observed that the proposed BLUFS has significant advantages in most real-world datasets, which highlights the effectiveness of our method in feature selection.
Furthermore, in the lung and gisette datasets, which have the highest dimensions, BLUFS significantly outperforms other methods regardless of the number of features selected. 
This demonstrates the advantage of our method on high-dimensional datasets.

Tables \ref{table:2} and \ref{table:3} list the corresponding ACC and NMI , with the best method highlighted in bold, and the number of selected features given in brackets (except for the Baseline method).
Our proposed BLUFS method achieves better results than other methods in most real-world datasets.
In Table \ref{table:2}, the average ACC values of BLUFS across all datasets with different feature selection quantities are higher than those of other methods. It is worth noting that these bi-level methods generally outperform other single-level methods on most datasets. For example, on the pie dataset, the bi-level methods improve the ACC by at least 5\% compared to single-level methods. Additionally, on the gisette dataset, which has the highest dimensionality, BLUFS improves ACC by at least 14\% compared to other bi-level methods (BLSFS and BLSFE), indicating the advantage of BLUFS on high-dimensional datasets. Furthermore, in Table \ref{table:3}, except for the pie and MSTAR datasets, BLUFS demonstrated the highest average NMI values on all datasets. It is worth noting that NMI performance on the highest-dimensional dataset gisette is at least 20\% higher than that of other bi-level methods. Among the average ACC and NMI results across eight datasets, BLUFS outperformed the second-place BLSFS by at least 8\%. These results also indicate that bi-level methods perform better than other methods.

The good performance can be attributed to the following reasons:
(i) Unlike other spectral clustering methods, the proposed BLUFS introduces a scaled cluster indicator matrix \( Y \) to guide feature selection. This matrix can more flexibly represent the similarities between data points and their relationship with structural information, thereby enhancing the accuracy and robustness of feature selection.
(ii) Compared to BLSFS, the proposed BLUFS can better control the number of selected features by using \( \ell_{2, 0} \)-norm constrained optimization, thus exhibiting greater advantages in high-dimensional datasets.

\begin{table}[t]
  \centering
  \renewcommand{\arraystretch}{1.2}
  \caption{Three degenerated versions and BLUFS.}
  \begin{tabular}{|c|c|c|c|}
    \hline
    Cases&  Feature level & Clustering level& Graph learning\\ \hline \hline
     Case I   &  \ding{52} &  \ding{55} &  \ding{52} \\ \hline
     Case II  & \ding{55} &  \ding{52}&  \ding{52} \\ \hline
     Case III   &  \ding{52} &  \ding{52}&  \ding{55} \\ \hline
    BLUFS &   \ding{52} &  \ding{52}&  \ding{52} \\
    \hline
  \end{tabular}
  \label{sum1}
\end{table}

\begin{table}[t]
    \centering
    \renewcommand\arraystretch{1.2}
    \caption{Clustering results compared with three degenerated versions of BLUFS. The best results are marked in bold.}
    \begin{tabular}{|c|c|c|c|c|c|c|}
    \hline 
     Datasets & Metrics & Case I & Case II & Case III & BLUFS \\   \hline \hline
      \multirow{2}{*}{Isolet}
      & ACC &  61.98   & 58.88  & 42.97  &  \textbf{65.36}  \\
     & NMI & 70.38  &  72.90  & 56.35  &  \textbf{75.96}  \\  \hline
     \multirow{2}{*}{Jaffe}
      & ACC &  83.19   & 74.84   & 79.15  &  \textbf{91.87 }  \\
     & NMI &  84.01    & 77.04  & 79.71  &  \textbf{88.17}  \\  \hline
     \multirow{2}{*}{pie}
      & ACC & 38.13  & 32.99  &  41.90  &  \textbf{42.22}  \\
     & NMI & 59.31  & 55.58  &  62.43  &  \textbf{63.55} \\  \hline
     \multirow{2}{*}{COIL20}
      & ACC & 62.63  & 59.90  &  65.94  &  \textbf{69.27} \\
     & NMI & 73.02  & 67.43  &  73.54  &  \textbf{77.40} \\  \hline
     \multirow{2}{*}{MSTAR}
      & ACC &  82.81  & 77.51  & 77.67  &  \textbf{91.58} \\
     & NMI &  78.47  & 77.21  & 78.09  &  \textbf{84.41}\\  \hline
     \multirow{2}{*}{warpAR}
      & ACC &  37.92 & 35.31  & 22.54  &  \textbf{42.15}\\
     & NMI & 39.59  &  39.83  & 20.09  &  \textbf{44.84}\\  \hline
     \multirow{2}{*}{lung}
      & ACC & 62.07  & 59.70  &  78.82 &  \textbf{90.15} \\
     & NMI & 42.95  & 42.30  &  52.89  &  \textbf{73.38} \\ \hline
     \multirow{2}{*}{gisette}
      & ACC &  71.48  & 50.89  & 67.76  &  \textbf{85.83} \\
     & NMI &  17.95  & 12.35 & 10.37  &  \textbf{41.69}\\  \hline
     
    \end{tabular}
    \label{table:5}
\end{table}
\begin{table}[t]
    \centering
    \renewcommand\arraystretch{1.2}
    \caption{Visual comparisons of selected image samples from the pie dataset with the corresponding ACC (\%) and NMI (\%). The best results are marked in bold. }
    \label{graph}
    \begin{tblr}{
      cells={halign=c, valign=m}, 
      colspec={p{1cm}|p{9mm}|p{9mm}|p{9mm}|p{9mm}|p{7mm}|p{7mm}}, 
      hlines, 
      vlines
    }
    Cases & \SetCell[c=4]{c} Samples &&&& ACC & NMI \\
    \hline
    Case I & \centering \includegraphics[width=1\linewidth, valign=c]{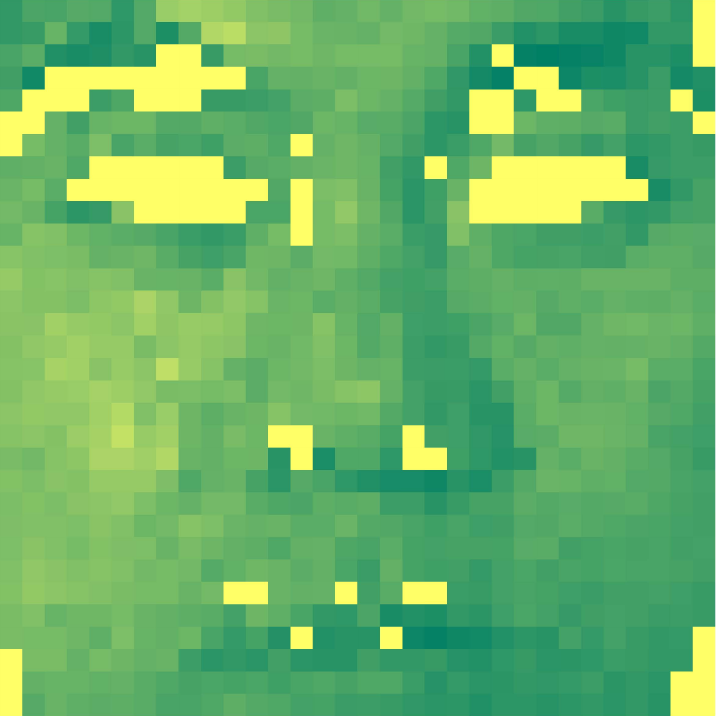} & \includegraphics[width=1\linewidth, valign=c]{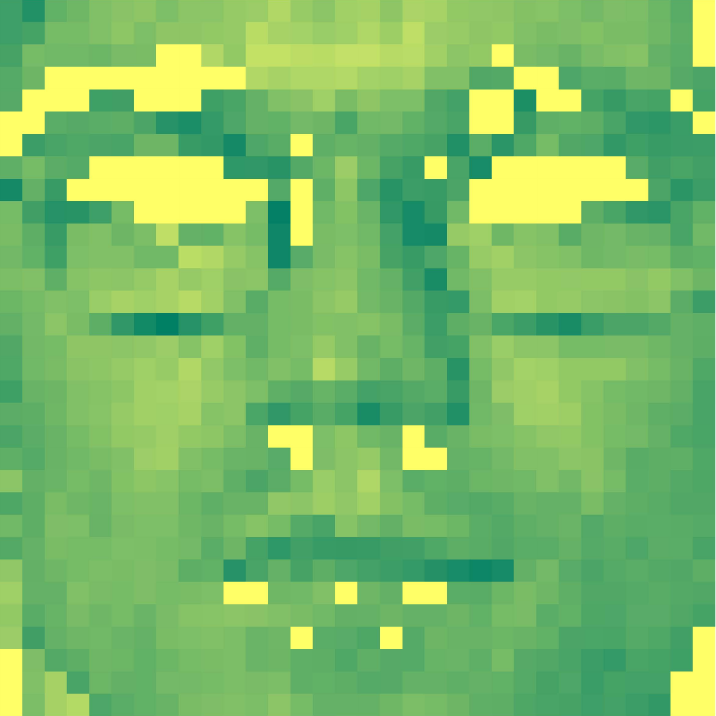} & \includegraphics[width=1\linewidth, valign=c]{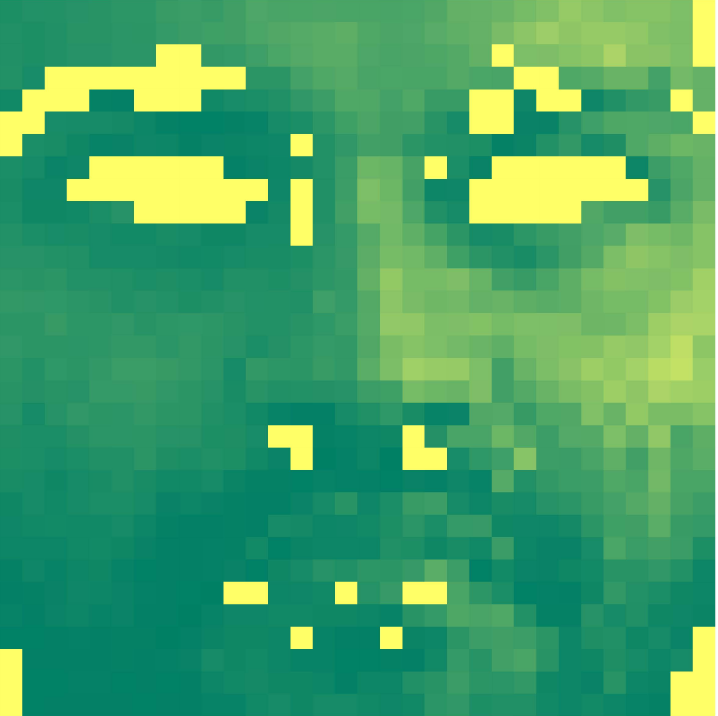} & \includegraphics[width=1\linewidth, valign=c]{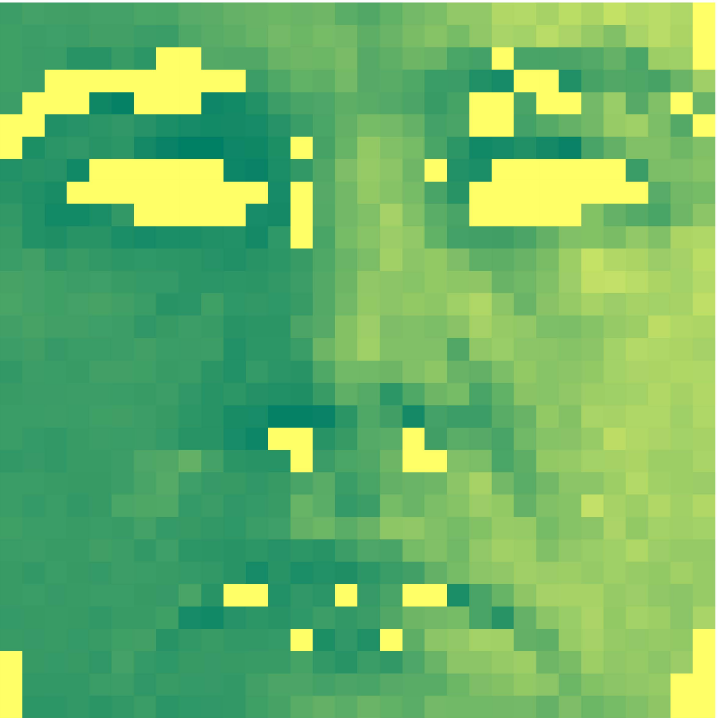} & 41.98 & 63.02 \\
    Case II & \includegraphics[width=1\linewidth, valign=c]{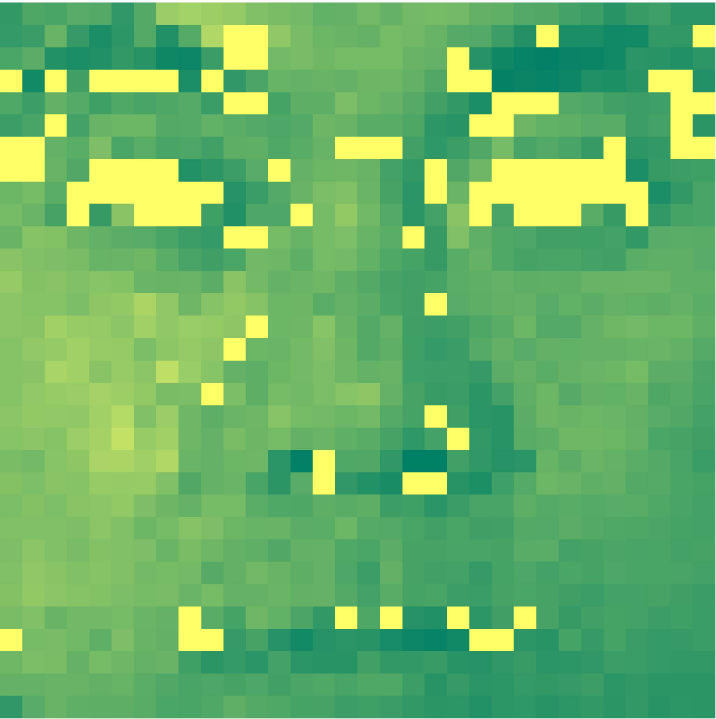} & \includegraphics[width=1\linewidth, valign=c]{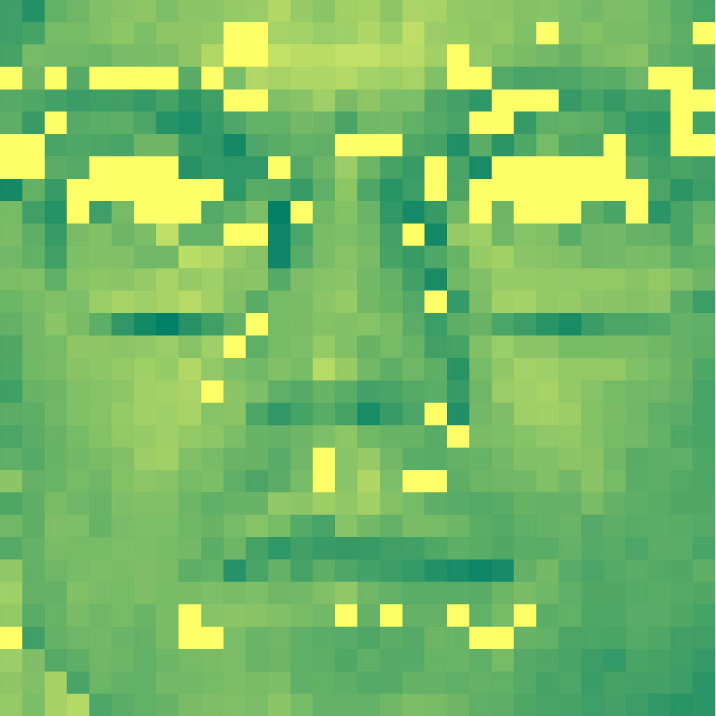} & \includegraphics[width=1\linewidth, valign=c]{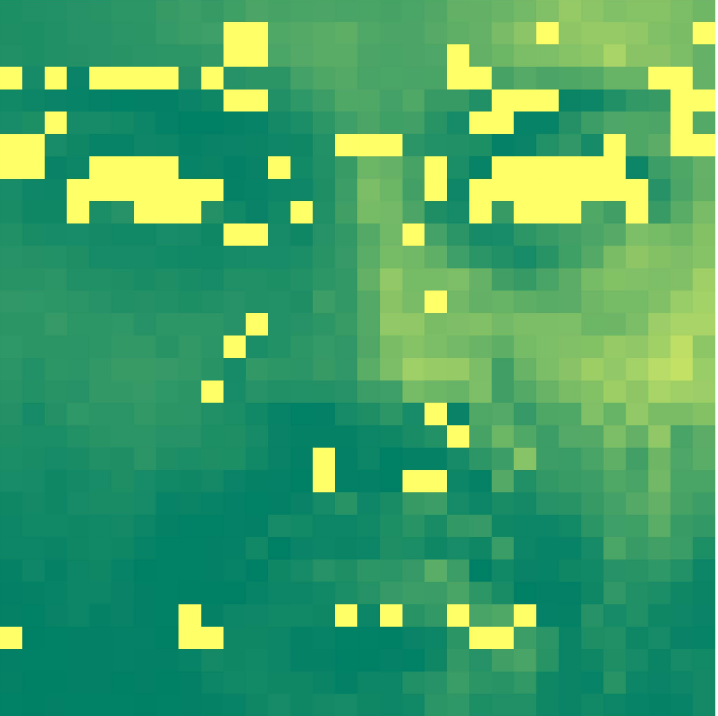} & \includegraphics[width=1\linewidth, valign=c]{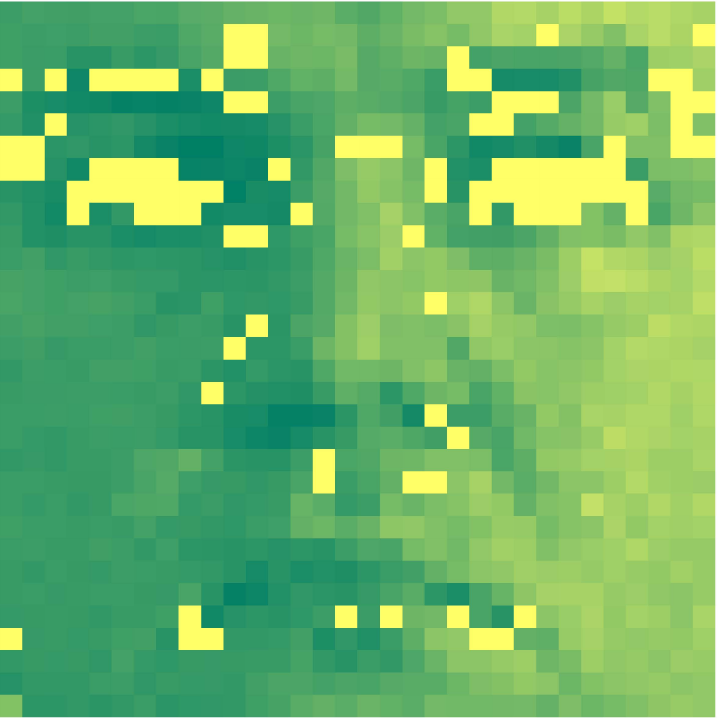} & 38.11 & 62.51 \\
   Case III & \includegraphics[width=1\linewidth, valign=c]{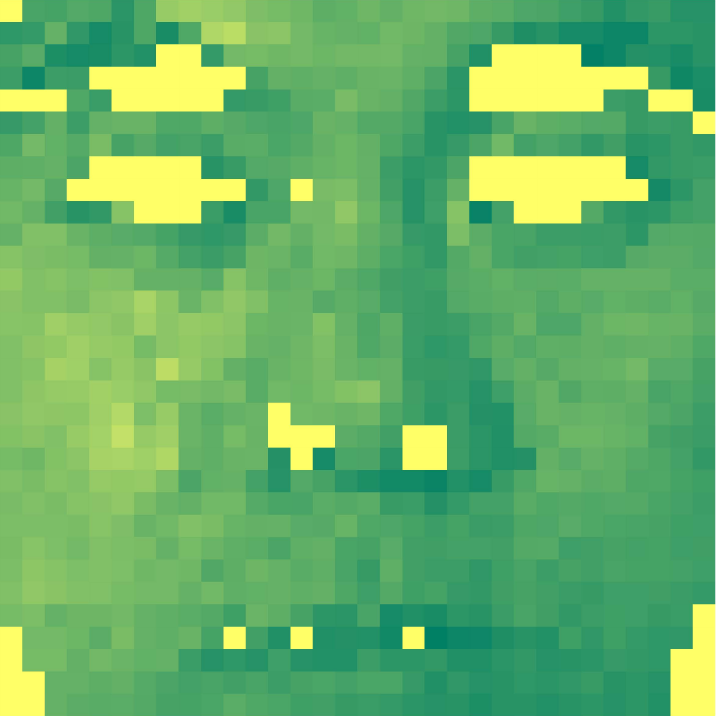} & \includegraphics[width=1\linewidth, valign=c]{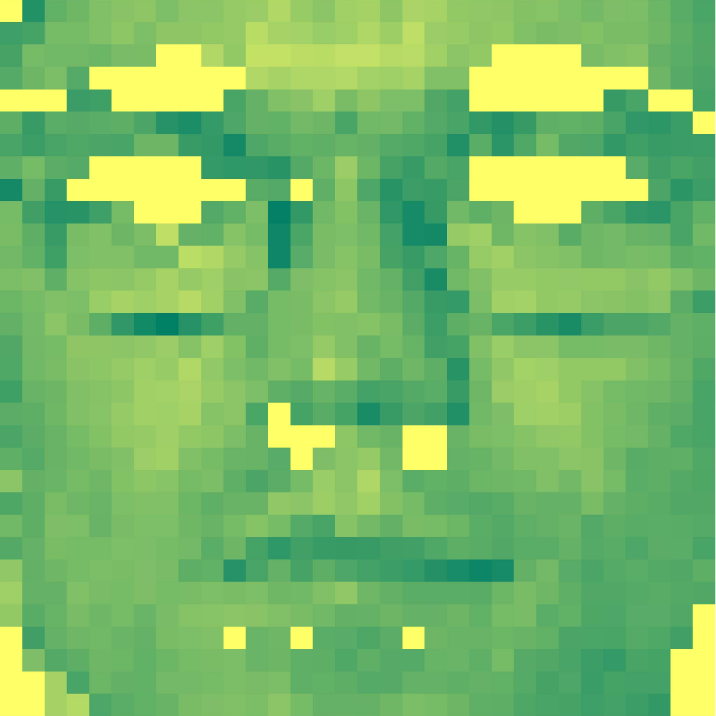} & \includegraphics[width=1\linewidth, valign=c]{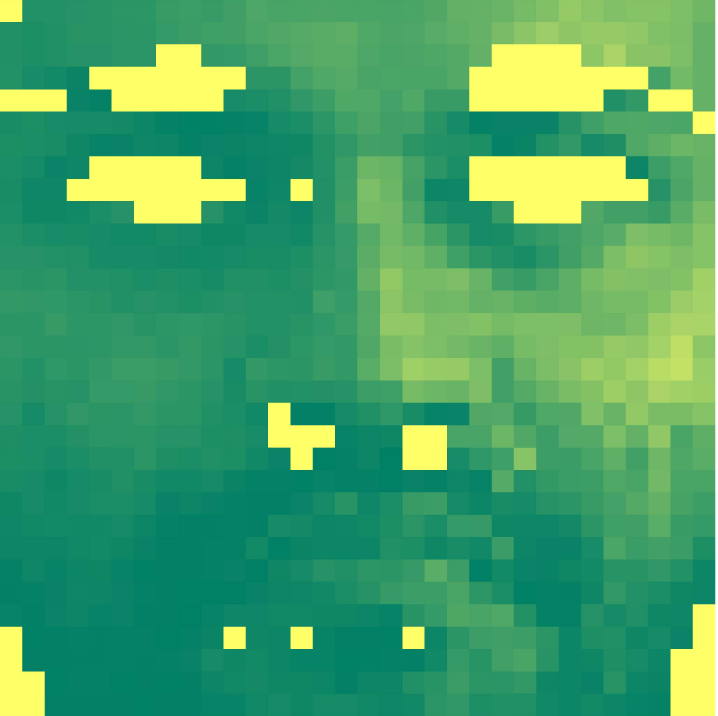} & \includegraphics[width=1\linewidth, valign=c]{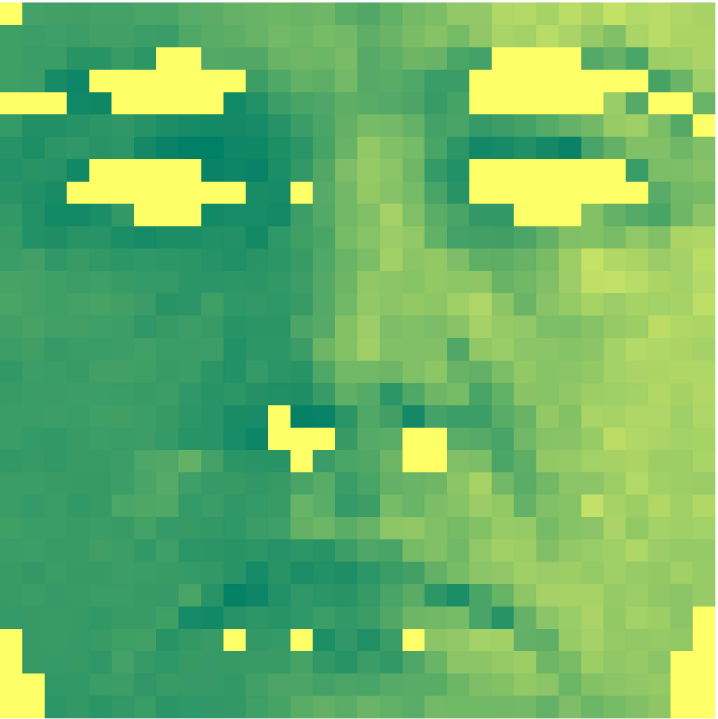} &  42.35 &  63.24 \\
   BLUFS & \includegraphics[width=1\linewidth, valign=c]{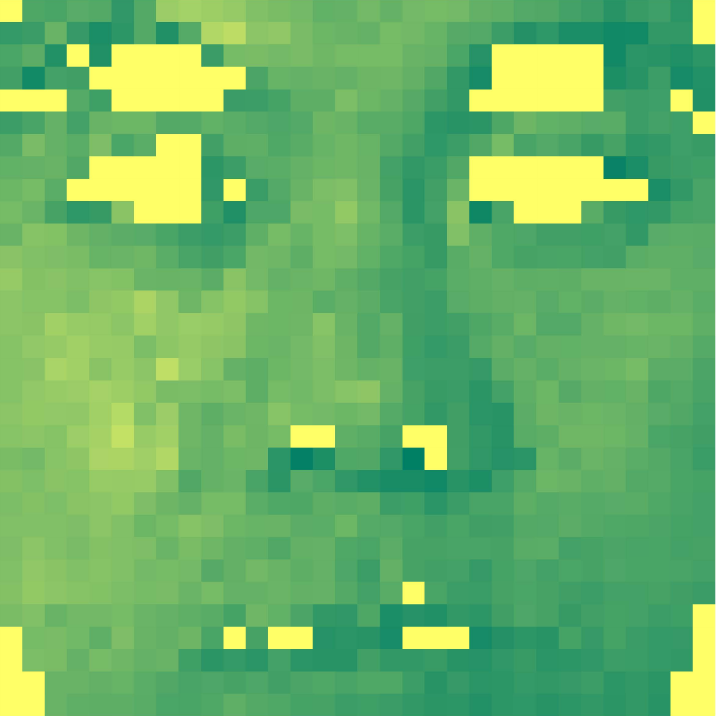} & \includegraphics[width=1\linewidth, valign=c]{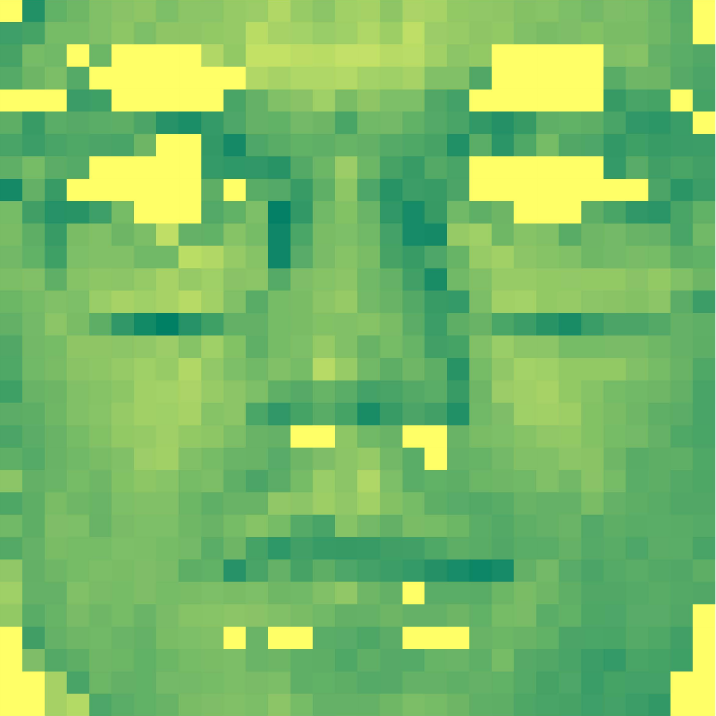} & \includegraphics[width=1\linewidth, valign=c]{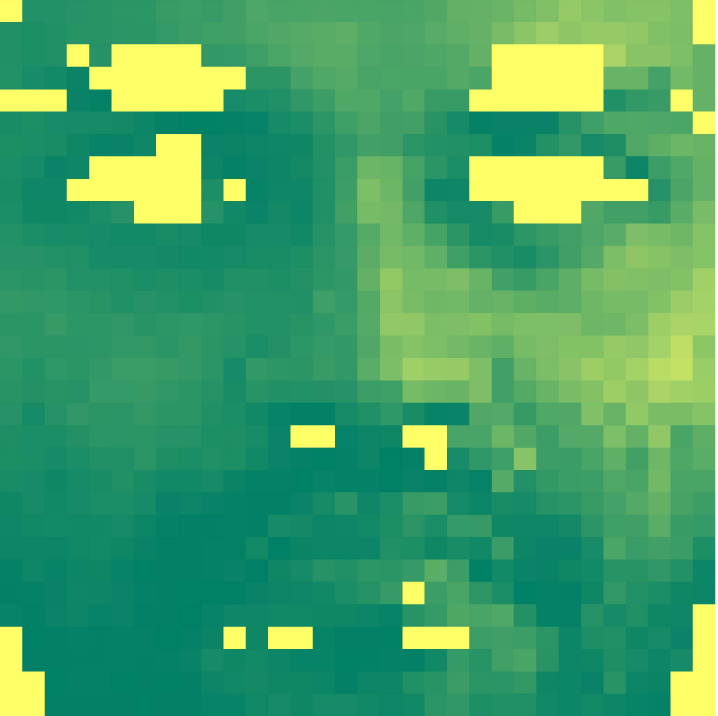} & \includegraphics[width=1\linewidth, valign=c]{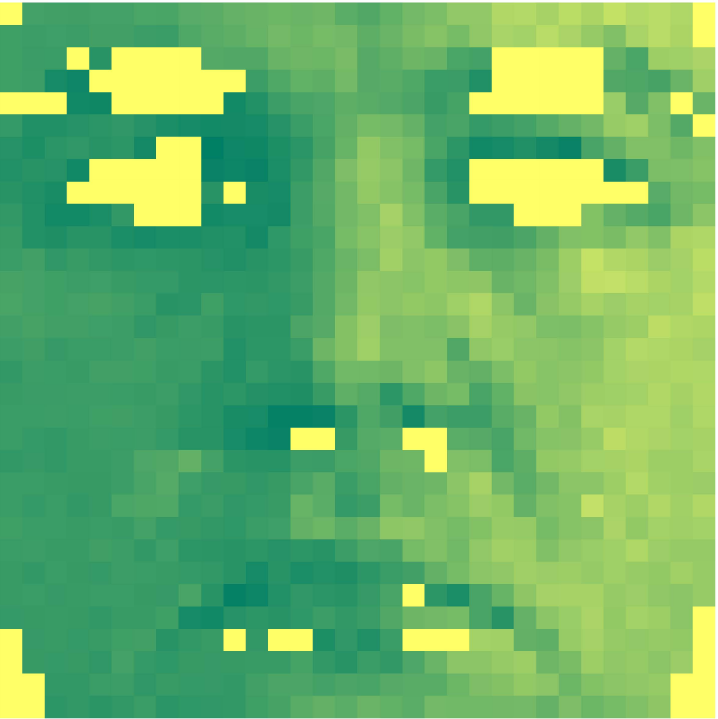} & \textbf{45.30} & \textbf{67.75} \\
    \end{tblr}
    \label{face}
\end{table}

\begin{table*}[t]
  \centering
  \renewcommand\arraystretch{1.2}
  \caption{Classification results on different datasets. The best results are marked in bold.}

  \begin{tabular}{|c|c|c|c|c|c|c|c|c|c|c|}
  \hline 
  Datasets & LapScore & MCFS &UDFS   &  SOGFS& RNE& SPCAFS & FSPCA & BLSFE & BLSFS & BLUFS \\
  \hline \hline
    Isolet & 69.21 & 68.44 & 66.62 & 38.96 &  38.96& 56.79 & 71.67 &76.02&  78.37 &  \textbf{79.51} \\
  \hline
   Jaffe & 82.12 & 90.56 & 91.41 & 81.50 &  81.50& 69.81 & 89.95 &83.96&  90.93&  \textbf{92.45} \\
  \hline
   pie & 64.68 & 62.81 & 66.35 & 58.33 &  58.33& 62.50 & 64.58 &66.72&  73.75&   \textbf{80.96}\\
  \hline
  COIL20 & 85.97 & 86.21 & 87.21 &  57.14& 57.14 & 68.75 & 80.83 &78.47&  87.71 &  \textbf{87.91} \\
  \hline
  MSTAR & 97.38 & 96.72 & 97.15 & 97.03 & 96.46 & 83.49 & 97.51 &96.33&  97.64&  \textbf{98.01} \\
  \hline
  warpAR & 28.38 & 28.84 & 25.76 & 26.69 & 26.69 & 26.15 & 33.30 & 33.84& 30.15 &  \textbf{37.43} \\
  \hline
  lung & 81.08 & 84.60 & 86.53 & 78.76 & 78.76 & 90.09 & 86.73 & 91.39& 85.97 &  \textbf{92.07} \\
  \hline
  gisette & 86.00 & 75.88 & 73.41 &  75.19&  76.56& 84.82 & 57.19 & 86.54& 76.73 &  \textbf{94.07}\\
 
  \hline\hline
  Average
   & 71.85 & 70.50  & 74.30  &  59.20 &   64.30&67.80 &72.72  & 70.53 & 77.65 & \textbf{82.80}\\
  \hline
  \end{tabular}
  
  \label{table2}
  \end{table*}

\subsection{Ablation Experiments}

Since the proposed BLUFS involves two levels, this subsection compares it with two degraded versions that use only one level. In addition, we also compare it with a degraded version that does not include graph learning. See Table \ref{sum1} for more details.


\subsubsection{Clustering Efficacy}As shown in Table \ref{table:5}, Case I often exhibits better performance compared to Case II, suggesting that the feature level is of greater importance than the clustering level. This result could arise because the clustering level disregards the initial feature selection information, relying solely on the clustering result. In contrast to both Case I and Case II, BLUFS consistently outperforms these single-level approaches, thereby highlighting the efficacy of the bi-level framework. Moreover, compared to Case III, BLUFS often demonstrates improved performance, emphasizing the benefits of incorporating adaptive graph learning to further elevate performance.

\subsubsection{Feature Visualization}
As illustrated in Table \ref{face}, visual comparisons of feature selection result are presented from various degenerate versions in the pie dataset. In this experiment, the number of selected features is maintained at 100, and four image samples are employed to assess the effectiveness. Although all methods are capable of identifying crucial facial features such as eyes, mouth, eyebrows, and nose, the features selected by BLUFS are more concentrated, with almost no scattered irrelevant points. This focused feature selection aids in preserving the overall geometric structure of the face, enhances the utilization of smaller regions, and effectively reduces the number of redundant features.
As a result, our proposed BLUFS achieves higher ACC and NMI scores, indicating its superior performance.

\subsection{Classification Verifications}

In this section, we evaluate the classification performance of BLUFS against other methods (excluding the Baseline) to demonstrate its effectiveness in the classification task. For each dataset, we randomly select 50\% of the instances to form the training set, the remaining instances serving as the testing set. It is important to note that this procedure is repeated 50 times, resulting in the creation of 50 distinct partitions.

\begin{figure}[t]
  \makeatletter
  \renewcommand{\@thesubfigure}{\hskip\subfiglabelskip}
  \makeatother
  \centering
  \subfigure{
    \includegraphics[width=3.2 in]{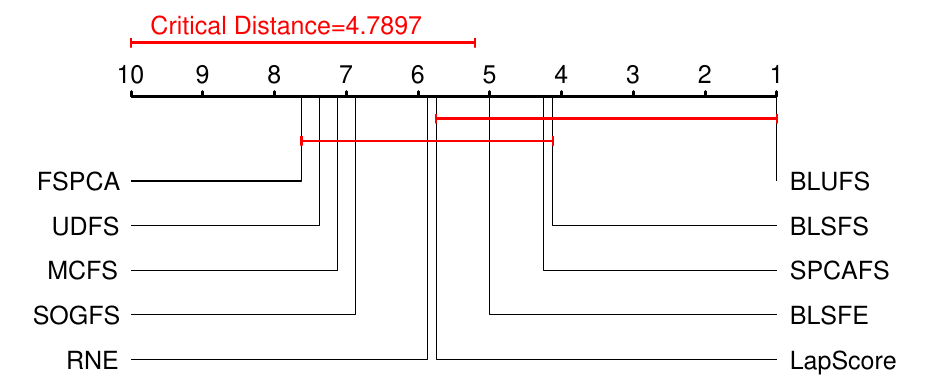}}
  \vskip-0.2cm
  \caption{Post-hoc Nemenyi test of all compared methods.}
  \label{figure-5}
\end{figure}

\begin{figure}[t]
  \centering
  \subfigcapskip=-1pt
  \subfigure[Isolet (BLSFS)]{
      \centering
      \includegraphics[width=3.8cm]{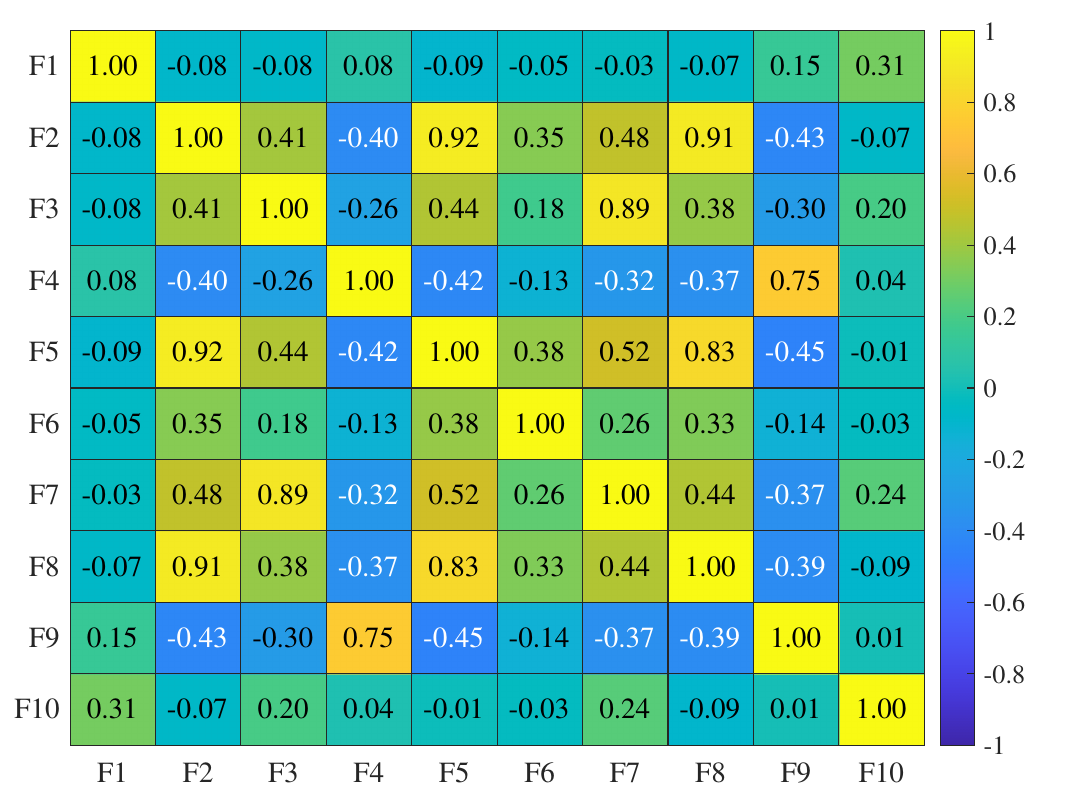}
  }\hspace{-0mm}
  \subfigcapskip=-1pt
  \subfigure[COIL20 (BLSFS)]{
      \centering
      \includegraphics[width=3.8cm]{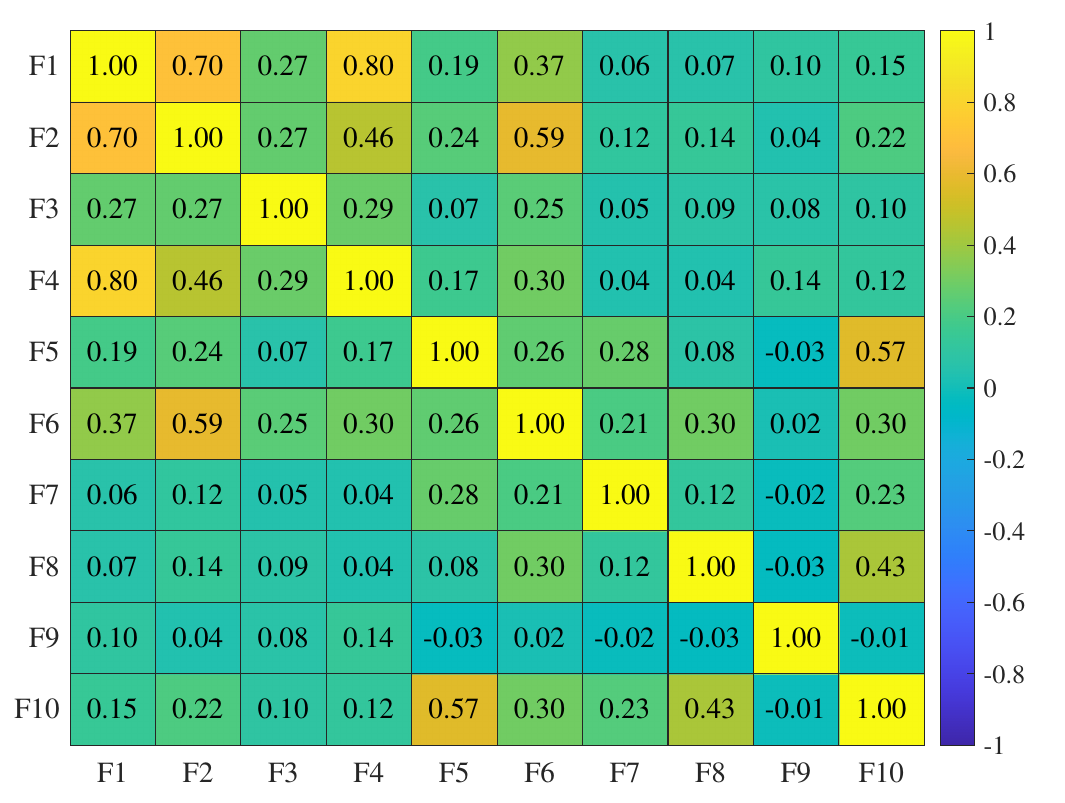}
  }\hspace{-0mm}
  
\subfigcapskip=-1pt
  \subfigure[Isolet (BLUFS)]{
      \centering
      \includegraphics[width=3.8cm]{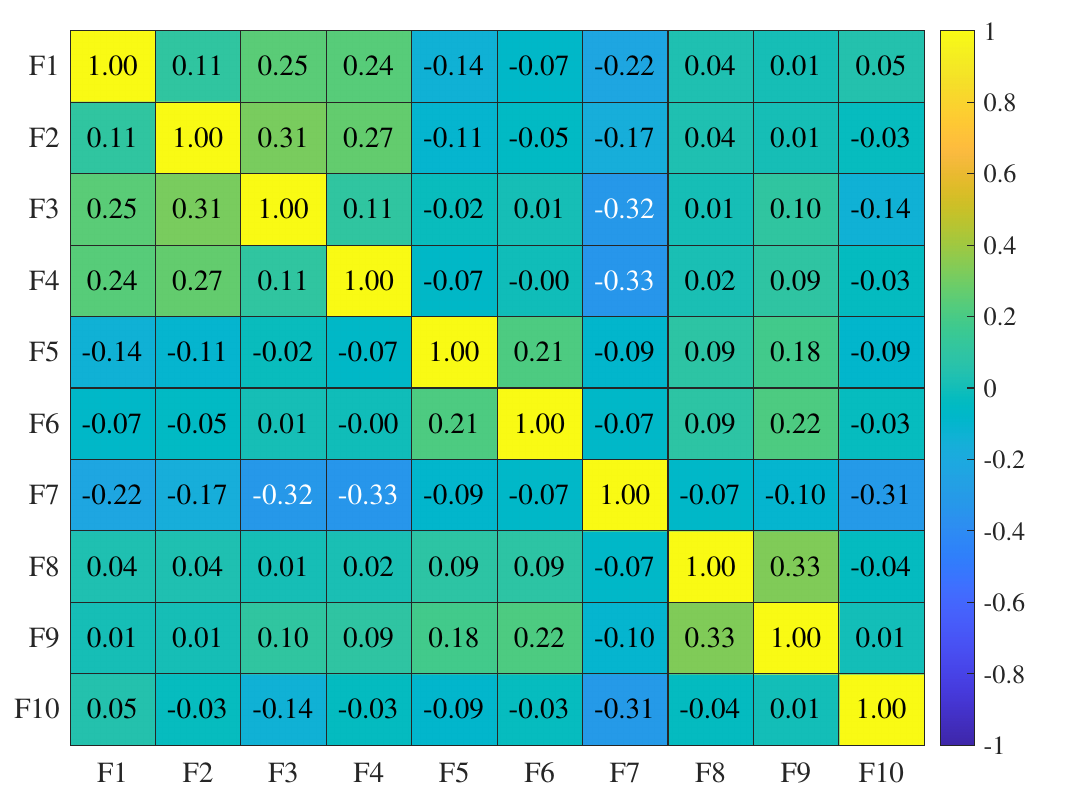}
  }\hspace{-0mm}
  \subfigcapskip=-1pt
  \subfigure[COIL20 (BLUFS)]{
      \centering
      \includegraphics[width=3.8cm]{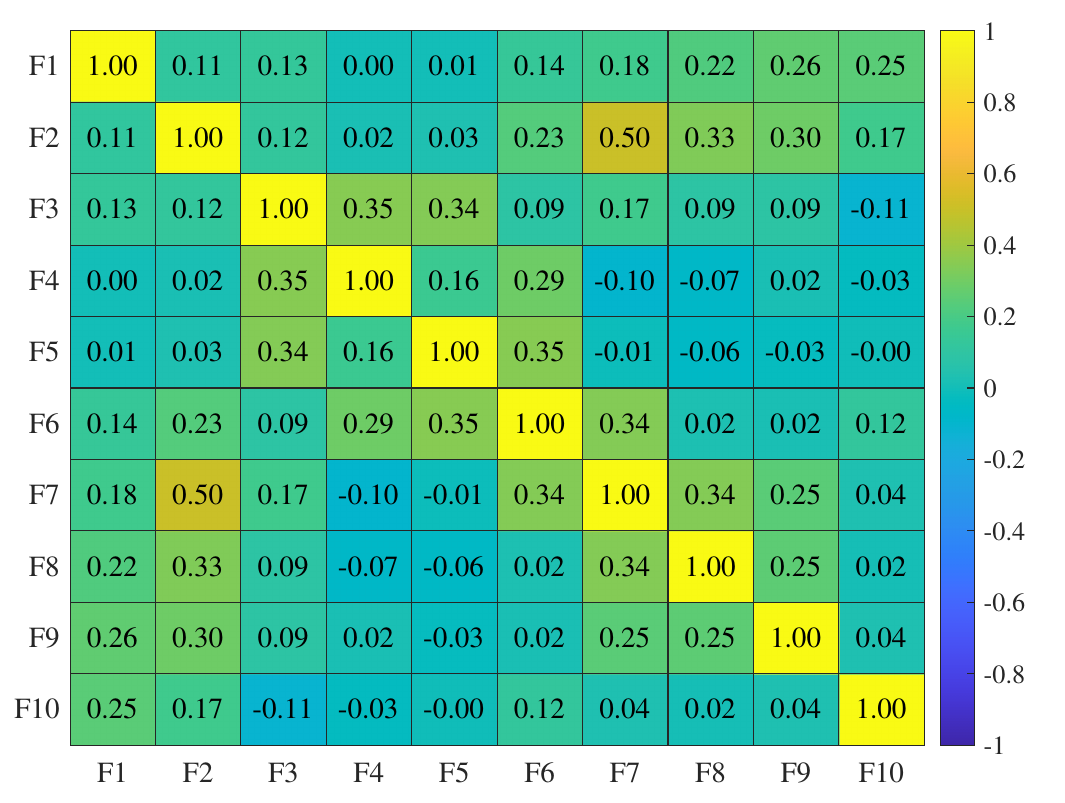}
  }\hspace{-0mm}
  
  \vspace{-0.1cm}
  \caption{Heatmap visualizations of correlations for 10 selected features,  where (a)-(b) are the results of BLSFS and (c)-(d) are the results of BLUFS.}
  \centering
  \label{correlation}
  \end{figure}

\begin{figure}[t]
    \centering
\subfigure[$\alpha$ and $\beta$ (Isolet)]{
      \includegraphics[width=1.34 in]{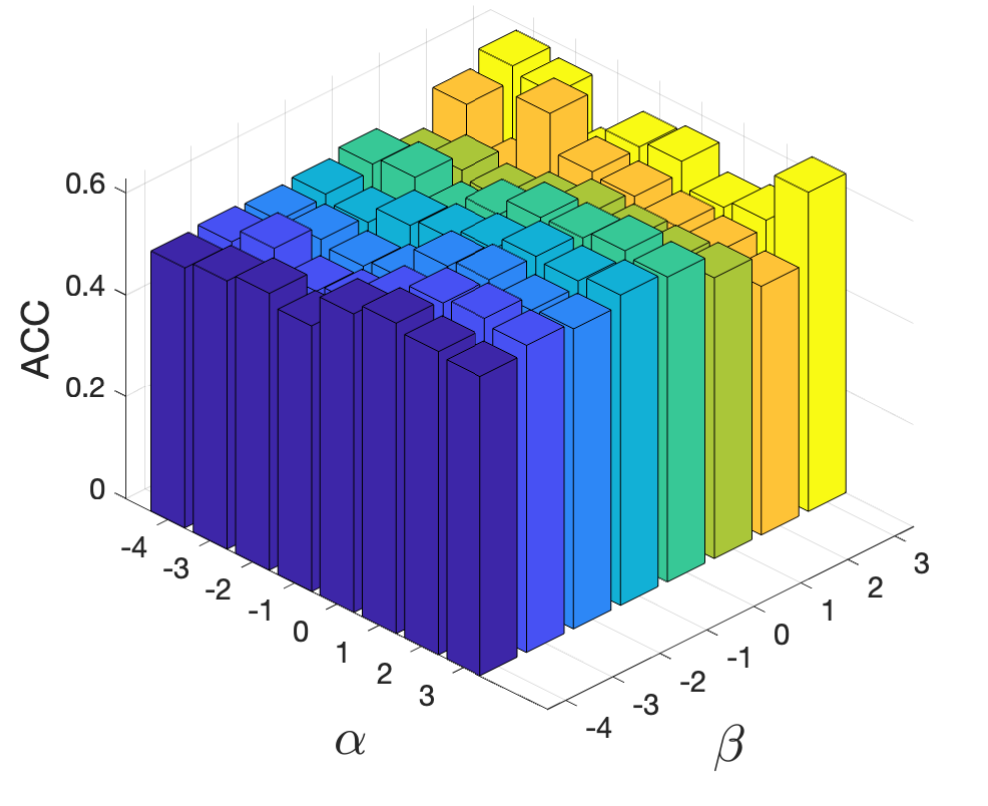}}
         \subfigure[ $\mu$ and $\lambda$ (Isolet)]{
      \includegraphics[width=1.34 in]{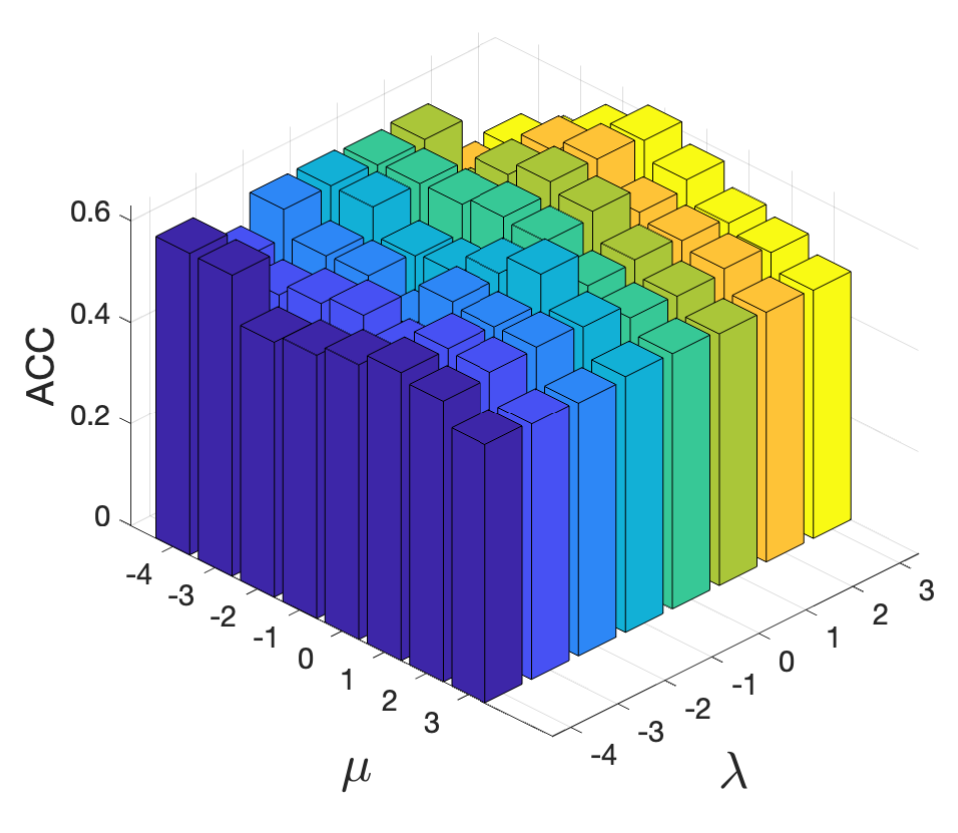}}
    \subfigure[$\alpha$ and $\beta$ (COIL20)]{
      \includegraphics[width=1.34 in]{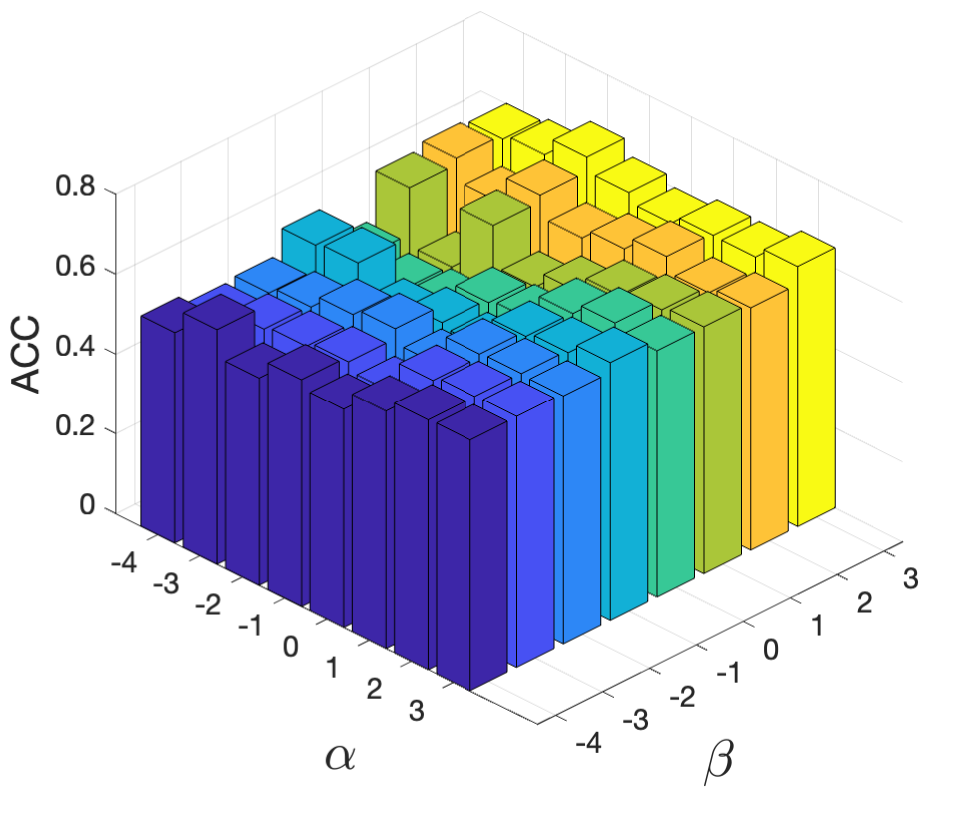}}
      \subfigure[ $\mu$ and $\lambda$ (COIL20)]{
      \includegraphics[width=1.34 in]{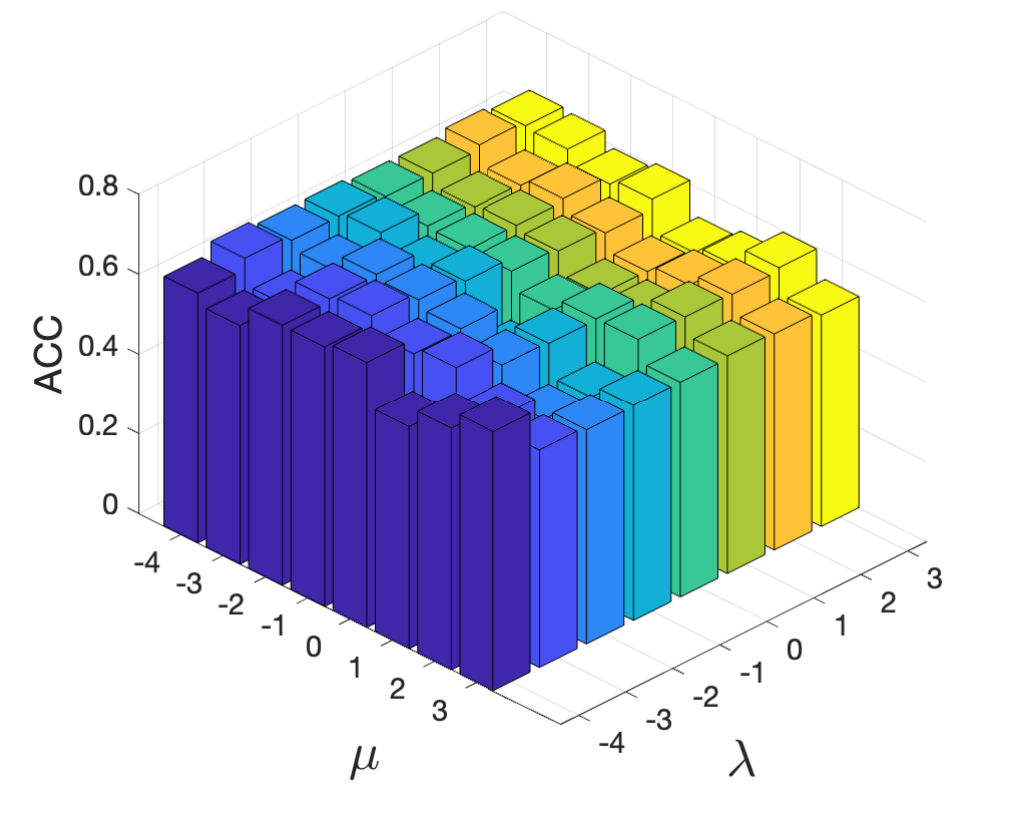}}
    \caption{Clustering accuracy with different parameters.}
    \label{figure:7}
  \end{figure}

 \begin{figure}[t]
    \centering
      \subfigcapskip=-1pt
  \subfigure[Isolet]{
      \centering
      \includegraphics[width=4cm]{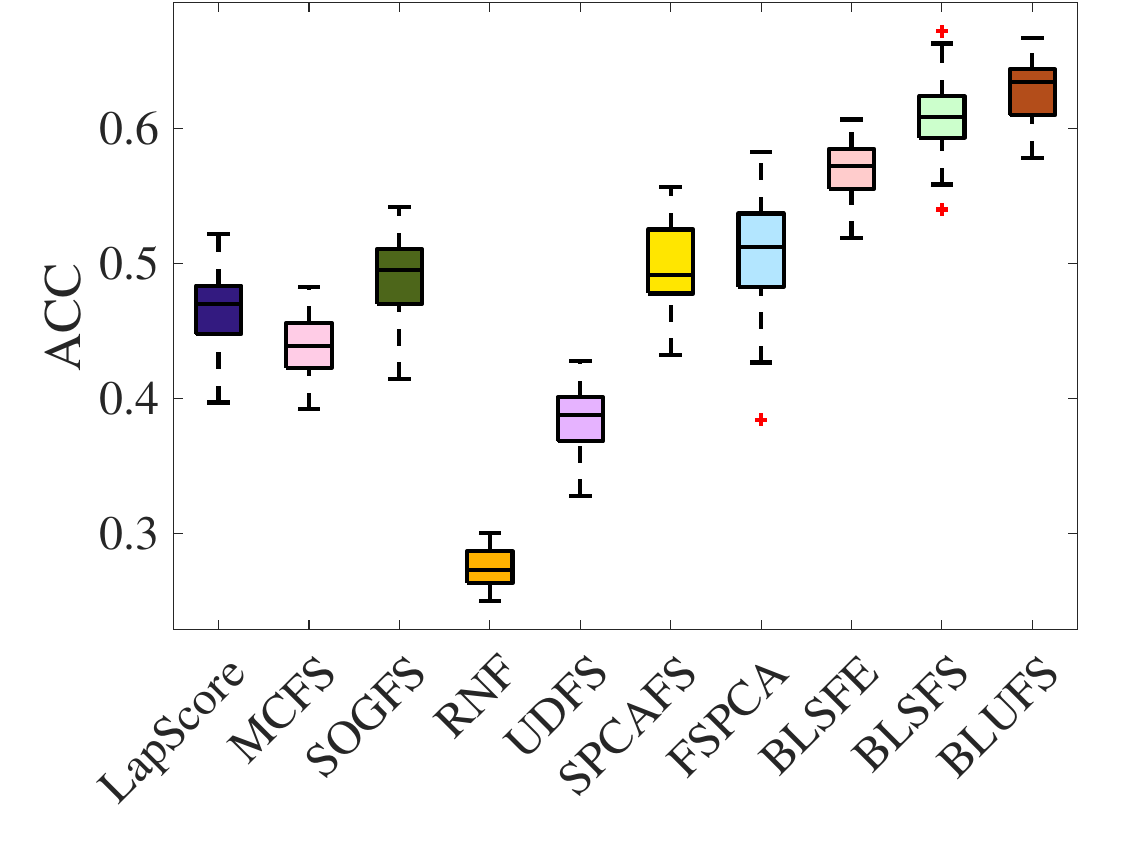}
  }\hspace{-0mm}
  \subfigcapskip=-1pt
  \subfigure[pie]{
      \centering
      \includegraphics[width=4cm]{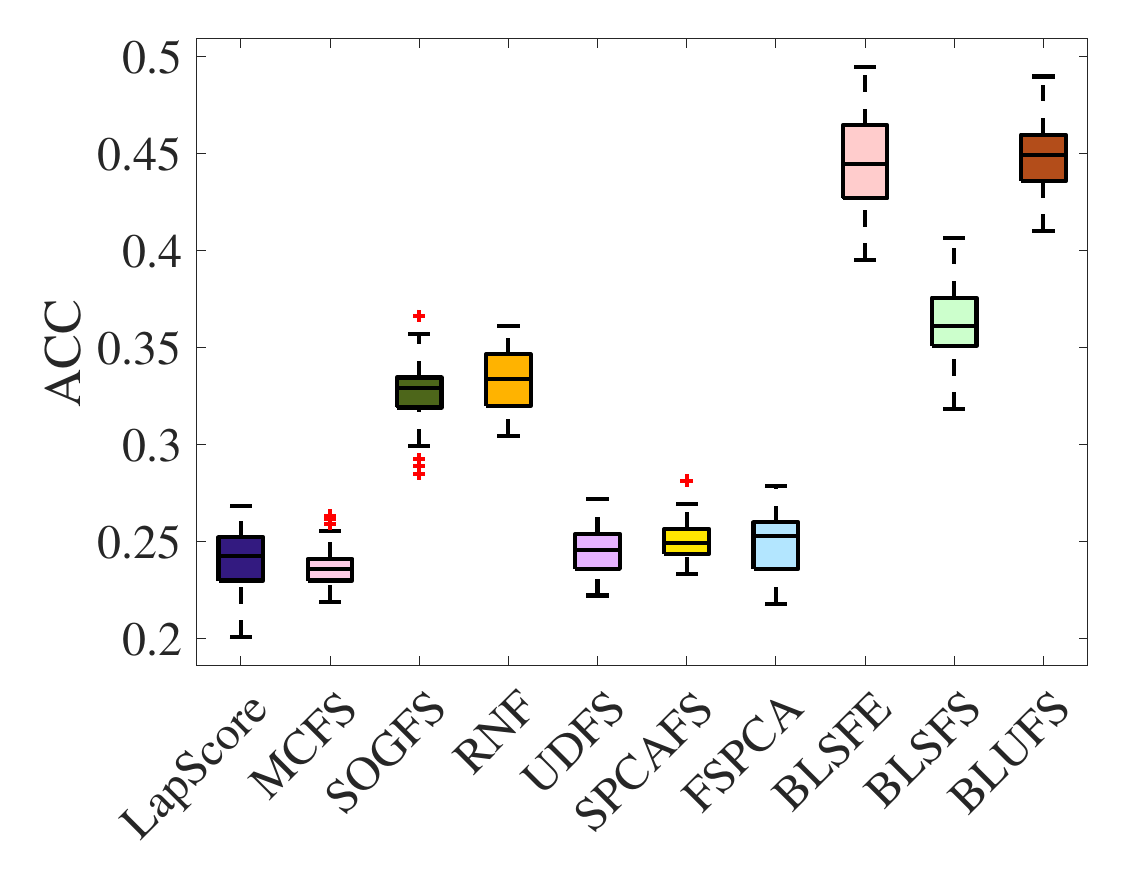}
  }\hspace{-0mm}
 
   \subfigcapskip=-1pt
 
  \subfigure[COIL20]{
      \centering
      \includegraphics[width=4cm]{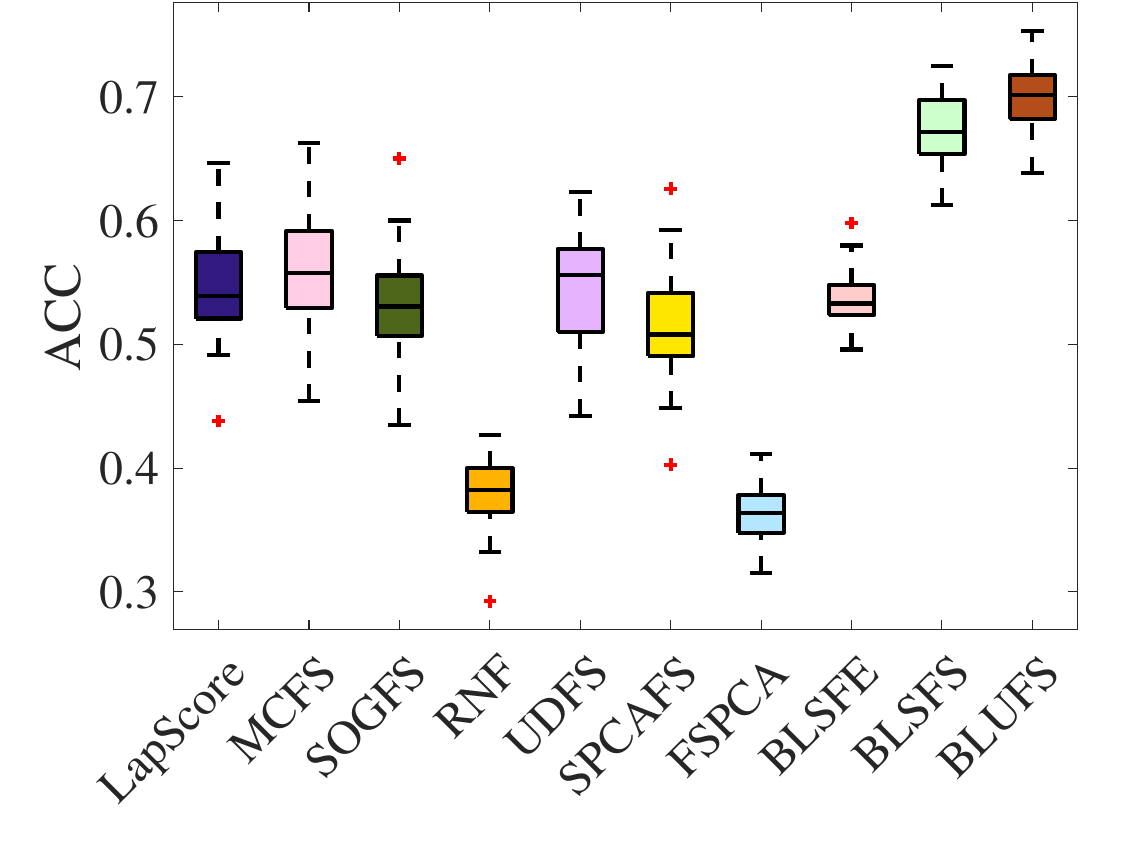}
  }\hspace{-0mm}
  \subfigcapskip=-1pt
  \subfigure[warpAR]{
      \centering
      \includegraphics[width=4cm]{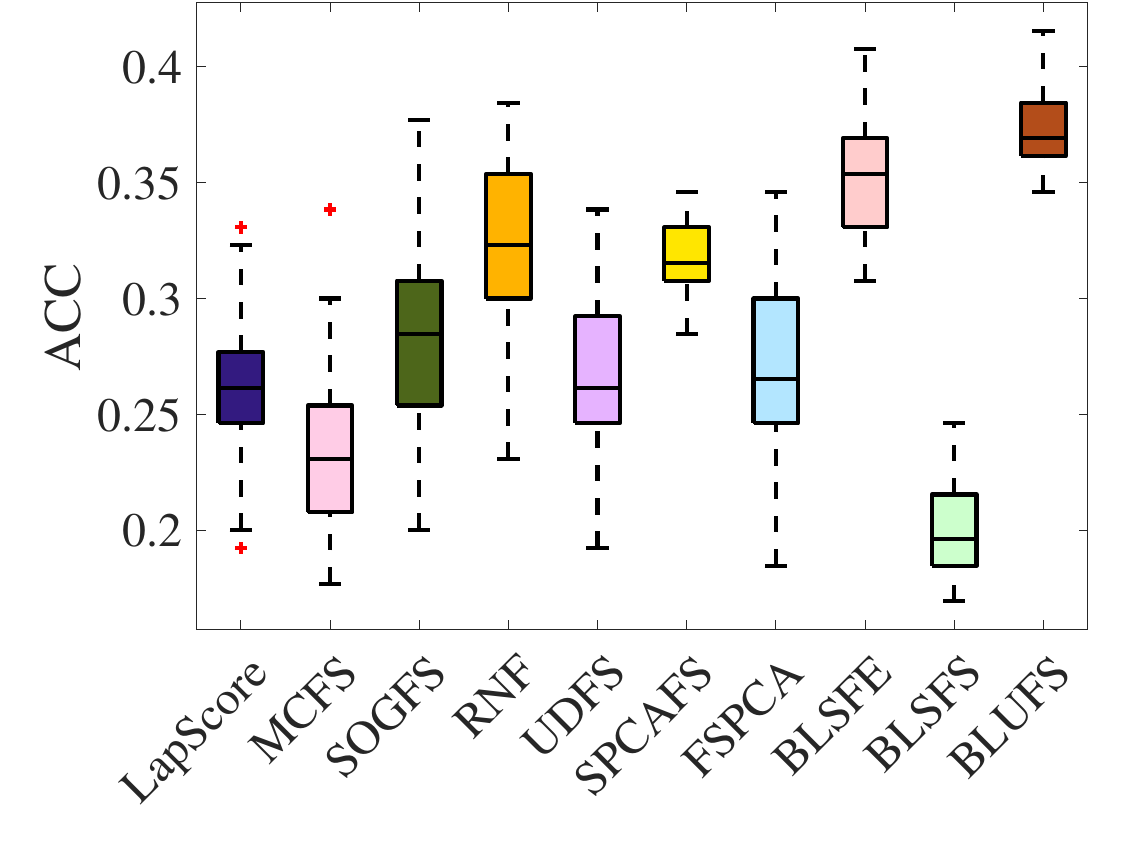}
  }\hspace{-0mm}
   
  \vspace{-0.1cm}
  \caption{Model stability comparisons on four selected datasets.}
  \label{box}
  \end{figure}

  \begin{figure}[!ht]
    \makeatletter
    \renewcommand{\@thesubfigure}{\hskip\subfiglabelskip}
    \makeatother
    \centering
     \subfigure[(a) Isolet]{
      \includegraphics[width=1.34 in]{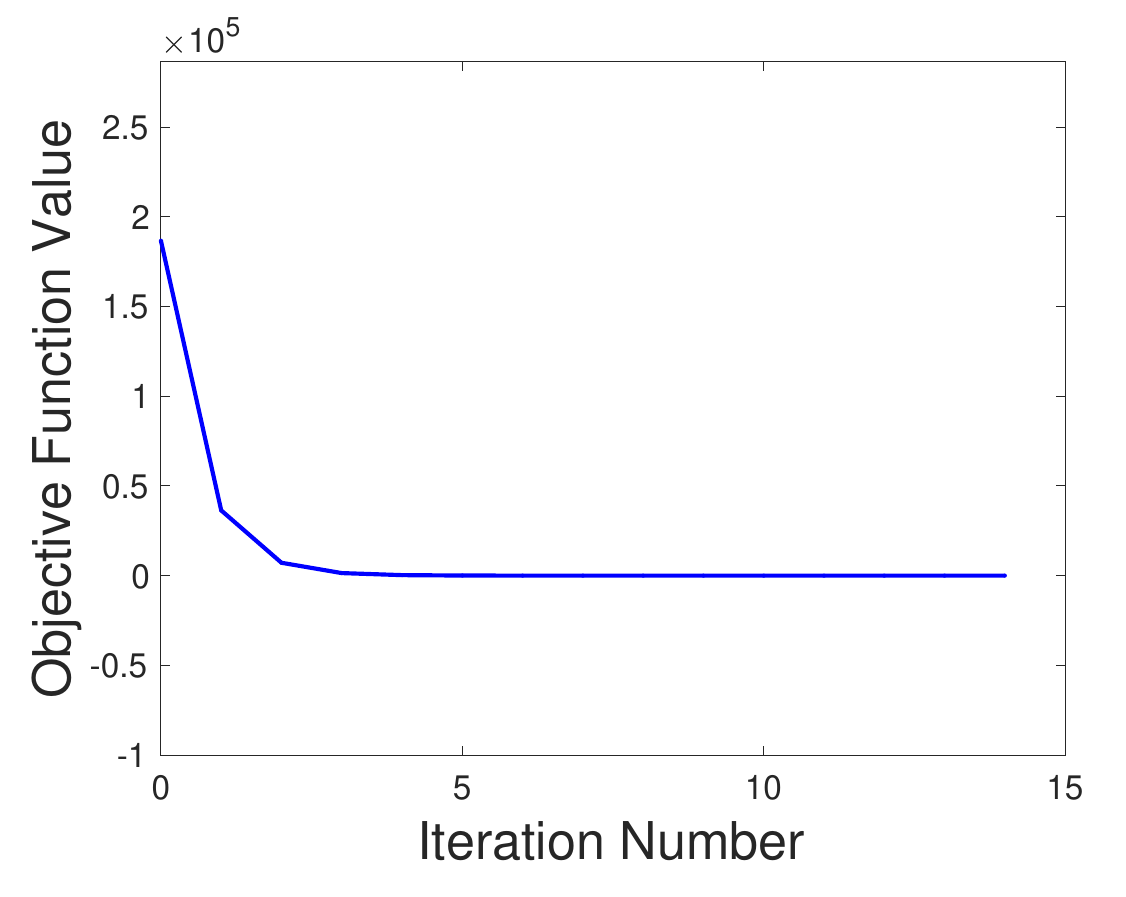}}
         \subfigure[(b) pie]{
      \includegraphics[width=1.34 in]{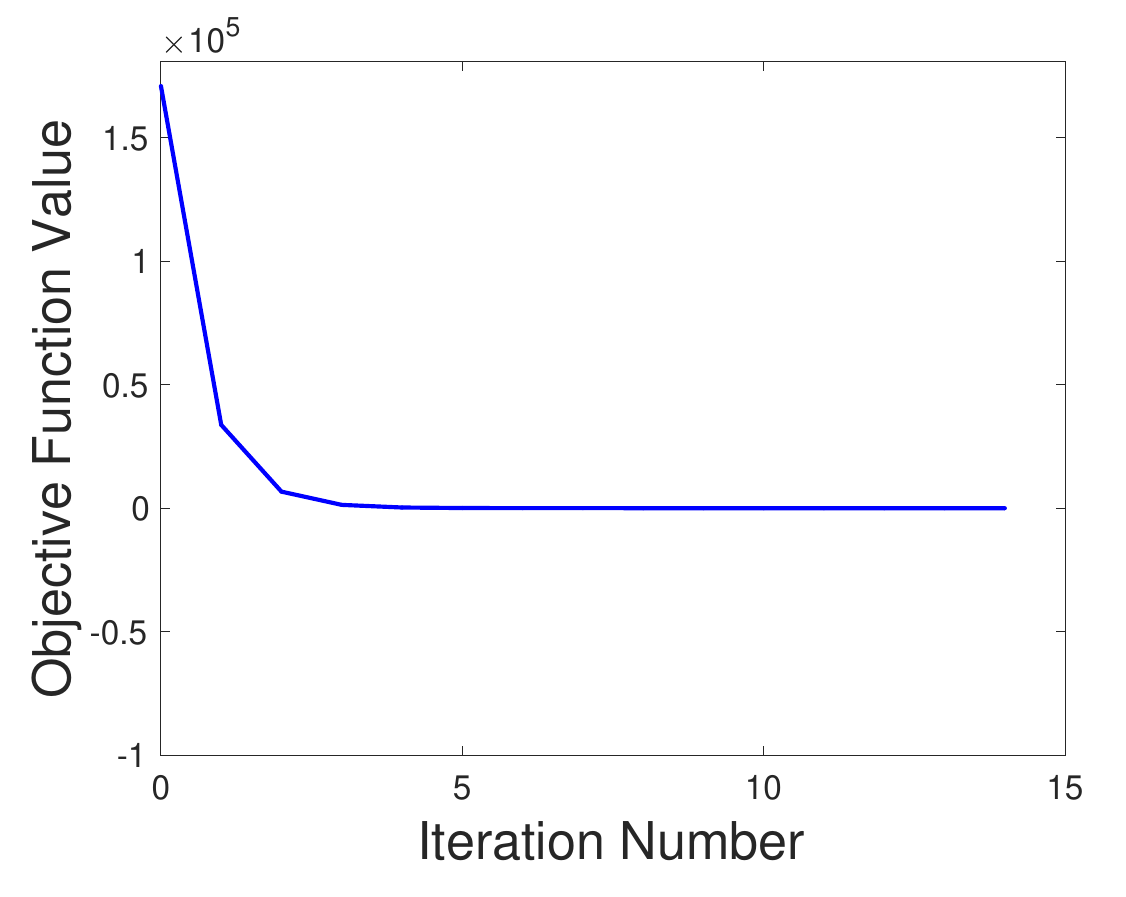}}
    \subfigure[(c) COIL20]{
      \includegraphics[width=1.34 in]{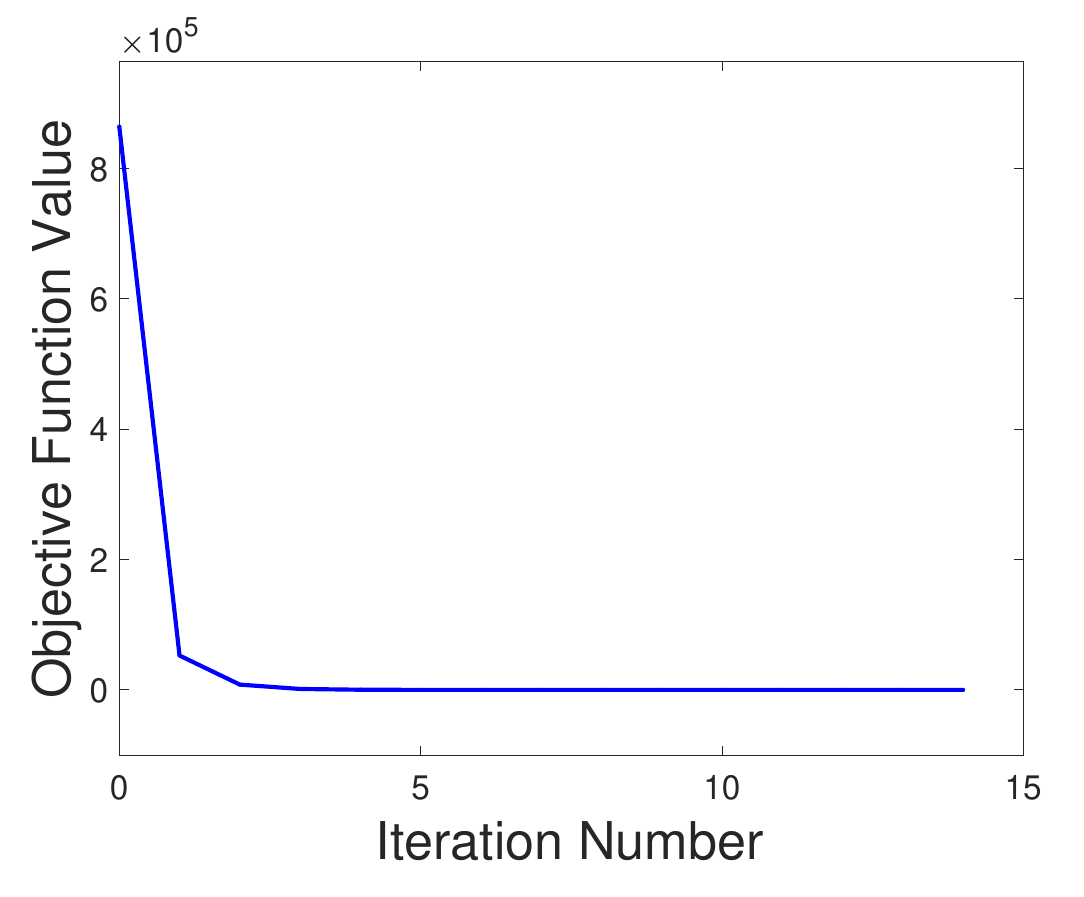}}
      \subfigure[(d) warpAR]{
      \includegraphics[width=1.34 in]{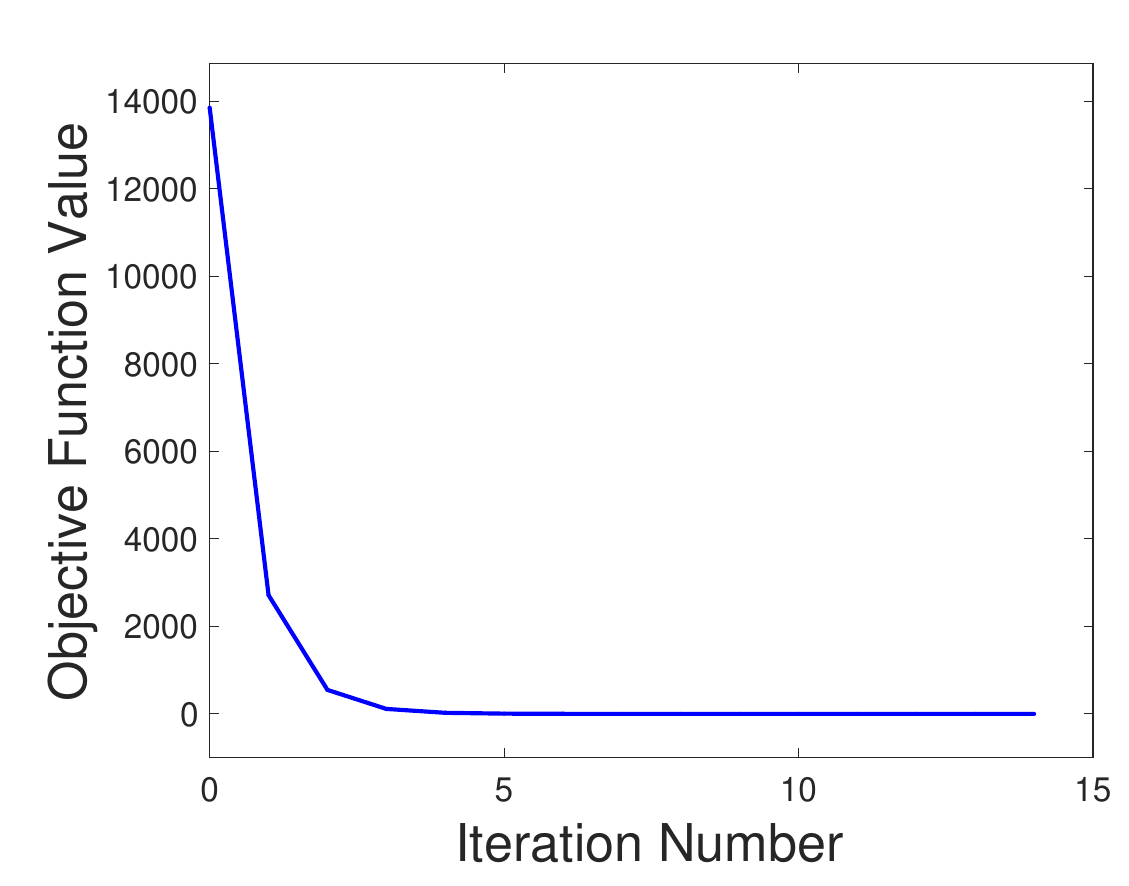}}
    \vskip-0.2cm
    \caption{Convergence curves of BLUFS on selected datasets.}
    \label{figure:6}
  \end{figure}

  \begin{table}[t]
    \centering
    \renewcommand\arraystretch{1.2}
    \caption{Running time in seconds on different datasets. The best results are marked in bold.}
    
    \begin{tabular}{|c|c|c|c|c|c|c|c|c|c|c|}
    \hline
    Datasets  & SPCAFS & FSPCA &  BLSFE & BLSFS & BLUFS \\
    \hline \hline
    COIL20 &    8.39 &   6.14 & 10.60  & \textbf{2.78}   & 6.35  \\
    \hline
    Jaffe &   2.96 &  16.53  & 1.77   & \textbf{0.61}   & 0.92  \\
    \hline
    Isolet &  10.70  &  1.18 & 10.19  & \textbf{2.53}   & 7.29  \\
    \hline
    warpAR &  17.42 & 436.03  & 13.17  & \textbf{2.15}   & 3.47  \\
    \hline
    lung &   40.56&  934.15 & 38.19  & 4.03   & \textbf{3.92}  \\
    \hline
    pie &    6.88 &  \textbf{3.61}  & 12.58  & 7.26   & 4.38  \\
    \hline
    gisette &  290.04& 3496.44 & 318.53 & 122.73 & \textbf{74.03} \\
    \hline
    MSTAR &   8.73  & 3.51   & 13.71  & \textbf{6.97}   & 11.08\\
    \hline
    \end{tabular}
    \label{table8}
    \end{table}

Table \ref{table2} records the average classification results.
Our method leads to superior classification performance on eight datasets. It is noteworthy that in the pie dataset, BLUFS outperforms BLSFS by 7.21\% in classification accuracy, and in the gisette dataset, BLUFS surpasses BLSFE by 7.53\%. Furthermore, on average across the eight datasets, BLUFS has a classification accuracy that is 5.15\% higher than that of BLSFS.
It can be seen that, for the classification task, our proposed method can still achieve the best performance.
Additionally, we can also observe that in classification experiments, the top two ranking methods are both bi-level methods, indicating that bi-level methods can indeed fully utilize information from different levels to achieve better performance in classification tasks.



\subsection{Discussion}

This subsection discusses statistical tests, feature correlation, parameter sensitivity, model stability, convergence analysis, and running time.

\subsubsection{Statistical Tests}
To demonstrate the reliability of the experimental results, we employed the post-hoc Nemenyi test, using the critical difference (CD) as a criterion to evaluate the pairwise differences in ACC between the compared methods. As shown in Fig. \ref{figure-5}, although there is no statistically significant difference between our proposed BLUFS and other methods such as BLSFS, BLSFE, SPCAFS, and LapScore, these bi-level methods generally outperformed the other methods. Moreover, it can also be seen that there is no significant difference among all the other methods except BLUFS, which indirectly highlights the advantage of our proposed BLUFS.

\subsubsection{Feature Correlation}
Fig. \ref{correlation} shows the feature correlation results obtained using BLSFS and our proposed BLUFS on the Isolet and COIL20 datasets.
Here,  10 features are selected,  denoted \(F_1,  F_2,  \cdots,  F_{10}\),  and the correlations among these features are examined.
It can be concluded that BLUFS has a lower feature correlation compared to BLSFS,  indicating that it is very effective in eliminating redundant features.

\subsubsection{Parameter Sensitivity}
This subsection studies the sensitivity of the parameters \(\alpha\), \(\beta\), \(\mu\), and \(\lambda\).
First, set \(\mu\) and \(\lambda\) to 1, and investigate the sensitivity to \(\alpha\) and \(\beta\).
Then fix \(\alpha\) and \(\beta\), investigate the sensitivity to \(\mu\) and \(\lambda\).
Fig. \ref{figure:7} shows the ACC clustering results under different parameters in the Isolet and COIL20 datasets.
It can be observed that the clustering performance is not highly sensitive to the selection of these parameters,  indicating that our proposed BLUFS is relatively robust to parameter variations.

\subsubsection{Model Stability}
Fig. \ref{box} shows the box plots of the 50 clustering results.
It is obvious that in terms of ACC, the average values of BLUFS are generally higher than those of other methods.
In general, our proposed BLUFS exhibits excellent and stable performance.

\subsubsection{Convergence Analysis}
Fig. \ref{figure:6} illustrates the convergence behavior of our proposed BLUFS across four real-world datasets of varying sizes.
Obviously, Algorithm \ref{algorithm 1} usually only needs 15 iterations to achieve fast convergence, which is consistent with the previous theorem.

\subsubsection{Running Time}
In terms of running time comparison, our proposed BLUFS is compared with four recently proposed methods, including FSPCA, SPCAFS, BLSFE, and BLSFS.
Table \ref{table8} gives the average time of ten repeated runs in each dataset.
It can be seen that although the running time of our proposed BLUFS is not the fastest on all datasets, it is acceptable considering its excellent performance on clustering and classification tasks.

\section{Conclusion}\label{Conclusion}
This paper presents a novel bi-level unsupervised feature selection method. Previous bi-level approaches overly simplify the representation of clustering structures by using discrete pseudo-labels. Moreover, the use of the $\ell_{2,1}$-norm often leads to insufficient sparsity, which performs poorly on high-dimensional datasets. To address these issues, we relax the constraint on pseudo-labels and adopt continuous pseudo-labels to represent the clustering structure. Furthermore, we integrate the bi-level structure with the $\ell_{2,0}$-norm to ensure strict sparsity. We also propose the PAM algorithm to guarantee convergence to the global optimum from any initial point. Extensive experiments on both synthetic and real-world datasets demonstrate the superiority of our method in clustering and classification tasks.

Although our bi-level framework significantly improves feature selection performance, it introduces additional computational costs due to the bi-level optimization. In future work, we plan to explore acceleration techniques \cite{li2021efficient}  or simplified algorithms \cite{zhang2023physics} to reduce computational costs. Additionally, while we achieved optimal results on most datasets, the clustering accuracy remains relatively low on some datasets, even for traditional methods. Therefore, future research may explore the application of deep learning techniques \cite{zhao2025deep} in the feature selection domain to further enhance performance.

\bibliographystyle{IEEEtran}
\bibliography{mybibfile}

\end{document}